\def\eqref#1{Eq.~(\ref{#1})}
\def\1{\bm{1}}
\DeclareMathAlphabet{\mathsfit}{\encodingdefault}{\sfdefault}{m}{sl}
\SetMathAlphabet{\mathsfit}{bold}{\encodingdefault}{\sfdefault}{bx}{n}
\newcommand{\E}{\mathbb{E}}
\definecolor{xblue}{HTML}{4169E1}
\definecolor{xgreen}{HTML}{036C3A}
\definecolor{xpurple}{HTML}{9838B1}
\definecolor{xslategray}{HTML}{70818F}
\definecolor{xorange}{HTML}{FF8C00}
\definecolor{xcyan}{HTML}{06AEEF}
\definecolor{xred}{HTML}{FF0000}
\definecolor{xgray}{HTML}{808080}
\definecolor{xxgreen}{HTML}{009F86}
\definecolor{xsienna}{HTML}{8B4512}
\definecolor{xxgreen}{HTML}{009F86}
\definecolor{xxpurple}{HTML}{623E99}
\newcommand{\xblue}[1]{\textcolor{xblue}{#1}}
\newcommand{\xxpurple}[1]{\textcolor{xxpurple}{#1}}
\newcommand{\xxgreen}[1]{\textcolor{xxgreen}{#1}}
\newcommand{\coloredhl}[2]{{\textcolor{#1}{\sethlcolor{#1!10}\hl{#2}}}\xspace}
\newcommand{\coloredul}[2]{\textcolor{#1}{\underline{#2}}\xspace}
\newcommand{\coloreddashul}[2]{\textcolor{#1}{\dashuline{#2}}\xspace}
\algrenewcommand\algorithmicdo{}
\algrenewcommand\algorithmicthen{}
\def\xthanks#1{\protected@xdef\@thanks{\@thanks\protect\footnotetext{#1}}}
\newcommand{\xsup}[1]{\rlap{\textsuperscript{\normalfont#1}}}
\newtheorem{proposition}{Proposition}
\newtheorem*{corollary*}{Corollary} % disable the numbering
\newcommand{\mask}{\mathbf{m}}
\newcommand{\insertat}[3]{#1\,\triangleleft_{#2}#3}
\newcommand{\replaceat}[3]{#1[#1^{#2} \leftarrow #3]} 
\newcommand{\length}[1]{\mathrm{len}(#1)}
\newtheorem{appprop}{Proposition}[section]
\newtheorem{appdefinition}{Definition}[section]
\newtheorem{appthm}{Theorem}[section]
\newtheorem{applem}{Lemma}[section]
\title{Any-Order Flexible Length Masked Diffusion}
\author{Jaeyeon Kim\xthanks{$^\star$equal contribution, randomized ordering; Lee Cheuk-Kit led the theoretical development of the method and engineering effort; Jaeyeon Kim led experiment development and paper presentation. $^\dagger$ lead senior authors.}\xsup{$1$,$\star$} 
\hspace{2.1em}
Lee Cheuk-Kit\xsup{$1,2$,$\star$}
\hspace{2.1em}
Carles Domingo-Enrich\xsup{$3$}
\hspace{2.1em}
Yilun Du\xsup{$1$,$2$}
\AND
Sham Kakade\xsup{$1$,$2$}
\hspace{2.1em}
Timothy Ngotiaoco\xsup{$1$,$2$}
\hspace{2.1em}
Sitan Chen\xsup{$1$,$\dagger$}
\hspace{2.1em}
Michael S. Albergo\xsup{$1,2,4$,$\dagger$}
\\[1.0ex]
~$^1$Harvard University ~$^2$Kempner Institute ~$^3$Microsoft Research New England \\~$^4$Institute for Artificial Intelligence and Fundamental Interactions, MIT
}
\begin{document}
\maketitle
\begin{abstract}
Masked diffusion models (MDMs) have recently emerged as a promising alternative to autoregressive models over discrete domains. MDMs generate sequences in an any-order, parallel fashion, enabling fast inference and strong performance on non-causal tasks. However, a crucial limitation is that they do not support token insertions and are thus limited to \emph{fixed-length} generations. To this end, we introduce \textbf{Flex}ible \textbf{M}asked \textbf{D}iffusion \textbf{M}odels (FlexMDMs), a discrete diffusion paradigm that simultaneously can model sequences of flexible length while provably retaining MDMs' flexibility of any-order inference. Grounded in an extension of the stochastic interpolant framework, FlexMDMs generate sequences by inserting mask tokens and unmasking them. Empirically, we show that FlexMDMs match MDMs in perplexity while modeling length statistics with much higher fidelity. On a synthetic maze planning task, they achieve $\approx$ 60\% higher success rate than MDM baselines. Finally, we show pretrained MDMs can easily be \emph{retrofitted} into FlexMDMs: on 16 H100s, it takes only three days to fine-tune LLaDA-8B into a FlexMDM, achieving superior performance on math (GSM8K, 58\%$\to$67\%) and code infilling performance (52\%$\to$65\%).
\end{abstract}
\section{Introduction}
While diffusion models \citep{ho2020denoising,song2020score,sohl2015deep} are now the leading paradigm for generative modeling in continuous domains, recent work has begun to expand their scope to discrete spaces. The prevailing approach, Masked Diffusion Models (MDMs) \citep{shi2024simplified,sahoo2024simple,gat2024discrete}, generates sentences in a non-left-to-right, any-order fashion. Compared to autoregressive models, this any-order generation ability yields substantially faster inference and strong downstream performance on non-casual tasks such as planning \citep{ye2024beyond}, code \cite{nie2025large,dream2025}, and reasoning \citep{nie2024scaling}.

Despite these successes, current MDMs cannot (1) model distributions supported on sequences of \emph{variable length} and (2) insert new tokens during generation (Figure~\ref{fig:main}, left). Both capabilities are natural desiderata for generative models over discrete domains. We therefore ask: \emph{Can we model variable-length data while preserving MDMs' any-order generation power?}

We answer in the affirmative by proposing the \textbf{Flexible  Masked Diffusion Model} (FlexMDM). FlexMDMs start from an empty string and gradually \coloredul{xblue}{insert mask tokens} and then \coloredul{xblue}{unmask} them (Figure~\ref{fig:main}, right). Beyond learning the usual \emph{unmasking} posterior--the distribution of a clean token at masked positions--we introduce an \emph{insertion} expectation: the expected number of tokens to insert conditioned on the current sequence. Crucially, we show that FlexMDM is \xblue{theoretically grounded} (i.e., under perfect training, it samples from the true data distribution) and \xblue{retains the any-order sampling property} of MDMs, thereby directly addressing the question above.
\begin{figure}[t]
\centering
\includegraphics[width=\textwidth,trim={0 0 0 3.0cm}]{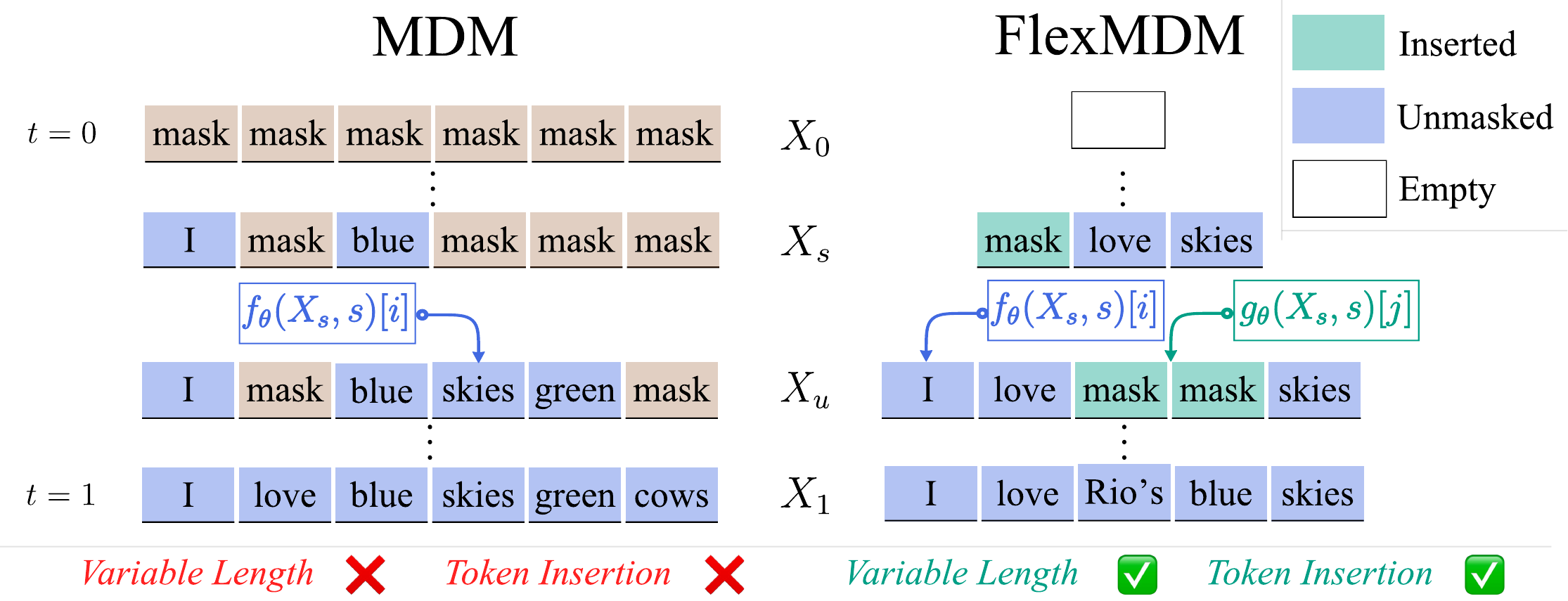}
\caption[Flexible Masked Diffusion Model (FlexMDM)]{%
\textbf{Flexible Masked Diffusion Model} (FlexMDM) \underline{addresses} MDMs’ inability to handle variable-length sequences and token insertion while \underline{preserving} any-order generation power. 
At each step, FlexMDM performs \textbf{\xxgreen{insertion}} and \textbf{\xblue{unmasking}} by predicting \xxgreen{\textbf{the expected number of mask tokens to insert} ($g_\theta$)} and the \xblue{\textbf{posterior over clean tokens} ($f_\theta$)}, respectively.}
\label{fig:main}
\vskip -1.0\baselineskip
\end{figure}
Empirically, we demonstrate that FlexMDM offers significant new upgrades to the MDM paradigm,
\begin{itemize}[leftmargin=*,itemsep=0pt,topsep=-0.25em]
    \item A FlexMDM pretrained on OpenWebText is able to \xxgreen{model the length distribution with substantially higher fidelity} while matching the perplexity of an MDM counterpart. 
    \item On \xxgreen{planning tasks}, FlexMDM achieves markedly better results, beating the success rate of MDMs by nearly 60\% on a natural synthetic baseline. 
    \item \xxgreen{MDMs can be retrofitted into FlexMDMs at 8B+ scale}:  We fine-tune LLaDA-8B \citep{nie2025large}, an open-source MDM, into a FlexMDM using only 16 H100s for three days. The model transfers cleanly from its MDM initialization and, with its newly acquired variable-length capability, attains notably better performance on GSM8K (58\%$\to$67\%) and Code infilling (52\%$\to$65-\%).
\end{itemize}

Theoretically, our construction relies on the machinery of continuous-time Markov chains (CTMCs) and in particular introduces the new notion of a \emph{joint interpolant}, a novel extension of \emph{stochastic interpolants} \citep{albergo2022building, albergo2023stochastic, lipman2022flow}. Recent work~\citep{zheng2024masked,ou2024your} established an equivalence between MDMs and any-order language models--obviating the need for CTMCs. In contrast, we prove that FlexMDMs also possess the flexibility of any-order generation, yet the continuous-time perspective is absolutely essential for them to accurately model the length distribution. Accordingly, we re-derive the connections between MDMs and stochastic interpolants and use them to ground the design of FlexMDMs.

\textbf{Roadmap.} We begin in Section~\ref{sec:pre} with a broadly accessible review of CTMCs and the connection between MDMs and discrete flow matching. Building on this, Section~\ref{sec:FlexMDM_informal} derives the FlexMDM training objective, and Section~\ref{sec:FlexMDM_inference} introduces our inference procedures. Section~\ref{sec:experiment} presents our experimental results.

\textbf{Concurrent work.} Concurrent works \citep{Dreamon2025,havasi2025edit} attempt to tackle the same problem. \cite{Dreamon2025} introduces an auxiliary expand token in training and heuristically replaces each expand token with two mask tokens at inference. \cite{havasi2025edit}, also based on the discrete flow matching framework, shares a similar theoretical grounding as our result. The main differences lie in our particular choice of interpolant that leads to the development of an any-order sampling algorithm. For clarity, we provide a detailed comparison in Appendix~\ref{appendix:related_work}.

\section{Preliminaries: Continuous-Time Markov Chains and Masked Diffusions}
\label{sec:pre}
In what follows, we provide an overview of continuous-time Markov chains (CTMCs), their role in defining discrete diffusion models, and link them to the MDM framework. As we mentioned in the introduction, this theme is essential to defining FlexMDMs in Section~\ref{sec:FlexMDM_informal}.

\textbf{Transport with continuous-time Markov Chains.} Given a target distribution $p_1$ over sequences with a finite vocabulary set (e.g., text), our aim is to learn to transport samples from a reference distribution $p_0$ through a continuum of distributions $\{p_t\}_{t \in [0,1]}$ such that $p_{t=1} = p_1$. This type of transport can be realized by a continuous-time Markov chain, which is a stochastic process $\{X_t\}_{t\in[0,1]}$ with $X_0 \sim p_0$ governed by a time-dependent transition rate matrix  $\{R_t(\cdot,\cdot)\}_{t\in[0,1]}$ satisfying
\begin{equation} \label{eq:mass-conservation}
    R_t(x, x)= -\sum_{y \neq x} R_t(x, y), \quad R_t(x,y) \ge 0,\,\, x \ne y.
\end{equation}
Intuitively, the rate matrix determines the infinitesimal likelihood that $X_t$ transitions to any other state $y$ via 
\begin{equation} \label{eq:transition}
    \mathbb P(X_{t+h} = y | X_t = x) = \mathbf{1}_{\{x = y\}} + h R_t(x,y) + o(h)\,,
\end{equation}
where we denote the conditional probability measure $\mathbb P(\cdot |X_{t} = x)$ of a new state given the present one. Here $o(h)$ is a remainder term that vanishes faster than $h$ as $h\rightarrow 0$. 
In generative modeling for these discrete distributions, our aims are to \textbf{(a)} specify a path of marginal distributions $\{p_t\}_{t\in[0,1]}$ connecting $p_0$ to $p_1$ and \textbf{(b)} learn the associated $R_t$ such that these marginals collectively satisfy the Kolmogorov forward equation:
\begin{equation} \label{eq:kfe}
\partial_t p_t(x) = \sum_y p_t(y) R_t(y, x)  \qquad p_{t=0} = p_0.
\end{equation}
This ensures that at time $t = 1$, the evolution specified by~\eqref{eq:transition} results in a sample from the target distribution $p_1$. The rate matrices defined in this paper are sparse; therefore, we assume that the unspecified entries are $0$ and the diagonal entries are defined through Equation~\eqref{eq:mass-conservation}.
\subsection{Masked Diffusion Models}
 We briefly review MDMs~\citep{sahoo2024simple,shi2024simplified} and discrete flow matching with the masked construction ~\citep{gat2024discrete}, through the lens of stochastic interpolants. The target distribution $p_1$ assigns probability to length $L$ sequences. The base distribution $p_0$ employed by these models is the point mass distribution at the fully masked length-$L$ sequence $(\mask, \dots, \mask)$, where $\mask$ is an auxiliary mask token. 

To define the intermediate $\{p_t\}_{t \in [0,1]}$ that bridges the base and the target, we make use of a \coloredhl{xblue}{stochastic interpolant} $\{x_t\}_{t\in[0,1]}$, a collection of random variables whose marginal distribution defines the continuum $\{p_t\}_{t\in[0,1]}$, i.e., $x_t \sim p_t$. Although the previous notion of stochastic interpolant~\citep{albergo2023stochastic} is defined in a continuous space, it naturally extends to a discrete space, and we defer a formal exposition to Appendix~\ref{sec:appendix_theory_interpolant}.

\begin{minipage}{0.52\textwidth}
\textbf{Design of distribution path.} The stochastic interpolant relies on a smooth and monotone unmasking schedule $\alpha_t \colon [0,1]\to [0,1]$ with boundary condition $(\alpha_0,\alpha_1)=(0,1)$ and time derivative denoted by $\dot{\alpha_t}$. 
To draw $x_t$, we first sample a clean sequence $x_1 \sim p_1$; then, independently for every coordinate $i$, we draw an unmasking time $T^i$ from density $\dot{\alpha}_tdt$ and set 
\begin{equation*}
    x_t^i\, =\, \begin{cases}
        \mask & t < T^i\\
        x_1^i & t \ge T^i
    \end{cases}.
\end{equation*}
This process is illustrated in Figure~\ref{fig:interpolant_mdm}. Hence, each clean token stays masked with probability $1 - \alpha_t$ in a coordinate-independent fashion, defining $p_t(\cdot \mid x_1)$. We then write $p_t$ by marginalizing over $x_1\sim p_1$.
\end{minipage}
\hfill
\begin{minipage}{0.46\textwidth}
\begin{figure}[H]
\centering
\includegraphics[width=\textwidth,trim={0 0 0 2.0cm}]{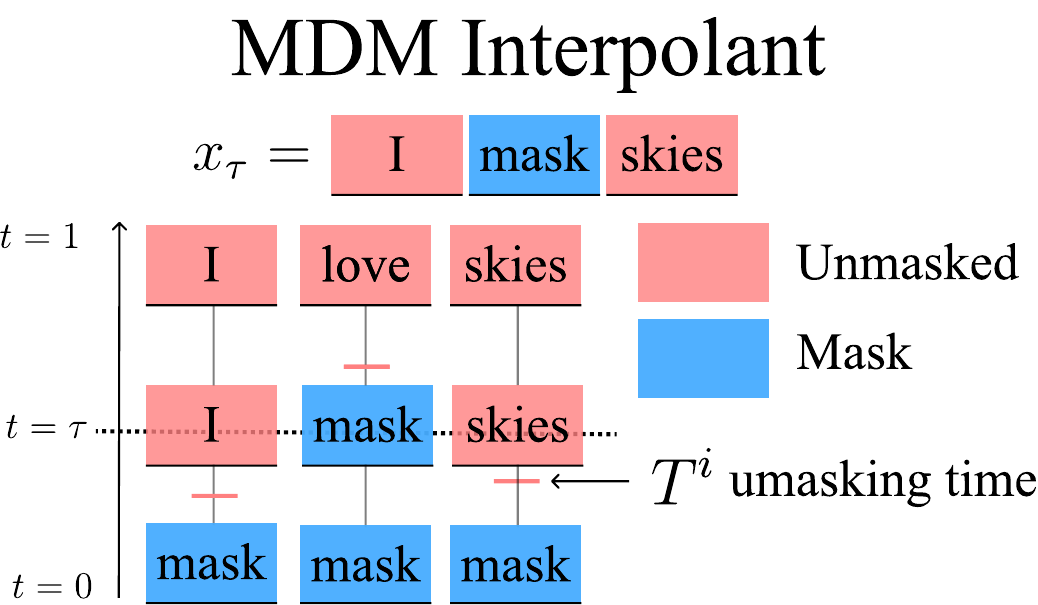}
\caption[\textbf{MDM interpolant.}]{To draw a sample $x_t$, one can equivalently sample the clean sequence $x_1\sim p_1$, draw unmasking times, and then accordingly \coloredul{xred}{unmask} or \coloredul{xblue}{mask} each coordinate's token.}\label{fig:interpolant_mdm}
\end{figure}
\end{minipage}

\textbf{MDM training.} We now derive the MDM rate matrix that induces a CTMC whose marginals coincide with $\{p_t\}_{t\in[0,1]}$ and how it is learned in practice. The central object is the \coloredhl{xblue}{unmasking posterior}: the posterior on the clean token $x_1^i$ for masked index $i$ given $x_t=x$ and time step $t$, i.e., $\mathbb{P}(x_1^i = v | x_t = x)$. We model this posterior with a neural network $f_\theta(x,t) \in (\Delta(\Sigma))^n$, where $\Delta(\Sigma)$ denotes a simplex of probability distributions over the vocabulary $\Sigma$. 

For every position where $x^i=\mask$, the network aims to predict $f_\theta(x,t)[i,v]\approx \mathbb{P}(x_1^i = v | x_t = x)$,  and is trained by minimizing the following variational loss:
{\small
\begin{equation}
\label{eq:loss:mdm}
\mathcal{L}_\theta = -\int_{0}^1 \mathbb{E}\left[\frac{\dot{\alpha}_t}{1-\alpha_t}\sum_{i \colon x_t^i = \mask} \xblue{\log f_{\theta}(x_t, t)[i,x_1^i]} \right]dt.
\end{equation}}

Here, $\mathbb{E}$ denotes the expectation over $x_1 \sim p_1$ and $x_t\sim p_t(\cdot|x_1)$. The minimizer of this loss is the ground-truth unmasking posterior, which fully determines the MDM's rate matrix below. Precisely, for $t\in [0,1]$, the rate matrix at time $t$ is given by: for a partially masked sequence $x \in (\Sigma \cup \{\mask\})^L$,
\begin{equation} \label{eq:rate_mdm}
   R_t(x, \replaceat{x}{i}{v}) = \frac{\dot{\alpha_t}}{1-\alpha_t}\underbrace{\mathbb{P}(x_1^i = v | x_t = x)}_{\xblue{\text{unmasking posterior}}} , \quad  v \in \Sigma, x^i=\mask,
\end{equation}
where $\replaceat{x}{i}{v}$ denotes the sequence obtained from $x$ by replacing its $i$-th token with $v$. Therefore, once $f_\theta$ has learned the unmasking posterior, one can simulate the CTMC using the rate matrix in ~\eqref{eq:rate_mdm}. The variational loss \eqref{eq:loss:mdm} quantifies the sampling guarantee of this \emph{estimated} CTMC. Let $p_1^\theta$ be the terminal distribution of the estimated CTMC. Then, the loss function bounds the KL-divergence:
\begin{equation*}
    \mathcal{D}_{\mathrm{KL}}(p_1 || p_1^\theta) \leq \mathcal{L}_\theta - \mathcal{L}_\star,
\end{equation*}
where $\mathcal{L}_\star$ is the global minimum of $\mathcal{L}$. When the loss is in its infimum, the KL divergence vanishes, resulting in the ground truth distribution.

\textbf{Connections to other MDM frameworks.} Connecting to prior works on MDM ~\citep{shi2024simplified, sahoo2024simple, campbell2022continuous}, defining an interpolant is similar to defining a forward process for the case of diffusion models or a probability path in the case of flow matching. The modeled quantity is identical to the unmasking posterior across all frameworks. For inference, a common scheme is to proceed by: at each intermediate time step, (a) selecting a subset of indices to unmask and (b) sampling clean tokens from the learned posterior. In the infinitesimal limit, this procedure is equivalent to simulating the CTMC of \eqref{eq:rate_mdm}. Meanwhile, subsequent work \cite{kim2025train,nie2025large} shows that MDMs also allow theoretically grounded \emph{any-order inference}: tokens can be unmasked in an arbitrary order without necessarily following the CTMC at \eqref{eq:rate_mdm}. We will revisit this aspect in Section~\ref{sec:FlexMDM_inference} and show that our FlexMDM preserves this advantage.
\section{Variable Length Masked Diffusions: Training}
\label{sec:FlexMDM_informal}

\begin{figure}[t]
\centering
\includegraphics[width=\textwidth,trim={0 0 0 3.5cm}]{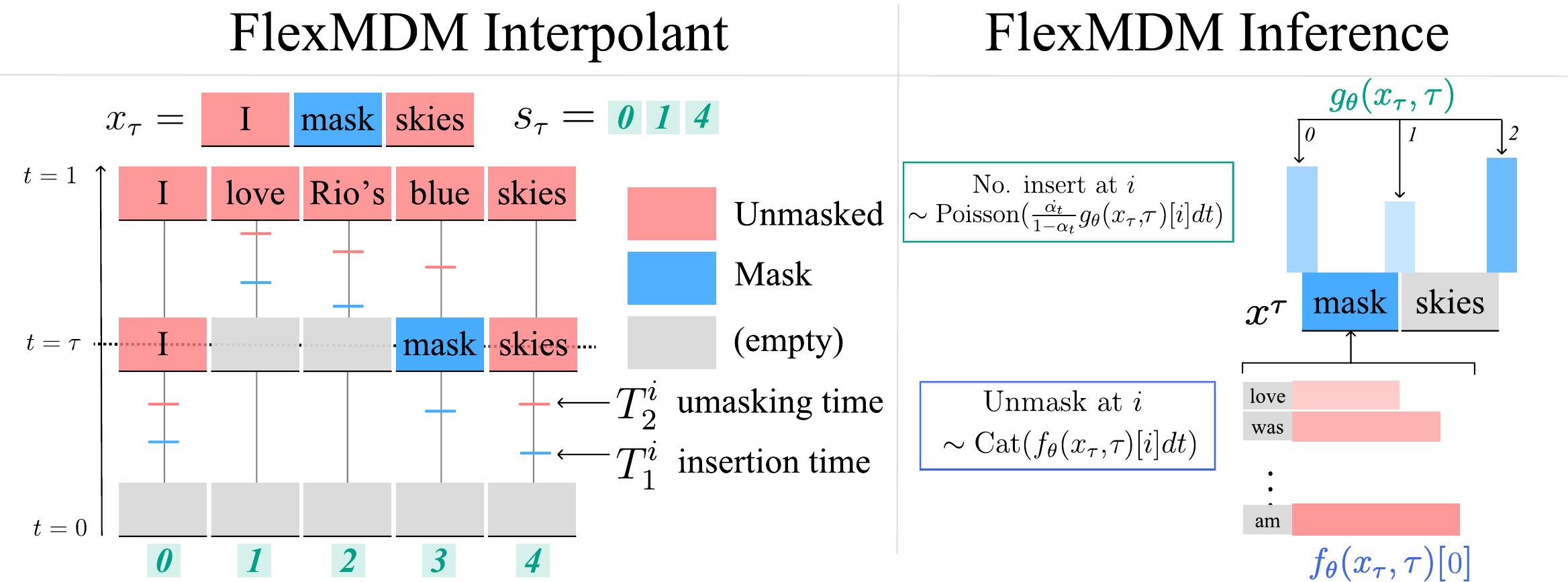}
\caption{\textbf{Left (FlexMDM interpolant).} To draw a sample $x_t$, one can equivalently draw a sample $x_1 \sim p_1$, and for each token \coloredul{xred}{unmask}, \coloredul{xblue}{mask}, or \coloredul{xslategray}{remove} it according to the unmasking and insertion times $(T_1^i, T_2^i)$. An auxiliary interpolant $s_t$ gives closed-form expressions for the FlexMDM rate matrices. \textbf{Right (FlexMDM Inference).} Learned \coloredul{xblue}{unmasking posterior} and \coloredul{xxgreen}{insertion expectation} are later used at inference.}
\label{fig:flexmdm_overview}
\vskip -1.0\baselineskip
\end{figure}

In this section, we introduce \coloredhl{xxpurple}{\textbf{Flexible Length Masked Diffusion Model}} (FlexMDM): a discrete diffusion that models a distribution $p_1$ assigning probabilities to sequences of different lengths. Following the MDM's recipe, we aim to introduce a stochastic interpolant $x_t$ whose marginal distribution defines the path $\{p_t\}_{t\in[0,1]}$ and learn the corresponding CTMC. Everything hinges on selecting an interpolant that is \textbf{(a)} easy to sample at $t=0$ and \textbf{(b)} equipped with a closed-form rate matrix amenable to neural network training.

\textbf{Challenge.} Reusing the MDM interpolant is \emph{inadequate}: at $t=0$, the base distribution $p_0$ would consist of fully-masked sentences of \emph{variable lengths}, which is impossible to sample since we do not know the length statistics of $p_1$ in advance. On the other hand, one can consider an interpolant constructed by masking and removing tokens from a clean sequence. However, this \emph{complicates} the rate matrix characterization--token indices shift as insertions occur. To bridge this gap, we introduce the \coloredhl{xxpurple}{joint interpolant}, an extension of the stochastic interpolant that augments the process with an auxiliary variable explicitly tracking token positions.
This enlarged state space allows us to construct a broader class of rate matrices while preserving an easy-to-sample base distribution.

\textbf{Design of distribution path.} 
We now introduce our FlexMDM's joint interpolant that allows us to model the variable length $p_1$. This construction relies on \emph{two} smooth, monotone schedules \--- an insertion schedule $\alpha \colon [0,1]\to [0,1]$ and an unmasking schedule $\beta \colon [0,1] \to [0,1]$, with the boundary conditions $(\alpha_0,\alpha_1)=(\beta_0,\beta_1)=(0,1)$ and time derivatives denoted by $\dot{\alpha}_t,\dot{\beta}_t$.

To draw $x_t$, we first sample a clean sentence $x_1 \sim p_1$. Independently for each coordinate $i$, we draw an \coloredhl{xxgreen}{insertion time} $T_1^i$ and an \coloredhl{xblue}{unmasking time} $T_2^i$ with $T_1^i < T_2^i$ according to the density below. Accordingly, we either \coloredul{xred}{unmask}, \coloredul{xblue}{mask}, or \coloredul{xslategray}{remove} $x_1^i$ to obtain $x_t^i$:
\begin{align} \label{eqn::FlexMDM_interpolant}T_1^i \sim \dot{\alpha_t} \; dt, & \quad \quad T_2^i \sim \mathbf{1}_{\{t \ge T_1^i\}}\frac{\dot{\beta_t}}{1-\beta_{T_1}} \; dt, \quad  x_t^i = \begin{cases}
        \text{(empty)}, & 0 < t < T_1^i \\
        \mask, & T_1^i \leq t <  T_2^i \\
        x_1^i, &   T_2^i \leq t \leq 1
    \end{cases}
\end{align} 
Here, $\mathbf{1}$ denotes the indicator function. We obtain $x_t$ by concatenating the symbols $x_t^i$, and dropping $x_t^i=\text{(empty)}$. Consequently, the length of $x_t$ is equal to or less than that of $x_1$\footnote{Writing $x_t^i$ to mean the symbol derived from source position $i$ is this a mild abuse of notation since the superscript $i$ refers to a position in $x_1$ rather than a valid index of the (shorter) sequence $x_t$.}  (see Figure~\ref{fig:flexmdm_overview}, left). 
As we mentioned above, we augment $x_t$ with an index-tracking variable $s_t$, forming the joint interpolant $(x_t,s_t)$. Let $\mathrm{len}(x_t)$ denote the length of $x_t$; then
\begin{equation*}
    s_t \colon= \{i \in \{0,\dots,\mathrm{len}(x_1)-1\} \mid T_1^i \le t\},
\end{equation*}
i.e., the set of indices whose clean tokens have \emph{not} been deleted. Equivalently, the positions in $x_1$ referenced by $x_t$'s each index. By regarding $s_t$ as a list and ordering its elements in ascending order, we also have $x_t=(x_1^{s_t[0]},\dots,x_1^{s_t[\mathrm{len}(s_t)-1]})$. We revisit $s_t$ shortly to show how it enables an explicit rate matrix.   Since $(x_t,s_t)$ is governed by the sampled unmasking and insertion times, we write $(x_t,s_t)\sim p_t(\cdot \mid x_1)$. Marginalizing $p_t(\cdot \mid x_1)$ over $x_1 \sim p_1$ yields $p_t$. Since the boundary condition sets $\alpha_0=\beta_0=0$, all tokens are deleted at $t=0$; $p_0$ is the point mass on the empty string.

\textbf{FlexMDM training.} 
We now explain how we train our FlexMDM to learn the desired rate matrix. We first discuss what the CTMC looks like at a high level: 
recall from \eqref{eqn::FlexMDM_interpolant} that when $t$ \emph{increases}, tokens are progressively inserted and unmasked. Indeed, one can show that a CTMC that generates the interpolant can be characterized by two quantities that govern the rate of insertion and unmasking:
\begin{itemize}[leftmargin=*,itemsep=0pt,topsep=0pt]  
  \item \coloredhl{xblue}{Unmasking posterior} (modeled by \xblue{$f_\theta(x,t)[i] \in \Delta(\Sigma)$}): for each index $i$ that $x^i=\mask$, the posterior distribution over the underlying clean token.
  \item \coloredhl{xxgreen}{Insertion expectation} (modeled by \xxgreen{$g_\theta(x,t)[i] \in \mathbb R_{\geq 0}$}): for all indices $i$ in $x$, the expected number of tokens that remain to be inserted in between $x^{i-1}$ and $x^{i}$. 
\end{itemize}
$f_\theta$ resembles the familiar unmasking posterior from MDMs,  
whereas $g_\theta$ is new: it predicts how many tokens need to be inserted. Intuitively, modeling a \emph{variable-length} $p_1$ is harder than the fixed-length setup of MDM--introducing an \coloredhl{xxgreen}{insertion expectation} allows us to parameterize more complicated CTMC for FlexMDM; its rate matrix will appear soon in Proposition~\ref{prop:FlexMDM-rate}. To define the training loss, we set the boundary values of $s_t$ as $s_t[-1]:=-1$ and $s_t[\mathrm{len}(s_t)]:=\mathrm{len}(x_1)$, and let $\phi(x,y)=y-x\log y$ denote a scalar Bregman divergence.
{\small \begin{equation} \label{eq:FlexMDM_loss}
\mathcal{L}_\theta=-\int_{0}^1 \mathbb{E}\underbrace{\Bigg[\frac{\dot{\beta}_t}{1-\beta_t} \sum_{i \colon x_t^i = \mask}\xblue{\log f_\theta(x_t,t)[i,x_1^{s_t[i]}]}}_{\xblue{\text{\emph{unmasking loss}}}} + \underbrace{\frac{\dot{\alpha}_t}{1-\alpha_t}\sum_{i=0}^{\mathrm{len}(x_t)} \phi(s_t[i]-s_t[i-1]-1,\xxgreen{g_\theta(x_t,t)[i]})}_{\xxgreen{\text{\emph{insertion loss}}}}\Bigg]dt.
\end{equation}}
\noindent Here, the expectation is taken over $x_1 \sim p_1$,  $(x_t, s_t) \sim p_t(\cdot | x_1)$. Proposition~\ref{prop:FlexMDM-loss} exactly characterizes the unmasking posterior and insertion expectation and shows they uniquely minimize \eqref{eq:FlexMDM_loss}.

\begin{proposition}[FlexMDM training loss]   \label{prop:FlexMDM-loss}
The loss $\mathcal{L}_\theta$ in \eqref{eq:FlexMDM_loss} is uniquely minimized at
\begin{equation*}
    \xblue{f_\theta(x, t)[i, v]} = \underbrace{\mathbb{P} (x_1^{s_t[i]} = v | x_t = x)}_{\xblue{\text{unmasking posterior}}}, \quad \xxgreen{g_\theta(x, t)[i]} = \underbrace{\mathbb{E}[s_t[i] - s_t[i-1] - 1 | x_t = x]}_{\xxgreen{\text{insertion expectation}}}.
\end{equation*}
\end{proposition}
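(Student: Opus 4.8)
The plan is to exploit the fact that the objective in \eqref{eq:FlexMDM_loss} splits additively into an \emph{unmasking} part depending only on $f_\theta$ and an \emph{insertion} part depending only on $g_\theta$, with positive time weights $\frac{\dot\beta_t}{1-\beta_t}$ and $\frac{\dot\alpha_t}{1-\alpha_t}$ (nonnegative since the schedules are monotone with $\beta_t,\alpha_t<1$ on $[0,1)$) that scale but do not move the minimizers. Since the two networks' outputs can be varied independently, I would minimize the two pieces separately and then combine. In both cases the strategy is identical: push the time integral and outer expectation through the tower property to condition on the event $\{x_t = x\}$ at each fixed $t$, so that optimizing over the function $f_\theta$ (resp.\ $g_\theta$) reduces to an independent pointwise minimization of the integrand at every reachable configuration $(x,t)$ and index $i$. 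Treating each output $f_\theta(x,t)[i,\cdot]\in\Delta(\Sigma)$ and $g_\theta(x,t)[i]\in\mathbb{R}_{\ge 0}$ as a free variable is justified by assuming the networks are expressive enough to realize any such assignment.

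For the unmasking term, which mirrors the MDM analysis leading to \eqref{eq:rate_mdm}, the conditioned contribution of a masked slot $i$ is proportional to $-\mathbb{E}\big[\log f_\theta(x,t)[i, x_1^{s_t[i]}] \mid x_t = x\big]$. Writing $q(v) := \mathbb{P}(x_1^{s_t[i]} = v \mid x_t = x)$, this is exactly the cross-entropy $-\sum_v q(v)\log f_\theta(x,t)[i,v]$, which by Gibbs' inequality (equivalently, nonnegativity of $\KL(q\,\|\,f)$) is minimized over the simplex uniquely at $f_\theta(x,t)[i,\cdot] = q$, recovering the claimed unmasking posterior.

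For the insertion term, fix a position $i$ and set $N := s_t[i] - s_t[i-1] - 1$, the number of tokens still to be inserted into that gap. Conditioning on $x_t = x$ and using $\phi(x,y) = y - x\log y$, the integrand becomes $\mathbb{E}[\phi(N, g) \mid x_t = x] = g - \mathbb{E}[N \mid x_t = x]\,\log g$ with $g := g_\theta(x,t)[i]$. This is a one-dimensional strictly convex function of $g>0$ whose derivative $1 - \mathbb{E}[N\mid x_t=x]/g$ vanishes uniquely at $g = \mathbb{E}[N \mid x_t = x] = \mathbb{E}[s_t[i] - s_t[i-1] - 1 \mid x_t = x]$, the claimed insertion expectation (the boundary case $\mathbb{E}[N\mid x_t=x]=0$ gives the consistent minimizer $g=0$).

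Uniqueness of the global minimizer then follows because each pointwise subproblem has a unique solution — strict convexity of the cross-entropy on the simplex and of $g \mapsto g - \bar N \log g$ — so the minimizing functions are pinned down almost everywhere with respect to the law of $(x_t,t)$. The main technical care, and the genuinely delicate step, is the justification of this functional reduction: I must verify that the outer expectation assigns positive weight to every reachable $(x,t,i)$ so that minimizing the integrand pointwise indeed minimizes the integral, and that interchanging $\int_0^1 dt$, the expectation, and the per-coordinate sums is valid (integrability of the integrand, and the fact that $f_\theta,g_\theta$ need only be specified on configurations arising with positive probability under \eqref{eqn::FlexMDM_interpolant}). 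Throughout, the abuse of notation identifying the masked slot $i$ of $x_t$ with its source index $s_t[i]$ must be tracked carefully so the conditional quantities are referenced against the correct coordinate of $x_1$.
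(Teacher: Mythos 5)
Your proof is correct, and it takes a genuinely different route from the paper's. The paper (Proposition~\ref{prop:flexmdm-loss-appendix}) argues through path measures: it expands the KL divergence between the path measure $\mathbb{P}$ of the CTMC driven by the ground-truth rate $R_t$ and the path measure $\hat{\mathbb{P}}$ driven by the parameterized rate $\hat{R}_t$, using the Radon--Nikodym derivative between CTMC path measures, and observes that this expansion equals $\mathcal{L}_\theta$ up to an additive constant independent of $\theta$. Uniqueness of the minimizer then follows because the path-space KL vanishes iff $\hat{R}_t=R_t$ on reachable states, which pins $f_\theta$ and $g_\theta$ to the unmasking posterior and insertion expectation; moreover, the same identity combined with the data-processing inequality immediately yields the terminal guarantee $\mathcal{D}_{\mathrm{KL}}(p_1\,\|\,p_1^\theta)\le \mathcal{L}_\theta-\mathcal{L}_\star$ quoted after Proposition~\ref{prop:FlexMDM-rate}. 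You instead exploit the additive separation of the loss in $(f_\theta,g_\theta)$, condition on $\{x_t=x\}$, and solve each pointwise problem directly: Gibbs' inequality for the cross-entropy (unmasking) term, and the strictly convex scalar problem $g\mapsto g-\bar N\log g$ for the Bregman (insertion) term. Your route is more elementary and self-contained---it needs no Girsanov-type formula for CTMC path measures---and it proves exactly what the proposition asserts; what it does not deliver is the variational KL bound, which in the paper comes for free from the path-measure identification. The measure-theoretic caveats you flag (pointwise minimization only pins the networks down on configurations of positive probability) apply equally to the paper's argument, since path-space KL equal to zero only constrains the rates on the support of $\mathbb{P}$. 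One further point in your favor: you implicitly read the insertion term with a positive sign, matching the appendix statement; under a literal reading of the main-text display \eqref{eq:FlexMDM_loss}, the leading minus sign distributes over the $\phi$ term as well, which would make the loss unbounded below in $g_\theta$ (no minimizer would exist), so your reading is clearly the intended one.
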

These quantities match the explanation above: the posterior over the clean token together with the expected number of insertions. They precisely determine the FlexMDM rate matrix stated next.
\begin{proposition}[FlexMDM Rate Matrix] \label{prop:FlexMDM-rate}
Let the rate matrix $R_t$ be defined as:
\begin{equation}
\begin{aligned} \label{eq:rate_FlexMDM}
\text{\coloredhl{xblue}{Unmask}}& :R_t\left(x, \replaceat{x}{i}{v}\right)
    = \tfrac{\dot{\beta}_t}{1 - \beta_t} \cdot \mathbb{P}(x_1^{s_t[i]}=v | x_t=x),\quad v \in \Sigma, x^i =\mask \\
    \text{\coloredhl{xxgreen}{Insert}}&: R_t\left(x, \insertat{x}{i}{\mask}\right)
    = \tfrac{\dot{\alpha}_t}{1 - \alpha_t} \cdot \mathbb{E} \left[s_t[i]-s_t[i-1]-1 | x_t = x\right],  
\end{aligned}
\end{equation}
where $\insertat{x}{i}{\mask}$ is the sequence obtained from $x$ by inserting a mask token in between $(x^{i-1},x^i)$. Then $R_t$ solves the KFE (equation~\eqref{eq:kfe}) with $p_t$ as the probability mass function of the FlexMDM interpolant $x_t$.
\end{proposition}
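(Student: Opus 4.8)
The plan is to verify the Kolmogorov forward equation~\eqref{eq:kfe} by lifting the dynamics to the joint state space of the pairs $(x_t,s_t)$, where the generator is transparent, and then projecting back down to the $x$-marginal. As a first stage I would write down the \emph{conditional} generator of the joint interpolant given a fixed clean sequence $x_1$. Since the times $(T_1^i,T_2^i)$ in~\eqref{eqn::FlexMDM_interpolant} are drawn independently across coordinates of $x_1$, conditioned on $x_1$ the process $(x_t,s_t)$ factorizes into independent per-coordinate three-state chains ((empty) $\to\mask\to$ clean). The only computation here is to read off the two hazard rates from the densities: the (empty)-to-$\mask$ rate is $\dot\alpha_t/(1-\alpha_t)$, and --- crucially --- the $\mask$-to-clean rate simplifies to $\dot\beta_t/(1-\beta_t)$ \emph{independently} of the realized insertion time $T_1^i$, because the survival function is $(1-\beta_t)/(1-\beta_{T_1})$ and the $1-\beta_{T_1}$ normalizer cancels. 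This produces a conditional joint rate matrix $R_t^{x_1}$ in which each original index missing from the gap $(s[i-1],s[i])$ contributes an insertion at rate $\dot\alpha_t/(1-\alpha_t)$, and each masked position unmasks to $x_1^{s[i]}$ at rate $\dot\beta_t/(1-\beta_t)$; by construction its marginals are the conditional laws $p_t(\cdot\mid x_1)$ on the joint space.

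Second, I would invoke the standard stochastic-interpolant marginalization identity (the same device underlying~\eqref{eq:rate_mdm}): since each $R_t^{x_1}$ solves the conditional KFE and the KFE is linear in the law, the posterior-averaged generator $\tilde R_t((x,s),\cdot)=\mathbb{E}_{x_1\mid(x_t,s_t)=(x,s)}[R_t^{x_1}((x,s),\cdot)]$ solves the KFE for the joint marginal $p_t(x,s)$. Because the insertion rate is independent of $x_1$ it passes through unchanged, whereas the unmasking rate becomes $\tfrac{\dot\beta_t}{1-\beta_t}\,\mathbb{P}(x_1^{s[i]}=v\mid x_t=x,s_t=s)$.

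Third --- and this is where the real work lies --- I would project onto the $x$-marginal by summing the joint KFE over all index sets $s$ compatible with $x$, using $p_t(x)=\sum_s p_t(x,s)$, and regroup the two transition types into the $x$-space rates of~\eqref{eq:rate_FlexMDM}. For unmasking this is a direct collapse: summing $p_t(\replaceat{x}{i}{\mask},s)\,\mathbb{P}(x_1^{s[i]}=v\mid\cdots)$ over $s$ marginalizes the hidden index set and yields $p_t(\replaceat{x}{i}{\mask})\,\mathbb{P}(x_1^{s_t[i]}=v\mid x_t=\replaceat{x}{i}{\mask})$, exactly the stated unmask rate. For insertion the key observation is that a single $x$-space insertion $\insertat{y}{i}{\mask}$ arises from many distinct joint transitions --- one for each original index $j$ in the gap $(s[i-1],s[i])$ --- all of which share the \emph{same} joint predecessor mass $p_t(y,s^-)$, since the inserted index is not recorded in the predecessor. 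Hence the multiplicity $s^-[i]-s^-[i-1]-1$ enters as a weight, and summing over $s^-$ converts it into $\mathbb{E}[s_t[i]-s_t[i-1]-1\mid x_t=y]$, reproducing the insertion rate. I would then confirm the outflow (diagonal) terms match by the same bookkeeping: the total unmask outrate contributes $\tfrac{\dot\beta_t}{1-\beta_t}$ per masked position (summing the posterior over $v$ gives $1$) and the total insertion outrate contributes $\tfrac{\dot\alpha_t}{1-\alpha_t}\sum_{i=0}^{\length{x}}\mathbb{E}[s_t[i]-s_t[i-1]-1\mid x_t=x]$, matching $-R_t(x,x)$. The boundary condition is immediate: at $t=0$, $\alpha_0=\beta_0=0$ forces every coordinate empty, so $p_0$ is the point mass on the empty string.

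I expect the main obstacle to be the insertion regrouping in the third stage: carefully accounting for the hidden combinatorics --- which original coordinate $j$ was inserted, and which index set $s$ underlies the observed $x$ --- so that the per-$j$ joint transitions aggregate into precisely the conditional expectation $\mathbb{E}[s_t[i]-s_t[i-1]-1\mid x_t=x]$, while keeping the outflow bookkeeping consistent. By comparison, the unmasking collapse and the $x_1$-marginalization step are routine.
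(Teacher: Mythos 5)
Your proposal is correct, and its overall skeleton---lift the dynamics to the joint state space $(x_t,s_t)$, identify the conditional generator given $x_1$, then marginalize down to the $x$-marginal---is the same as the paper's, which proves the statement by combining Proposition~\ref{prop:flex-mdm-interpolating-rate} (the conditional interpolating rate) with the general projection result of Proposition~\ref{prop:joint-target-rate}. The differences lie in how each half is executed. For the conditional rate, the paper does not argue via hazard rates: it writes the conditional pmf $p_t(s,x \mid x_1)$ explicitly in product form $A(t)\prod_{i} I_i(t)$ and verifies the conditional Kolmogorov forward equation by Leibniz/product-rule differentiation and term-by-term regrouping. Your route instead treats the coordinates, given $x_1$, as independent three-state chains and reads off the two hazard rates, with the cancellation of the $1-\beta_{T_1}$ normalizer showing the $\mask\to\text{clean}$ rate is $\dot\beta_t/(1-\beta_t)$ regardless of insertion time. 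This is more conceptual and surfaces exactly the memorylessness property that the paper later exploits for any-order inference; its cost is that you must justify that the lifted process $(x_t,s_t)$ really is a CTMC with those product rates (via the bijection between per-coordinate states and the joint state), a point the paper's brute-force verification sidesteps because it checks the forward equation directly without any Markovianity claim. For the marginalization, the paper packages your stages two and three into the single identity $R_t(x,y)=\mathbb{E}\bigl[\sum_{s'}K_t^{x_0,x_1}((x,s_t),(y,s'))\mid x_t=x\bigr]$, proven by one exchange of summations, whereas you split it into an $x_1$-posterior average on the joint space followed by a projection over $s$; the composition is mathematically identical, and your insertion-multiplicity bookkeeping (each gap index $j$ contributing one joint transition, so the count $s[i]-s[i-1]-1$ appears as a weight and averages to the insertion expectation) is precisely how the paper's formula instantiates on the insert transitions. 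Both routes are sound; yours trades a short self-contained computation for a cleaner probabilistic picture.
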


Proposition~\ref{prop:FlexMDM-loss} thus implies that minimizing the loss yields exact recovery of the rate matrix. In practice, we could simulate the CTMC using the learned networks $(f_\theta,g_\theta)$ in place of the ground-truth quantities in \eqref{eq:rate_FlexMDM}. By denoting the resulting terminal distribution as $p_1^\theta$, the variational loss quantifies the terminal-time KL divergence:
\begin{equation*}
    \mathcal{D}_\mathrm{KL}(p_1 || p_1^\theta) \leq \mathcal{L_\theta} - \mathcal{L}_\star
\end{equation*}
We defer formal demonstration of propositions and the KL divergence guarantee to Appendix~\ref{sec:app-joint-interpolant-flex-mdm}. Definition of the joint interpolant is reinstated in definition \ref{def:flexmdm-interpolant-def-appendix}, the rate matrix in proposition \ref{prop:appendix-flexmdm-rate}, the loss and variational bound in proposition \ref{prop:flexmdm-loss-appendix}.

\textbf{Remark.}
Our FlexMDM interpolant introduces only one extra quantity beyond MDM's unmasking posterior: the insertion expectation, a simple scalar per position. This stems from our design choice to gradually insert and then unmask a token. As shown in Section~\ref{sec:experiment_llada}, this enables efficient task transfer of pretrained MDM weights. In contrast, alternative interpolants would require modeling more complex objects, such as a full token distribution, adding unnecessary training burden.

\section{Variable Length Masked Diffusions: Inference}
\label{sec:FlexMDM_inference}
In this section, we outline inference algorithms for FlexMDM, focusing on two variants: \xxpurple{\textbf{vanilla inference}} and \xxpurple{\textbf{adaptive inference}}. We begin with a brief overview of vanilla and adaptive inference in MDMs.

\textbf{Adaptive inference in MDM.} For the case of MDM, MDM inference proceeds by simulating the rate matrix entries in \ref{eq:rate_mdm}. From a high-level one way this can be done is by (a) independently sampling a subset of masked tokens to unmask and (b) sampling clean tokens from the unmasking posterior. Crucially for what follows, the same guarantee holds for non-independent \emph{adaptive} choices of unmasking indices, e.g., confidence-based: correctness hinges on using the ground-truth unmasking posterior, not on following the rate matrix’s unmasking entries. 
This adaptive inference strategy is widely used due to its empirical performance. We adopt this template and show that FlexMDM inherits the same any-order property.

\textbf{Vanilla inference.} We begin with the \emph{vanilla inference} of FlexMDM, which is obtained by discretizing the CTMC in~\eqref{eq:rate_FlexMDM} using trained neural networks $(f_\theta,g_\theta)$. Choosing an appropriate discretization scheme is crucial, as different schemes can lead to markedly different empirical behavior. We adopt \emph{$\tau$-leaping}—originating in chemical physics and shown to outperform naive Euler discretization for MDMs~\citep{campbell2022continuous}—which batches all events occurring within a fixed interval $[t,t+\tau]$. At a high level, for each discretized step, we simultaneously (Figure~\ref{fig:flexmdm_overview}, right):
\begin{itemize}[leftmargin=*,itemsep=0pt,topsep=0pt]
    \item \coloredhl{xblue}{Unmasking}: For each mask token, sample for every unmasking a number according to the unmasking intensities in the rate matrix. Unmask only if a non-zero entry is returned.
    \item \coloredhl{xxgreen}{Insertion}: Sample the number of \emph{mask-token insertions} from a Poisson distribution parameterized by the insertion rate, then apply those insertions.
\end{itemize}
As the number of steps $\to\infty$, this inference algorithm recovers the CTMC and the discretization error vanishes. Algorithm~\ref{alg:inference} details the full sampler.

\textbf{Adaptive inference.} Notably, one can choose the positions to unmask \coloredhl{xxpurple}{adaptively}. Precisely, at each inference step we select the unmasking positions according to a heuristic rule that prioritizes \coloredhl{xxpurple}{the most confident indices}, where confidence is computed either from the \emph{model’s unmasking posterior} or via a \emph{semi-autoregressive} rule (prioritizing leftmost masks). We find such adaptive choice substantially boosts performance; see Section~\ref{sec:experiment}.

Since unmasking indices in an adaptive no longer trace the transitions described by the rate matrix entries defined in \eqref{eq:rate_FlexMDM}, one might ask whether sampling still guarantees to sample from the target distribution $p_1$ in the infinitesimal limit. The following proposition answers in the affirmative.
\begin{figure}[!t]
\captionsetup{type=algorithm}
\centering
\begin{minipage}[t]{0.55\textwidth}
\vspace{-0.575in}
\begin{algorithm}[H]
\caption*{\textbf{Subroutine 1:} VLMDM inference}
\begin{algorithmic}[1]
\Require Learned functions $(f_\theta, g_\theta)$
\Require Discretization $0 = t_1 < \dots < t_N = 1$
\Require Insertion, Unmasking schedule $\alpha_t,\beta_t$
\State Initialize \(X_{t_1} \gets \varepsilon\)
\For{$k=1$ to $N-1$}
    \State \(\tau \gets t_{k+1} - t_k\)
    \State \xxpurple{\textbf{Invoke Subroutine 2 for unmasking}}
    \For{\xxgreen{\textbf{\(i\) in $[\mathrm{len}(X_{t_k})]$}}}
        \State Set rate $r\gets\tfrac{\dot{\alpha}_{t_k}}{1-\alpha_{t_k}}\cdot\tau$
        \State Sample $\ell\sim\mathrm{Poi}\left(r \cdot g_\theta(X_{t_k}, t_k)[i] \right)$
        \State \xxgreen{\textbf{Insert $\ell$ masks between $X_{t_k}^{i-1}$ and $X_{t_k}^i$}}
    \EndFor
\EndFor
\State \Return $X_{t_N}$
\end{algorithmic}
\end{algorithm}
\end{minipage}\hfill
\begin{minipage}[t]{0.43\textwidth}
\vspace{-0.58in}
\begin{algorithm}[H]
\caption*{\textbf{Subroutine 2: }Unmasking Step}
\begin{algorithmic}[1]
% \Require $X_{t_k},f_\theta,\tau$
\If{\xxpurple{\textbf{vanilla inference}}}:
    \For{$i \in \{i|X_{t_k}^i=\mask\}$ and $v\in\Sigma$}
    \State Set rate $r\gets\frac{\dot{\beta}_{t_k}}{1-\beta_{t_k}}\cdot\tau$
    \State $k_v\sim\mathrm{Poi}(r\cdot f_\theta(X_{t_k}, t_k)[i,v])$
    \If{$^{\exists!}v$ such that $k_v=1$}
    \State Set $X_{t_k}^i \leftarrow v$
    \vspace{-0.02in}
    \EndIf
    \EndFor
\EndIf
\If{\xxpurple{\textbf{adaptive inference}}}:
\State Select $\mathrm{K}$ (the size of $|S|$)
\For{$i \in \{i|X_{t_k}^i=\mask\}$} 
\State Compute confidence $\mathcal{C}^i$
\vspace{-0.025in}
\EndFor
\vspace{-0.03in}
\For{\(i\) in $\mathrm{argmaxK}({C})$}
\State  $X_{t_k}^i \sim \mathrm{Cat}(f_\theta(X_{t_k},t_k)[i])$
\EndFor
\EndIf
\end{algorithmic}
\end{algorithm}
\end{minipage}
\caption{\textbf{VLMDM inference.} At each step we perform \xblue{\textbf{unmasking}} and \xxgreen{\textbf{insertion}}. For \xblue{\textbf{unmasking}}, unmask by $\tau$-leaping (\xxpurple{\textbf{vanilla}}) or by confidence-based selection (\xxpurple{\textbf{adaptive}}). The number of mask tokens to \xxgreen{\textbf{insert}} is drawn from a Poisson distribution. \textbf{Notation}: $\mathrm{Cat}$, $\mathrm{Poi}$ imply the categorical and Poisson distribution, respectively. $\mathrm{argmaxK}(\mathcal{C})$ is the indices set of the $K$ largest components of $\mathcal{C}$. We provide more details in Appendix~\ref{sec:appendix_theory_inference}.} \label{alg:inference}
\vspace{-0.2in}
\end{figure}

\begin{proposition}[Any-order inference, informal]\label{thm:inference}
Consider any sampling scheme that, at each step: (i) unmasks an arbitrary subset of masked positions but draws revealed tokens from the \emph{ground-truth unmasking posterior}; and (ii) applies \emph{insertion} CTMC governed by the ground-truth rate matrix. Then the resulting process samples from the target distribution $p_1$.
\end{proposition}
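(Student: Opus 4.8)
The plan is to exhibit a single functional of the sampler's state distribution that is conserved in time and equals $p_1$ at both endpoints. Write $\pi_t(y\mid x):=\mathbb{P}(x_1=y\mid x_t=x)$ for the ground-truth interpolant posterior over clean sequences (the object from which the optimal $f_\theta,g_\theta$ of Proposition~\ref{prop:FlexMDM-loss} are derived), let $\rho_t$ be the law of the state under the any-order sampler, and set $\nu_t(y):=\sum_x \rho_t(x)\,\pi_t(y\mid x)$. At $t=0$ the state is deterministically the empty string, so $\nu_0(y)=\pi_0(y\mid\varepsilon)=p_1(y)$; at $t=1$ every position is present and unmasked, so $\pi_1(y\mid x)=\mathbf{1}_{\{x=y\}}$ and $\nu_1=\rho_1$ is exactly the terminal law of the sampler. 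Hence it suffices to prove $\tfrac{d}{dt}\nu_t\equiv 0$, and the whole claim reduces to this conservation statement.

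I would differentiate and split the generator. Let $G^A_t$ be the ground-truth generator (rate matrix $R_t$) of Proposition~\ref{prop:FlexMDM-rate} and $G^B_t=G^{\mathrm{ins}}_t+G^{\mathrm{unm},B}_t$ the sampler's generator (identical insertion, arbitrary-order unmasking). Using $G^B_t=G^A_t+(G^{\mathrm{unm},B}_t-G^{\mathrm{unm},A}_t)$,
\[
\begin{aligned}
\tfrac{d}{dt}\nu_t(y) &=\sum_{x}\rho_t(x)\Big[\partial_t\pi_t(y\mid x)+\sum_{x'}G^A_t(x,x')\,\pi_t(y\mid x')\Big]\\
&\quad+\sum_x\big[(G^{\mathrm{unm},B}_t-G^{\mathrm{unm},A}_t)^{*}\rho_t\big](x)\,\pi_t(y\mid x).
\end{aligned}
\]
The first sum vanishes termwise (hence for arbitrary $\rho_t$): $\pi_t(y\mid\cdot)$ is the conditional probability of the terminal indicator $\mathbf{1}_{\{x_1=y\}}$ under the true process, so it solves the backward Kolmogorov equation $\partial_t\pi_t(y\mid x)+\sum_{x'}G^A_t(x,x')\pi_t(y\mid x')=0$, which is legitimate precisely because Proposition~\ref{prop:FlexMDM-rate} certifies that $G^A_t$ generates the interpolant. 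Notably this disposes of the insertion dynamics and the true unmasking \emph{simultaneously}, so I never analyze insertion in isolation.

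Everything then rests on the second sum. Each elementary unmasking move sends a state $x$ with $x^i=\mask$ to $\replaceat{x}{i}{v}$ with probability proportional to $\mathbb{P}(x_1^{s_t[i]}=v\mid x_t=x)$, at some nonnegative, possibly adaptive rate $c_i(x)$; its net contribution to $\tfrac{d}{dt}\nu_t(y)$ is $\rho_t(x)\,c_i(x)\big[-\pi_t(y\mid x)+\sum_v\mathbb{P}(x_1^{s_t[i]}=v\mid x_t=x)\,\pi_t(y\mid\replaceat{x}{i}{v})\big]$. Since $c_i(x)$ factors out, confidence-based, semi-autoregressive, and vanilla schedules are all covered uniformly, and it suffices to establish the pointwise identity
\[
\pi_t(y\mid x)=\sum_v \mathbb{P}(x_1^{s_t[i]}=v\mid x_t=x)\;\pi_t(y\mid\replaceat{x}{i}{v}).\qquad(\dagger)
\]
I would prove $(\dagger)$ by recognizing it as the tower property over the value of $x_1^{s_t[i]}$, which holds once one shows $\pi_t(y\mid\replaceat{x}{i}{v})=\mathbb{P}(x_1=y\mid x_t=x,\,x_1^{s_t[i]}=v)$. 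The only difference between these two conditioning events is that position $i$ has unmasking time $T_2^i\le t$ in the former and $T_2^i>t$ in the latter; by the construction in~\eqref{eqn::FlexMDM_interpolant} the times $(T_1^i,T_2^i)$ are drawn independently across coordinates and independently of the content $x_1$, so this extra information is uninformative about $x_1$. This is the FlexMDM analogue of the time-independence of the posterior that underlies any-order MDM inference.

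The step I expect to be the main obstacle is exactly $(\dagger)$, because the correspondence $s_t$ is latent and must be marginalized: revealing position $i$ is conditioning not on a fixed coordinate of $x_1$ but on the random coordinate $s_t[i]$, and the FlexMDM posterior is genuinely $t$-dependent over variable-length states. The delicate point is to show that the masked-versus-revealed status at $i$ informs only $(T_1^i,T_2^i)$ and hence, through the coordinatewise independence in~\eqref{eqn::FlexMDM_interpolant}, leaves the distribution over skeletons consistent with $x$ and with $\replaceat{x}{i}{v}$ in agreement after marginalization. Once $(\dagger)$ is secured, the continuous-time bookkeeping and the passage from the discrete $\tau$-leaping sampler of Algorithm~\ref{alg:inference} to the infinitesimal limit — where at most one reveal occurs per step, so the gap between drawing jointly-from-marginals and sequential conditioning vanishes — are routine.
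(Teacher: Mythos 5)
Your conservation-law framing is elegant and is in fact an equivalent repackaging of what the paper proves by induction: Theorem~\ref{thm:anyorder} shows that at every intermediate step the eventual output is distributed as $q_{t_j}(\cdot\mid\hat{x}_{t_j})$, which is precisely the statement that your $\nu_t$ is constant in time, and your identity $(\dagger)$ is exactly the paper's unmasking lemma (Lemma~\ref{lem:unmasking_step}). The genuine gap is in the step you singled out as free. You claim the first sum vanishes because $\pi_t(y\mid\cdot)$ solves the backward Kolmogorov equation for $G^A_t$, and you justify this ``precisely because Proposition~\ref{prop:FlexMDM-rate} certifies that $G^A_t$ generates the interpolant.'' But Proposition~\ref{prop:FlexMDM-rate} certifies only the Kolmogorov \emph{forward} equation, i.e.\ that the CTMC with generator $G^A_t$ reproduces the marginals $p_t$. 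The backward equation for the \emph{interpolant posterior} is a strictly stronger property: it says the CTMC also reproduces the interpolant's coupling between $x_t$ and $x_1$. Matching marginals does not imply matching couplings. For instance, augment $R_t$ with symmetric transitions between two states carrying equal mass under $p_t$: the forward equation with the same marginals still holds, yet $\pi_t(y\mid\cdot)$ no longer solves the backward equation for the augmented generator (the continuous-space analogue is the probability-flow ODE versus the forward SDE: identical marginals, completely different conditionals). So the vanishing of your first sum must be proved, not cited.

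Concretely, once $(\dagger)$ is in hand it annihilates the unmasking part of $G^A_t$, so what your argument still owes is exactly the piece you declared you would never analyze: the insertion backward equation, namely that for every state $x$ and clean sequence $y$,
\[
\partial_t \pi_t(y\mid x) \;+\; \sum_{i=0}^{\length{x}} R^{\rm ins}_t\bigl(x,\insertat{x}{i}{\mask}\bigr)\,\bigl[\pi_t(y\mid \insertat{x}{i}{\mask}) - \pi_t(y\mid x)\bigr] \;=\; 0.
\]
This is the paper's Lemma~\ref{lem:insertion_step}, and it is the technical heart of the whole proof: it is verified pointwise using the closed form $q_t(y\mid x)\propto p(y)\,(1-\alpha_t)^{\length{y}-\length{x}}\,\#\{S\colon x\subseteq y|_S\}$ together with the combinatorial observation that each index set $S$ with $x\subseteq (x_1)|_S$ corresponds to exactly $s_i - s_{i-1} - 1$ sets $S'$ with $\insertat{x}{i}{\mask}\subseteq (x_1)|_{S'}$. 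An alternative route to the same end would be to show the marginal interpolant process $\{x_t\}$ is itself Markov, so that $G^A_t$ is literally its generator rather than merely marginal-preserving; but that also requires a real argument about the latent alignment $s_t$ and does not follow from Proposition~\ref{prop:FlexMDM-rate}. A secondary, fixable point: adaptive samplers that unmask the $K$ most confident positions at discrete times are not CTMCs with rates $c_i(x)$, so ``differentiating $\nu_t$'' does not literally apply to them; since each discrete unmasking step preserves $\nu_t$ exactly by $(\dagger)$, this is only bookkeeping, as you note. The missing insertion analysis is the real obstruction.
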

The formal statement and the proof of Proposition~\ref{thm:inference} are given in Appendix~\ref{sec:appendix_theory_inference}. In words, following the unmasking entries of the rate matrix corresponding to the schedule used in training is \emph{not} necessary to preserve the sampling guarantee. Moreover, the samplers as $N \to \infty$ in Algorithm~\ref{alg:inference} is subsumed by the class in Proposition~\ref{thm:inference}, therefore, assuming access to the ground-truth unmasking posterior and insertion expectation, the corresponding class of algorithms in Algorithm~\ref{alg:inference} samples from $p_1$ up to discretization error.

\textbf{Remark.} A key technical ingredient underlying the rigor of our adaptive inference is that the respective entries of the unmasking posterior of the ground truth rate matrix in Proposition~\ref{thm:inference} do \emph{not} depend on the choice of unmasking schedule $\beta_t$ (the proof is given in Appendix~\ref{sec:appendix_unmasking_posterior}). This independence allows a single model $f_\theta$ to learn all possible unmasking transitions arising along different paths that ultimately connect $p_0$ to $p_1$, thereby \coloredhl{xxpurple}{enabling adaptive unmasking} at inference time. This feature is the same mechanism enabling adaptive inference for MDMs, but for FlexMDMs, proving that it interfaces correctly with insertions is quite subtle. Note that a similar notion—independence of the choice of path—has been introduced in continuous spaces~\cite{albergo2023multi, negrel2025task}. We defer further discussion to Appendix~\ref{sec:appendix_theory_inference}.
\section{Experiment}
\label{sec:experiment}
In this section, we present experimental results for FlexMDM, demonstrating the following:
\begin{itemize}[leftmargin=*,itemsep=0pt,topsep=0pt]  
  \item \coloredhl{xxpurple}{FlexMDM is an effective variable-length learner}: length modeling, planning, local edits.
  \item \coloredhl{xsienna}{FlexMDM is scalable}: 8B FlexMDM is obtainable by initializing from a pretrained MDM.
\end{itemize}
Section~\ref{sec:experiment_pretrain} presents from-scratch results for FlexMDM on text and planning tasks distributions, confirming its practical efficiency. Next, Section~\ref{sec:experiment_llada} provides an 8B-scale FlexMDM's training recipe, initialized from LLaDA-8B \cite{nie2025large}, and evaluates it in math and code infilling tasks. We begin with the architectural and scheduling choices used throughout.

\begin{figure}[t]
    \vskip -3.0\baselineskip
    \centering
    \begin{subfigure}[t]{0.3\textwidth}
        \includegraphics[height=0.99\textwidth]{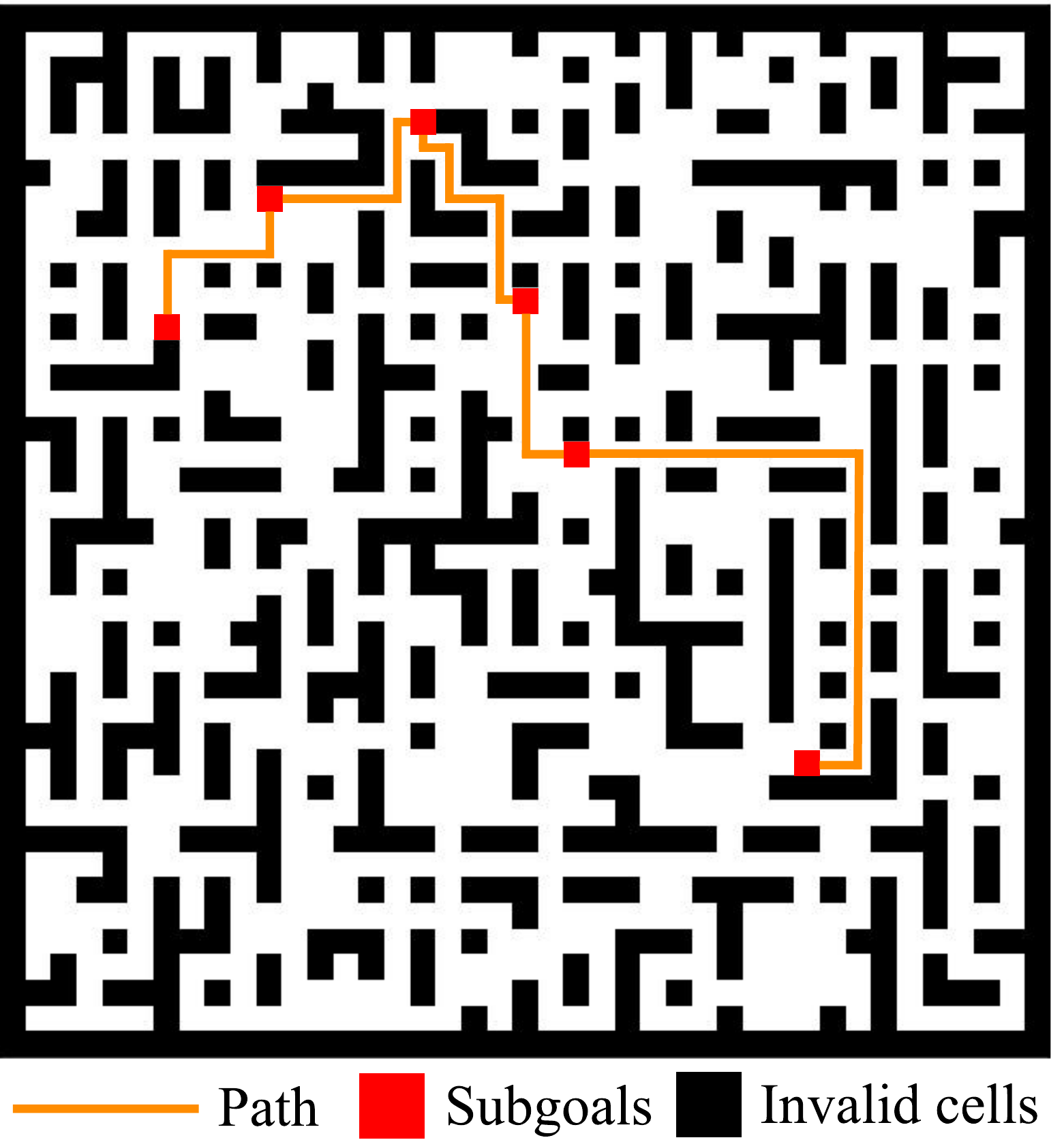}
        \caption{\textbf{Maze task illustration.} The model is given subgoals and is required to connect them.} \label{fig:maze_illus}
    \end{subfigure}
    \hfill
    \begin{subfigure}[t]{0.33\textwidth}
        \centering\includegraphics[scale=0.7,trim={0.6cm 0.4cm 0cm 0.0cm}]{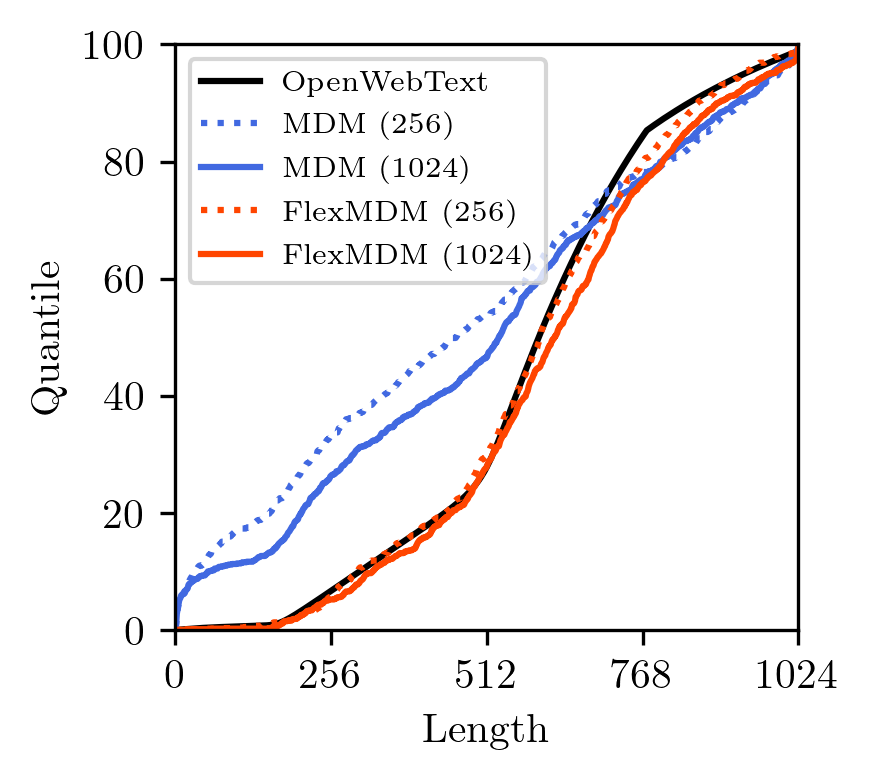}
        \caption{\textbf{Length modeling}. FlexMDM recovers the true length distribution of OpenWebText training data.} \label{fig:length_matching}
    \end{subfigure}
    \hfill
    \begin{subfigure}[t]{0.33\textwidth}
     \centering
        \includegraphics[scale=0.7,trim={0.6cm 0.4cm 0cm 0.0cm}]{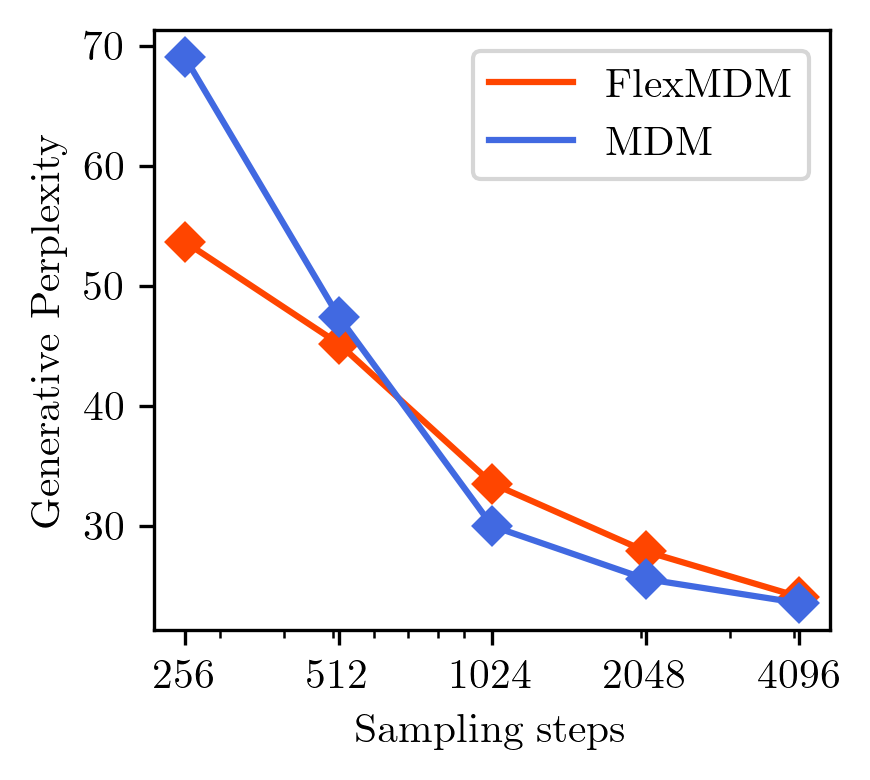}
        \caption{\textbf{Perplexity.} FlexMDM achieves generative perplexity on par with MDM.} \label{fig:gen_ppl}
    \end{subfigure}
    \vskip -1.0\baselineskip
\end{figure}

\textbf{Training design.} Recall from Section~\ref{sec:FlexMDM_informal} that FlexMDM models the unmasking posterior $f_\theta$ and insertion expectation $g_\theta$ given state $x$ and time step $t$. We adopt DiT~\citep{peebles2023scalable}, a bidirectional transformer that enables additional embedding, as a backbone.
To learn both quantities jointly, we attach two output heads: a standard posterior head for $f_\theta$ and a scalar softplus head for $g_\theta$. Moreover, we choose our unmasking and insertion schedule to be both linear, $\alpha_t=\beta_t=t$\footnote{The ground-truth unmasking posterior is independent of $\beta_t$, so we condition the network on $\alpha_t$ only; under the linear choice $\alpha_t=t$, this coincides with the usual time embedding.}.

\subsection{Pretraining} \label{sec:experiment_pretrain}
In this section, we evaluate FlexMDM’s ability to learn variable-length data from scratch. Our baseline is MDM, which is fixed-length but can handle variable-length sequences by padding to a fixed maximum length with an auxiliary pad token. This padding setup is widely used in instruction fine-tuning when variable-length answers are desired~\citep{nie2025large,nie2024scaling,dream2025,gong2024scaling}. For a fair comparison, we use vanilla inference for both MDM and FlexMDM throughout. Further experimental details appear in Appendix~\ref{sec:appendix_exp_detail}.

\subsubsection{Pretraining on text data}
We first construct a training dataset from the raw OpenWebText corpus~\citep{Gokaslan2019OpenWeb}, splitting each article into paragraphs to preserve semantic coherence and yield variable-length sequences. Models pretrained on this data, therefore, generate variable-length text.

\textbf{Results.} We train $175$M FlexMDM and MDM with a maximum sequence length $1024$ for $500$K iterations and batch size $1024$. Using the pretrained models, we vary the number of sampling steps and measure (a) generative perplexity as a proxy for text fluency, and (b) the induced length distribution. Figure~\ref{fig:gen_ppl} shows comparative generative perplexity for the two models, improving as sampling steps increase, indicating no fluency degradation for FlexMDM despite its more involved loss objective. Crucially, we observe that \coloredul{xxpurple}{FlexMDM matches the true length distribution far more closely} (Figure~\ref{fig:length_matching}): with only $256$ steps it tracks the ground truth distribution (\coloreddashul{xred}{red line}, whereas MDM remains miscalibrated even at $1024$ steps (\coloredul{xblue}{blue line}).

\textbf{Remark.} We remark that our pretraining pipeline differs from prior MDM setups that truncate the corpus to a fixed maximum length. Also, one might ask why we do not provide additional metrics on text benchmarks, such as validation perplexity. This is because MDM and FlexMDM use different objectives (see equation~\eqref{eq:loss:mdm} and equation~\eqref{eq:FlexMDM_loss}), making likelihood comparisons hard to interpret. We address the concern about the absence of the metric by evaluating scaled models on downstream benchmarks in Section~\ref{sec:experiment_llada}.

\subsubsection{Planning task}
We further evaluate FlexMDM's ability in a planning task in a discrete space. Motivated by an earlier study \cite{janner2022diffuser} that investigated the ability of continuous diffusion in maze tasks, we design a grid-maze benchmark: the maze is fixed but unknown to the model, with a subset of cells invalid. Given a sequence of subgoal grids $(g_1,\dots,g_{K})$, the model must connect this sequence without entering invalid cells (see Figure~\ref{fig:maze_illus}). This subgoal structure aligns naturally with FlexMDM: starting from $(g_1,\dots,g_K)$, inference inserts mask tokens between subgoals and then unmasks to generate a feasible path. Theoretically, this can be seen as augmenting the base distribution to contain a data-dependent distribution \cite{albergo2024stochastic}. This is in stark contrast to MDM, where it must preassign each subgoal to a specific position, which is difficult to know \emph{a priori}. We provide additional details in Appendix~\ref{app:maze}.

\begin{minipage}{0.62\textwidth}
\textbf{Results.} We use a $41\times41$ maze and control the task's difficulty via varying the number of subgoals $K\in\{2,7,12\}$. As $K$ increases, MDM performance degrades markedly, while FlexMDM maintains robust success rates, \coloredul{xxpurple}{reaching a gap of up to $60\%$ at $K=12$}. These results firmly support FlexMDM as a principled approach for subgoal-based planning, where preallocating token positions is inherently challenging for fixed-length models.
\end{minipage}
\hfill
\begin{minipage}{0.35\textwidth}
\begin{table}[H]
    \centering
    \vspace{-0.08in}
    \begin{tabular}{lcc}
        \toprule
        \textbf{Difficulty} & \textbf{MDM} & \textbf{FlexMDM} \\
        \midrule
        Easy & 68.4\% & \textbf{92.3\%}  \\
        Medium & 29.3\% & \textbf{90.4\%}\\
        Hard & 24.2\% & \textbf{90.0\%} \\
        \bottomrule
    \end{tabular}
    \vspace{-0.1in}
     \caption{FlexMDM outperforms MDM on the subgoal-style maze-planning task.}
\end{table}
\end{minipage}

\subsection{Scaling up FlexMDM} \label{sec:experiment_llada}
In this section, we address FlexMDM's scalability by scaling it to 8B parameters and observing notable improvements over an MDM baseline. We start from the observation that MDM and FlexMDM both share the unmasking posterior as a core component, suggesting effective \emph{task transfer} from a pretrained MDM might be possible. To demonstrate this, concretely, we initialize from LLaDA-Base~\citep{nie2025large} and make the following modifications: (a) add time-embedding layers and a scalar head to model the insertion expectation; (b) attach LoRA adapters. Altogether, the resulting number of trainable parameters is $\approx$400M. To cover both natural and mathematical language, we train on the 50:50 mixture of OpenWebText~\citep{Gokaslan2019OpenWeb} and Proof-Pile-2~\citep{azerbayev2023llemma}. Surprisingly, \coloredul{xsienna}{we observe rapid transfer}: within three days on $16$ H100 GPUs, the model generates variable-length sentences. We then instruction-fine-tune (IFT) this base FlexMDM to evaluate it on downstream tasks. See Appendix \ref{sec:appendix_exp_detail} for more details \footnote{For a fair comparison, since FlexMDM is not IFT-ed, we IFT LLaDA-Base, rather LLaDA-instruct, this differs from \cite{zhao2025d1}. We employ zero-shot evaluation, which also differs from \cite{nie2024scaling}.} 

\textbf{Results.} For comparison, we train FlexMDM and LLaDA-Base from the same number of IFT pairs. For math and code, respectively, we IFT on the GSM8K train split(\cite{cobbe2021training}; $\approx$8000 pairs) and the educational split of opc-sft-stage-2~\citep{Huang2024OpenCoderTO} ($\approx$0.1M pairs), for which IFT-ed models are evaluated in the GSM8K test split and HumanEval-infill (single line)~\citep{bavarian2022efficient} in zero-shot. Sampling is done by confidence-based sampling with a sliding window. Notably, as the number of sampling steps increases, \coloredul{xsienna}{FlexMDM continues to improve}, highlighting its strength in \coloredul{xsienna}{reasoning tasks} given sufficient compute—whereas the IFT-ed LLaDA’s performance remains flat. Although in this experiment we use IFT on task-specific pairs, we expect that training on a much more diverse instruction–answer pairs with sufficient compute will yield a more generalized model.
\hfill
\begin{figure}[H]
    \centering
    \vskip -1.0\baselineskip
    \includegraphics[width=0.9\textwidth,trim={0.5cm 0 0.2cm 0}]{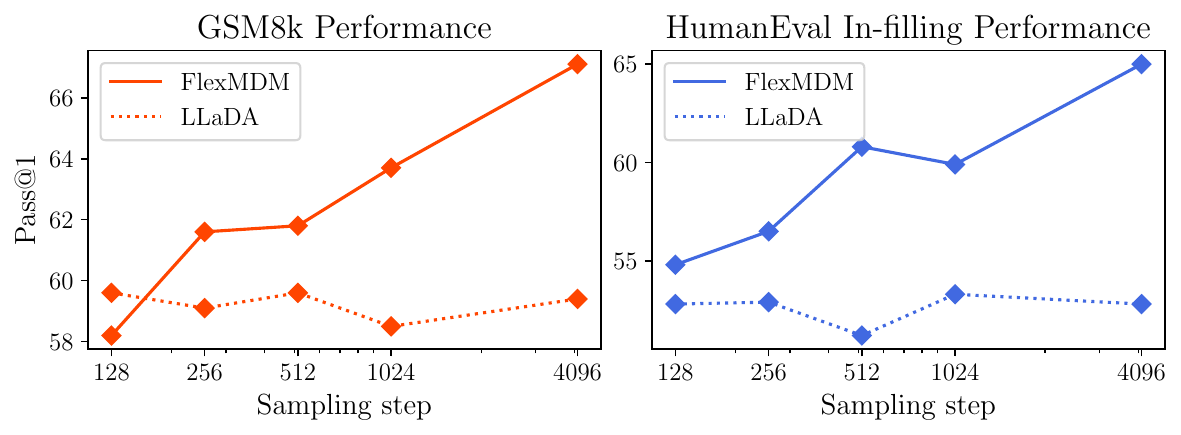}
    \vskip -1.0\baselineskip
    \caption{FlexMDM performance exhibits superior scaling when more sampling steps are allocated.}
\end{figure}
\vspace{-0.2in}
\section{Conclusion}
In this work we proposed Flexible Masked Diffusion Models (FlexMDM), a discrete diffusion framework over variable-length sequences. Theoretically, via a joint interpolant viewpoint, we provide rigorous guarantees for both training and inference of FlexMDM. Empirically, FlexMDM learns variable-length structure across diverse scenarios, scales to 8B parameters, trains in only a few GPU-hours, and yields substantial improvements on math and coding infilling tasks. Further exploration of FlexMDM’s capabilities is a promising direction for future work.

Beyond these results, our goal is to align generative modeling with how humans and nature compose discrete sequences. Instead of filling fixed positions; they \textbf{insert}, revise, and \textbf{reorder} tokens. We hope that our work takes a step in this direction.
\section*{Acknowledgements}
We thank Peter Potaptchik for feedback on the theoretical exposition, Francisco Vargas for discussion on path measures, Jiaxin Shi for discussion on perplexity metrics, Zhenting Qi
for discussion on instruction fine-tuning, and Andrew Campbell on architecture of the model. SC is supported by NSF CAREER award CCF-2441635. MSA is supported by a Junior Fellowship at the Harvard Society of Fellows as well as the National Science Foundation under Cooperative Agreement PHY-2019786 (The NSF AI Institute for Artificial Intelligence and Fundamental Interactions, http://iaifi.org/). This work has been made possible in part by a gift from the Chan Zuckerberg Initiative Foundation to establish the Kempner Institute for the Study of Natural and Artificial Intelligence.

% \newpage
\bibliographystyle{unsrtnat}
\nocite{*}
\bibliography{main}
\newpage
\tableofcontents
\appendix
\newpage
\section{Related Works}
\label{appendix:related_work}
\paragraph{Discrete diffusion and flows.} Early diffusion models were formulated as continuous-time Markov chains over continuous spaces with Gaussian transition kernels \citep{sohl2015deep,ho2020denoising}, and were later connected to continuous-time formulations via stochastic differential equations, offering a unifying perspective on score-based generative modeling \citep{song2020score}. In parallel, \emph{discrete} diffusion has been developed from the viewpoint of Markov chains over discrete space \citep{hoogeboom2021argmax}. Notably, \citet{austin2021structured} introduced D3PM with several families of discrete transition kernels, and \citet{lou2023discrete} proposed SEDD, which adopts score-based training objectives. A complementary line of work studies \emph{discrete flows} \citep{campbell2024generative,gat2024discrete}, aiming to understand continuous-time Markov chains (CTMCs) that interpolate between data and base distributions; this perspective aligns with ours. Subsequent extensions consider token-wise paths and path-wise structure within such flows \citep{shaul2024flow}.

\paragraph{Masked Diffusion Models.}
Among discrete-transition designs, absorbing-state (a.k.a.\ masking) kernels have become a popular and strong-performing choice. Recent work shows that this yields a simple and principled training recipe, referred to as Masked Diffusion Models (MDMs) \citep{sahoo2024simple,shi2024simplified}. A growing body of results demonstrates the scalability of this approach across problem settings and modalities, including large-scale natural language modeling \citep{nie2024scaling,nie2025large,dream2025,song2025seed,gemini2025diffusion}, code generation \citep{labs2025mercury,gong2025diffucoder}, and multimodal learning \citep{swerdlow2025unified}.

\paragraph{Any-order inference in MDMs.}
With the advent of MDMs, subsequent work has established that they admit theoretically grounded \emph{any-order inference}, wherein tokens can be unmasked in arbitrary orders rather than following a fixed CTMC schedule \citep{kim2025train,peng2025path}. Practical token-ordering rules span a spectrum of heuristics based on model confidence and uncertainty—e.g., maximum-probability logits \citep{chang2022maskgit,zheng2024masked}, probability margin \citep{kim2025train}, semi-autoregressive schedules \citep{nie2024scaling}, and entropy-based criteria \citep{ben2025accelerated}—as well as strategies that leverage reference models to guide the unmasking trajectory \citep{peng2025path}. Beyond heuristics, another thread trains auxiliary modules to anchor or adapt the generation order \citep{rout2025anchored}, while recent work directly \emph{learns} token orders end-to-end \citep{ma2025reinforced,wang2025learning}.

\paragraph{Stochastic interpolant.} Stochastic interpolant ~\citep{albergo2022building, albergo2023stochastic} is a general framework for building measure-transport based generative models on continuous state space. While building off different philosophical grounds, it can be seen as equivalent to flow matching~\citep{lipman2022flow, liu2022flowstraightfastlearning}. Extensions of the interpolant have been proposed for conditional generation through data-dependent coupling ~\citep{albergo2024stochastic}, which we adopt for infilling task design in Section~\ref{sec:experiment}.

\paragraph{Descriptive overview on concurrent work.} The most notable concurrent work is EditFlow~\citep{havasi2025edit}, where the primary mathematical machinery that enabled their construction is referred to as ``Flow Matching with Auxiliary Process". We note that this can be interpreted as mathematically equivalent to the notion of joint interpolant in this work, e.g., our Proposition~\ref{prop:joint-target-rate} is equivalent to Theorem 3.1 in \cite{havasi2025edit}.

The main differences are (1) the choice of interpolant and (2) the guarantee of any-order inference. Whereas EditFlow is built around an explicit probability path, we instead define a pair of coupled random variables that implicitly induce this path, leading to a different choice of intermediate. As discussed in Section~\ref{sec:FlexMDM_informal}, our choice of interpolant yields a distinct training objective for an unmasking posterior and an insertion-expectation term. Consequently, it enables the any-order inference guarantee established in Section~\ref{sec:FlexMDM_inference}. 
\section{Notation}
In this section, we reiterate the notations used in the main body and introduce auxiliary notations that are used in the proofs of the appendix.

\paragraph{Strings.} Let $\varepsilon$ denote the empty string, $\Sigma$ a vocabulary of words, $\mask$ a special mask token. We write $x^i$ the $i$-th element of $x$ with 0-baesd indexing, $x|_S$ the string indexed by an index set, e.g. $abc[\{0, 2\}]=ac$. To insert a token $v$ before position $i$ in string $x$, we write $\insertat{x}{i}{v}$, e.g., to prepend a token $\insertat{abc}{0}{d} = dabc$ and to append a token $\insertat{abc}{3}{d}=abcd$. To replace the $i$-th token in $x$ with $v$, we write $\replaceat{x}{i}{v}$. As much of the work involves masking, we write $x \subseteq y$ if $x$ can be constructed by partially masking $y$. We write $\mathrm{mask}(x), \mathrm{unmask}(x) \subseteq[\length{x}]$ for the set of indices corresponding to mask and clean tokens.

\section{Discrete Stochastic Interpolants: Definitions and Propositions for Section~\ref{sec:pre}} 
\label{sec:appendix_theory_interpolant}

In continuous spaces, a common approach to define generative transport is the stochastic interpolant framework, which implicitly defines the interpolation distribution $p_t$ by specifying an interpolant $\{x_t\}_{t\in[0,1]}$ and regressing the required quantities to realize the transport.

In the section \ref{ref:discrete-interpolant}, we introduce a discrete analogue of the stochastic interpolant. To illustrate this framework, we reformulate the widely used masked diffusion model for sequences of length $n$ within our setup. Briefly, masked diffusion defines the interpolation $p_t$ by progressively unmasking tokens in sentences drawn from the data distribution. At time $t=0$, all tokens are masked, so $p_0$ is a point mass at the fully masked sequence, the $\mask$ token repeated $n$ times. The transition rates driving the generative transport can be characterized as functions of per-token posterior probabilities conditioned on time $t$. These are typically learned by minimizing a variational objective in the form of a weighted cross-entropy loss.

\subsection{Discrete Stochastic Interpolant} \label{ref:discrete-interpolant}
To obtain the target rate matrix, the discrete stochastic interpolant relies on an interpolating rate matrix that drives a sample from a sample drawn from a sample from $p_0$ to a sample from $p_1$, defined as follows:
\begin{appdefinition}[Discrete Stochastic Interpolant and Interpolating Rate]
Let \( x_0 \sim p_0 \) and \( x_1 \sim p_1 \). A \textbf{discrete stochastic interpolant} is a family of random variables \( \{x_t\}_{t \in [0,1]} \), defined on a common probability space and satisfying the boundary conditions \( x_{t=0} = x_0 \) and \( x_{t=1} = x_1 \), for which there exists a continuous-time Markov chain with bounded, time-dependent transition rate matrix \( K_t^{x_0, x_1} \) such that, for each \( t \in [0,1] \), $\text{Law}(x_t \mid x_0, x_1)$ coincides with the marginal distribution at time $t$ of that Markov chain started at $X_0$. We refer to \( K_t^{x_0, x_1} \) as an \textbf{interpolating rate matrix}.
\end{appdefinition}
With an interpolating rate of an interpolant, the target rate matrix can then be obtained through Proposition \ref{prop:target-rate}
\begin{appprop}[Target Rate]\label{prop:target-rate}
Given a discrete stochastic interpolant $x_t$ and an interpolating rate matrix $K_t^{x_0, x_1}$, the continuous-time Markov chain with initial distribution $p_0$ and target transition rate matrix $R_t$ defined as,
\begin{align*}
    R_t(x, y) = \mathbb{E}_{x_0,x_1}[K_t^{x_0,x_1}(x, y) | x_t = x]
\end{align*}
has marginals equal to $\text{Law}(x_t)$.
\end{appprop}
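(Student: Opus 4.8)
The plan is to show that $\mathrm{Law}(x_t)$ solves the Kolmogorov forward equation \eqref{eq:kfe} with rate matrix $R_t$ and initial condition $p_0$, and then to invoke uniqueness of the KFE for bounded rate matrices to conclude that the CTMC generated by $R_t$ has exactly these marginals. Write $\pi(x_0,x_1)$ for the joint law of the endpoints and $q_t^{x_0,x_1}(x) := \mathbb{P}(x_t = x \mid x_0, x_1)$ for the conditional marginal of the interpolant. By the definition of a discrete stochastic interpolant, $q_t^{x_0,x_1}$ is the time-$t$ marginal of the CTMC driven by $K_t^{x_0,x_1}$, so for every fixed $(x_0,x_1)$ it satisfies the conditional KFE $\partial_t q_t^{x_0,x_1}(x) = \sum_y q_t^{x_0,x_1}(y)\, K_t^{x_0,x_1}(y,x)$. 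The marginal of interest is the mixture $p_t(x) = \sum_{x_0,x_1} \pi(x_0,x_1)\, q_t^{x_0,x_1}(x)$.

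First I would differentiate this mixture in $t$, interchanging the summation with $\partial_t$ (justified by boundedness of the rates), and substitute the conditional KFE to obtain
\begin{equation*}
\partial_t p_t(x) = \sum_{x_0,x_1} \pi(x_0,x_1) \sum_y q_t^{x_0,x_1}(y)\, K_t^{x_0,x_1}(y,x).
\end{equation*}
The crux is then to recognize the inner object as the unnormalized posterior appearing in the definition of $R_t$. By Bayes' rule, the conditional law of $(x_0,x_1)$ given $x_t = y$ assigns mass $\pi(x_0,x_1)\, q_t^{x_0,x_1}(y) / p_t(y)$, so
\begin{equation*}
R_t(y,x) = \mathbb{E}_{x_0,x_1}\!\left[K_t^{x_0,x_1}(y,x) \,\middle|\, x_t = y\right] = \frac{1}{p_t(y)} \sum_{x_0,x_1} \pi(x_0,x_1)\, q_t^{x_0,x_1}(y)\, K_t^{x_0,x_1}(y,x),
\end{equation*}
which rearranges to the key mass-balance identity $p_t(y)\, R_t(y,x) = \sum_{x_0,x_1} \pi(x_0,x_1)\, q_t^{x_0,x_1}(y)\, K_t^{x_0,x_1}(y,x)$ (holding trivially, with both sides zero, on states $y$ with $p_t(y)=0$).

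Summing this identity over $y$ and comparing with the expression for $\partial_t p_t(x)$ shows exactly that $\partial_t p_t(x) = \sum_y p_t(y)\, R_t(y,x)$, i.e.\ $p_t$ solves \eqref{eq:kfe}. The boundary condition is immediate: at $t=0$ each $q_0^{x_0,x_1}$ is the point mass at $x_0$, so $p_0(x) = \sum_{x_0,x_1}\pi(x_0,x_1)\mathbf{1}_{\{x_0 = x\}} = \mathrm{Law}(x_0)$, matching the prescribed initial distribution. Since each $K_t^{x_0,x_1}$ is a bona fide rate matrix (nonnegative off-diagonal entries, rows summing to zero as in \eqref{eq:mass-conservation}), linearity of the conditional expectation transfers both properties to $R_t$; the resulting KFE is a linear system with bounded coefficients and hence has a unique solution, so the marginals of the $R_t$-CTMC coincide with $\mathrm{Law}(x_t)$.

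I expect the main obstacle to be the bookkeeping in the Bayes/posterior step: one must be careful that the conditional expectation defining $R_t(y,x)$ is taken against the posterior of $(x_0,x_1)$ given $x_t = y$ rather than the prior $\pi$, since it is precisely the appearance of $p_t(y)$ in the denominator that cancels against the mixture weight and produces the clean mass-balance identity. A secondary technical point is justifying the interchange of $\partial_t$ with the sums over states and endpoints, which follows from boundedness of the interpolating rates and, in the finite-vocabulary bounded-length setting, finiteness of the relevant sums.
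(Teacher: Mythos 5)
Your proof is correct and takes essentially the same route as the paper's: both verify the Kolmogorov forward equation for the mixture $p_t(x) = \sum_{x_0,x_1}\pi(x_0,x_1)\,q_t^{x_0,x_1}(x)$ by using Bayes' rule to convert the posterior-weighted conditional expectation defining $R_t$ into a prior-weighted sum and then invoking the conditional KFE satisfied by each $q_t^{x_0,x_1}$ under $K_t^{x_0,x_1}$. The only differences are cosmetic—you run the computation from $\partial_t p_t$ toward the KFE rather than the reverse—and you additionally spell out the boundary condition, the $p_t(y)=0$ case, the rate-matrix property of $R_t$, and the uniqueness-of-solutions step that the paper leaves implicit.
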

\begin{proof}
Writing $p_t$ a probability mass function (pmf) of $x_t$ and $p_t(\cdot | x_0, x_1)$ a pmf of $x_t$ conditioned on $x_0, x_1$. We further write $q(x_0, x_1)$ the joint pmf of $x_0$ and $x_1$ and $q_t(x_0, x_1 | x_t)$ to be the joint pmf conditioned on $x_t$.

It suffices to show $R_t$ satisfies the Kolmogorov Forward Equation with pmf $p_t$ as follows:
\begin{align*}
    \sum_y R_t(y, x) p_t(y) &= \sum_y \mathbb{E}_{x_0,x_1}[K_t^{x_0,x_1}(x, y) | x_t = y] p_t(y) \\
    &= \sum_y \sum_{x_0, x_1}K_t^{x_0,x_1}(y, x) q_t(x_0, x_1|y) p_t(y) \\ 
    &= \sum_y \sum_{x_0, x_1}K_t^{x_0,x_1}(y, x) p_t(y | x_0, x_1) q(x_0, x_1) \\
    &= \mathbb{E}_{x_0,x_1} \left[ \sum_y K_t^{x_0,x_1}(y, x) p_t(y | x_0, x_1) \right] \\ 
    &= \mathbb{E}_{x_0, x_1} \left[\partial_t p_t(y | x_0, x_1)\right] \\ 
    &= \partial_t p_t(x)
\end{align*}
This concludes the proof.
\end{proof}

\paragraph{Remarks.} While written considerably differently, the framework is mathematically equivalent to discrete flow matching \cite{gat2024discrete}. The difference is only philosophical: discrete flow matching relies on the notion of a conditional probability path that the interpolating rate should induce, whereas we define such a probability path only implicitly through the definition of the interpolant.

\subsection{The Masked Diffusion Interpolant} \label{sec:masked-diffusion-interpolant}
As a concrete example of the discrete stochastic interpolant, we reformulate the masked diffusion model and its learning in the framework. As masked diffusion starts from a point mass, we drop the dependence of $x_0$ in writing.

\begin{appdefinition}[The Masked Diffusion Interpolant]
Let $x_1 \sim p_1$ be a sentence of length $n$ drawn from the data and $\alpha_t$ a smooth unmasking schedule that interpolates from $\alpha_{t=0}=0$ to $\alpha_{t=1}=1$. Define the unmasking times $\{T^i\}_{i\in[0, \dots, n-1]}$ as:
\begin{align}
    \forall i \in \{0, \dots, n-1\}: \quad T^i \sim \dot{\alpha_t} \; dt,
\end{align}
Then, the masked diffusion interpolant is defined as:
\begin{align}
    x_t = \begin{cases}
        \mask & \text{if } t < T_i, \\
        x_1^i & \text{if } t \geq T_i.
    \end{cases} 
\end{align}
\end{appdefinition}
In other words, at each time $t$, $x_t$ reveals a subset of the tokens of $x_1$, with each token $x_1^i$ independently unmasked at its associated time $T^i$. \begin{appprop}[The Masked Diffusion Interpolating Rate] \label{prop:interpolating-rate}
One interpolating rate $K_t^{x_1}$ of the masked diffusion interpolant $x_t$ is given by:
\begin{align}
   \forall x \subseteq x_1, v \in \Sigma, x^i=\mask: \quad  K_t(x, \replaceat{x}{i}{v}) = \frac{\dot{\alpha_t}}{1-\alpha_t} \mathbf{1}\{v = x_1^i\}.
\end{align}
\end{appprop}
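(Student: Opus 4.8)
The plan is to verify the defining property of an interpolating rate matrix directly: I must show that the continuous-time Markov chain generated by $K_t^{x_1}$, started from the point mass at the fully masked sequence $(\mask,\dots,\mask)$, has time-$t$ marginal equal to $\mathrm{Law}(x_t \mid x_1) = p_t(\cdot \mid x_1)$. The initial condition is immediate: since $\alpha_0 = 0$, at $t=0$ every coordinate of the interpolant is masked, matching the chain's starting state. Because the chain's marginals are the unique solution of the Kolmogorov forward equation $\partial_t \mu_t(x) = \sum_y \mu_t(y) K_t(y,x)$ with that initial condition, it suffices to check that $p_t(\cdot \mid x_1)$ solves the same equation. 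I would then exploit the coordinate-wise product structure to reduce the verification to a single site.

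First I would record the explicit conditional marginal. Because the unmasking times $T^i$ are drawn independently with $\mathbb{P}(T^i \le t) = \int_0^t \dot\alpha_s\,ds = \alpha_t$, each coordinate is masked with probability $1-\alpha_t$ and equals $x_1^i$ with probability $\alpha_t$, independently across $i$. Hence for any $x \subseteq x_1$ with $m$ masked positions, $p_t(x \mid x_1) = (1-\alpha_t)^{m}\alpha_t^{\,n-m}$, and $p_t(x \mid x_1)=0$ otherwise. The key structural observation is that both the interpolant and the generator factorize over coordinates: $K_t$ only ever changes a single coordinate, and only from $\mask$ to the correct clean value $x_1^i$ (never to a wrong symbol, and never back), at rate $\dot\alpha_t/(1-\alpha_t)$. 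Thus the chain is a product of $n$ independent two-state chains on $\{\mask, x_1^i\}$. Writing $q_t$ for the probability that coordinate $i$ is still masked under the chain, the forward equation reads
\begin{equation*}
\dot q_t = -\frac{\dot\alpha_t}{1-\alpha_t}\,q_t, \qquad q_0 = 1,
\end{equation*}
since mass only leaves the masked state and never returns. Integrating $\tfrac{d}{dt}\log q_t = -\dot\alpha_t/(1-\alpha_t)$ gives $q_t = 1-\alpha_t$, with the remaining mass $\alpha_t$ sitting at $x_1^i$. This matches the interpolant's single-site law, and by the product structure the full marginals coincide, which is exactly the claim.

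The computation is routine, so there is no serious obstacle; the only point requiring a little care is justifying the reduction to one coordinate, i.e. that the product of single-site solutions solves the full forward equation. This holds because the full generator is a sum of single-site generators acting on disjoint coordinates, so the product measure is preserved. Equivalently, one can bypass the factorization and verify the full-sequence forward equation head-on: for fixed $x \subseteq x_1$ with $m$ masked positions, summing the incoming unmasking flows over the $n-m$ clean positions (each with predecessor of weight $(1-\alpha_t)^{m+1}\alpha_t^{\,n-m-1}$ and rate $\dot\alpha_t/(1-\alpha_t)$) gives $(n-m)(1-\alpha_t)^{m}\alpha_t^{\,n-m-1}\dot\alpha_t$, the diagonal term contributes $-m(1-\alpha_t)^{m-1}\alpha_t^{\,n-m}\dot\alpha_t$, and their sum equals $\partial_t\big[(1-\alpha_t)^m\alpha_t^{\,n-m}\big]$, confirming the equation. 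Either route establishes that $K_t^{x_1}$ is an interpolating rate for the masked diffusion interpolant.
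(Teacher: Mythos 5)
Your proof is correct, and your primary argument takes a genuinely different route from the paper's. The paper writes the conditional law $p_t(x \mid x_1)$ as a product of per-coordinate indicator factors and verifies the Kolmogorov forward equation head-on: it differentiates the product via the Leibniz rule and matches the result, term by term, against the incoming and outgoing flows induced by $K_t$. You instead make the tensorization explicit --- since $K_t$ is a sum of single-site generators acting on disjoint coordinates and the initial point mass is a product measure, the chain factorizes into $n$ independent two-state chains --- and then you \emph{solve} the scalar forward equation $\dot q_t = -\tfrac{\dot\alpha_t}{1-\alpha_t}\,q_t$, $q_0=1$, obtaining $q_t = 1-\alpha_t$. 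This ``solve'' route is more structural: it exhibits $\dot\alpha_t/(1-\alpha_t)$ as precisely the hazard rate of the unmasking time $T^i$ and derives the marginal law rather than merely confirming a guessed formula. What the paper's ``verify'' template buys in exchange is that it carries over directly to the FlexMDM joint interpolant later in the appendix, where insertions destroy the clean coordinate-wise independence your argument leans on. Your fallback computation --- summing the $n-m$ incoming unmasking flows of size $(1-\alpha_t)^{m}\alpha_t^{\,n-m-1}\dot\alpha_t$ against the diagonal contribution $-m(1-\alpha_t)^{m-1}\alpha_t^{\,n-m}\dot\alpha_t$ and matching $\partial_t\bigl[(1-\alpha_t)^m\alpha_t^{\,n-m}\bigr]$ --- is essentially the paper's proof with tidier bookkeeping, parameterized by the number of masked positions instead of products of indicators. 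One technicality you invoke but do not prove, uniqueness of solutions to the forward equation (needed to conclude that the chain's marginals agree with $p_t(\cdot\mid x_1)$ once both satisfy the KFE), is likewise left implicit by the paper, as is the unboundedness of the rate $\dot\alpha_t/(1-\alpha_t)$ as $t \to 1$; neither is a gap specific to your write-up.
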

\begin{proof}
Let $p_t(\cdot | x_1)$ as the pmf of $x_t$ conditioned on $x_1$. From the definition of the interpolant, we notice that:
\begin{align*}
    p_t(x | x_1) = \prod_{i=0}^{\text{len}(x_1)-1} \left[(1-\alpha_t) \mathbf{1}\{x^i = \mask\} + \alpha_t \mathbf{1}\{x^i = x_1^i\}\right]
\end{align*}

We verify that $K_t$ satisfies the Kolmogorov Forward Equation  (\ref{eq:kfe}) under the conditioned pmf as follows,
\begin{align*}
     &\text{L.H.S} \\
     &= \partial_t p_t(x|x_1)\\
     &= \partial_t \left[\prod_{i=0}^{\text{len}(x_1)-1} (1-\alpha_t) \mathbf{1}\{x^i = \mask\} + \alpha_t \mathbf{1}\{x^i = x_1^i\}\right] \\ 
     &= \sum_{i=0}^{\text{len}(x_1)-1} \left(-\dot{\alpha_t} \mathbf{1}\{x^i = \mask\} + \dot{\alpha_t} \mathbf{1}\{x^i = x_1^i\}\right) \cdot \prod_{j\neq i}(1-\alpha_t) \mathbf{1}\{x^j = \mask\} +\alpha_t \mathbf{1}\{x^j = x_j^i\},\\
    &\text{R.H.S} \\
    &= \sum_{i=0}^{\text{len}(x_t)-1} \mathbf{1}\{x^i=x_1^i\} \frac{\dot{\alpha_t}}{1-\alpha_t} p_t(\replaceat{x}{i}{\mask}|x_1) -  \mathbf{1}\{x^i=\mask\} \frac{\dot{\alpha_t}}{1-\alpha_t}  p_t(x|x_1) \\
    &= \sum_{i=0}^{\text{len}(x_t)-1} \mathbf{1}\{x^i=x_1^i\} \frac{\dot{\alpha_t}}{1-\alpha_t} (1-\alpha_t) \prod_{j\neq i}(1-\alpha_t) \mathbf{1}\{x^j = \mask\} +\alpha_t \mathbf{1}\{x^j = x_j^i\} \\& \quad\quad -\sum_{i=0}^{\text{len}(x_t)-1}  \mathbf{1}\{x^i=\mask\} \frac{\dot{\alpha_t}}{1-\alpha_t} (1-\alpha_t) \prod_{j}(1-\alpha_t) \mathbf{1}\{x^j = \mask\} +\alpha_t \mathbf{1}\{x^j = x_j^i\}) \\
    &= \text{L.H.S}.
\end{align*}
This concludes the proof.
\end{proof}
Note that since each $T^i$ is sampled independently from a continuous distribution, the probability that two unmasking times coincide is zero. Thus, only a single token is unmasked in any infinitesimal transition almost surely.

\begin{proposition}[The Masked Diffusion Target Rate] \label{masked-diffusion-target-rate}
By proposition \ref{prop:target-rate}, a target rate $R_t$ that induces $\text{Law}(x_t)$ is:
\begin{align}
   \forall x \subseteq x_1, v \in \Sigma, x^i=\mask: \quad  R_t(x, \replaceat{x}{i}{v}) = \frac{\dot{\alpha_t}}{1-\alpha_t}\mathbb{P}(x_1^i = v | x_t = x).
\end{align}
\end{proposition}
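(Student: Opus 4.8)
The plan is to obtain the claim as an immediate corollary of the two results already established: the general Target Rate formula (Proposition~\ref{prop:target-rate}) and the explicit Masked Diffusion Interpolating Rate (Proposition~\ref{prop:interpolating-rate}). Because the base distribution $p_0$ is a point mass on the fully masked sequence, there is no randomness in $x_0$, so I would drop the $x_0$ dependence and take the expectation defining the target rate over $x_1 \sim p_1$ alone.

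First I would substitute the explicit interpolating rate $K_t(x,\replaceat{x}{i}{v}) = \frac{\dot{\alpha_t}}{1-\alpha_t}\mathbf{1}\{v = x_1^i\}$ into the target-rate identity $R_t(x,y) = \mathbb{E}_{x_1}[K_t^{x_1}(x,y)\mid x_t = x]$, giving
\[
R_t(x,\replaceat{x}{i}{v}) = \mathbb{E}_{x_1}\!\left[\frac{\dot{\alpha_t}}{1-\alpha_t}\,\mathbf{1}\{v = x_1^i\}\;\Big|\;x_t = x\right].
\]
Next I would note that the prefactor $\frac{\dot{\alpha_t}}{1-\alpha_t}$ depends only on the schedule and on $t$, not on $x_1$, so it factors out of the conditional expectation. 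What remains, $\mathbb{E}_{x_1}[\mathbf{1}\{v = x_1^i\}\mid x_t = x]$, is by definition the conditional probability $\mathbb{P}(x_1^i = v\mid x_t = x)$, i.e., the unmasking posterior; combining the two pieces yields the stated formula.

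There is essentially no hard step, since the result is a direct specialization of the earlier machinery. The only things to verify are bookkeeping: that the conditioning is well posed (on the support of $p_t(\cdot\mid x_1)$ one has $x\subseteq x_1$ and $x^i=\mask$, matching the hypotheses of Proposition~\ref{prop:interpolating-rate}), and that the almost-sure single-unmasking observation—no two times $T^i$ coincide—guarantees the rate matrix has only entries of the listed form. The one genuinely conceptual point, more an interpretation than an obstacle, is that averaging the \emph{hard} interpolating rate, which unmasks deterministically to the true token $x_1^i$, against the posterior over $x_1$ converts the indicator into the \emph{soft} posterior over clean tokens; this is precisely the quantity a network learns via the weighted cross-entropy loss~\eqref{eq:loss:mdm}.
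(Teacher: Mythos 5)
Your proposal is correct and follows exactly the paper's route: the paper's proof is precisely the one-line invocation of Proposition~\ref{prop:target-rate} applied to the interpolating rate of Proposition~\ref{prop:interpolating-rate}, and your expansion (factoring out the schedule prefactor and identifying $\mathbb{E}_{x_1}[\mathbf{1}\{v = x_1^i\}\mid x_t = x]$ with the unmasking posterior) simply spells out the details the paper leaves implicit. Your bookkeeping remarks about the point-mass $x_0$ and the well-posedness of the conditioning are consistent with the paper's conventions.
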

\begin{proof}
Following proposition \ref{prop:interpolating-rate}, the result follows from invoking proposition \ref{prop:target-rate}.
\end{proof}
To learn an approximation to $R_t$, we now parameterize an approximate target rate of the form  $
\hat{R}_t(x, \replaceat{x}{i}{v}):=\frac{\dot{\alpha}_t}{1 - \alpha_t} \, f_\theta(x, t)[i,v]$
where $f_\theta(x, t)[i,v]$ is a learned approximation to the posterior \(\mathbb{P}(x_1^i = v \mid x_t = x)\).

The target rate can then be characterized by a variational objective that measures the discrepancy between the true and approximate path measures.
\begin{proposition}[Variational Loss for Masked Diffusion]
The loss function is defined as:
\begin{align}
L[\hat{R}_t]  &= \int_0^1\mathbb{E}_{x_1, x_t} \left[
-\frac{\dot{\alpha}_t}{1 - \alpha_t} \sum_{i=0}^{n-1} \mathbf{1}\{x_t^i = \mask\} \log f_\theta(x_t, t)[i, x_1^i]
\right] \; dt,
\end{align}
is uniquely minimized when \(\hat{R}_t = R_t\), and is connected to the terminal KL-divergence by:
\begin{align}
\mathcal{D}_\text{KL}(p_1 || \hat{p}_1) \leq L[\hat{R}_t] - L[R_t],
\end{align}
where $\hat{p}_1$ is the approximate data distribution generated by $\hat{R}_t$.
\end{proposition}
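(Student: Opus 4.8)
The plan is to establish the two claims separately: uniqueness of the minimizer by a pointwise Gibbs-inequality argument, and the terminal KL bound by routing through the relative entropy of the two path measures. For uniqueness, I would fix a time $t$ and a reachable state $x$, and observe that the weight $\dot\alpha_t/(1-\alpha_t)$ is nonnegative (since $\alpha_t$ is monotone with $\alpha_t<1$ on $[0,1)$), so it suffices to minimize each per-position term independently. Conditioning on $x_t=x$, the contribution of a masked position $i$ is $\mathbb{E}_{x_1\mid x_t=x}[-\log f_\theta(x,t)[i,x_1^i]]$, which decomposes as $H\big(p^\star_i(\cdot\mid x)\big) + \mathcal{D}_{\mathrm{KL}}\big(p^\star_i(\cdot\mid x)\,\|\,f_\theta(x,t)[i,\cdot]\big)$, where $p^\star_i(v\mid x):=\mathbb{P}(x_1^i=v\mid x_t=x)$. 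The entropy term does not depend on $f_\theta$, and the KL term is nonnegative and vanishes exactly when $f_\theta(x,t)[i,\cdot]=p^\star_i(\cdot\mid x)$; summing and integrating preserves this, so the loss is uniquely minimized at the true posterior, which by Proposition~\ref{masked-diffusion-target-rate} is equivalent to $\hat R_t=R_t$.

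For the KL bound, the central tool is the standard formula for the relative entropy between the path measures $\mathbb{P}^R,\mathbb{P}^{\hat R}$ of two CTMCs sharing the initial law $p_0$:
\[
\mathcal{D}_{\mathrm{KL}}(\mathbb{P}^R\,\|\,\mathbb{P}^{\hat R}) = \int_0^1 \mathbb{E}_{x\sim p_t}\Big[\sum_{y\neq x}\big(R_t(x,y)\log\tfrac{R_t(x,y)}{\hat R_t(x,y)} - R_t(x,y) + \hat R_t(x,y)\big)\Big]\,dt,
\]
where $p_t$ is the marginal of the true process, equal to $\mathrm{Law}(x_t)$ by Proposition~\ref{prop:target-rate}. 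Substituting the MDM rates, the only admissible transitions are single unmaskings $x\to\replaceat{x}{i}{v}$, and both $R_t$ and $\hat R_t$ carry the common factor $c_t:=\dot\alpha_t/(1-\alpha_t)$. Because both $p^\star_i(\cdot\mid x)$ and $f_\theta(x,t)[i,\cdot]$ lie in the simplex $\Delta(\Sigma)$, the linear terms $-R_t+\hat R_t$ sum to zero over $v$, collapsing the generalized KL into an ordinary one: the summand for position $i$ becomes $c_t\,\mathcal{D}_{\mathrm{KL}}\big(p^\star_i(\cdot\mid x)\,\|\,f_\theta(x,t)[i,\cdot]\big)$.

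It then remains to match this to the loss gap. Writing out $L[\hat R_t]-L[R_t]$ and taking the inner expectation over $x_1$ conditioned on $x_t=x$ turns the weighted log-ratio $\log\big(p^\star_i(x_1^i\mid x)/f_\theta(x,t)[i,x_1^i]\big)$ into exactly the same per-position KL, since $\mathbb{P}(x_1^i=v\mid x_t=x)=p^\star_i(v\mid x)$. Here I would emphasize that the loss expectation ($x_1\sim p_1$, then $x_t\sim p_t(\cdot\mid x_1)$) marginalizes precisely to the marginal $p_t$ appearing in the path-KL formula, making the two expressions literally equal: $L[\hat R_t]-L[R_t]=\mathcal{D}_{\mathrm{KL}}(\mathbb{P}^R\,\|\,\mathbb{P}^{\hat R})$. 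The claimed bound follows from the data-processing inequality: as $p_1$ and $\hat p_1$ are the time-$1$ marginals of $\mathbb{P}^R$ and $\mathbb{P}^{\hat R}$, marginalization cannot increase KL, so $\mathcal{D}_{\mathrm{KL}}(p_1\,\|\,\hat p_1)\le \mathcal{D}_{\mathrm{KL}}(\mathbb{P}^R\,\|\,\mathbb{P}^{\hat R})=L[\hat R_t]-L[R_t]$.

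The main obstacle I anticipate is justifying the path-measure relative-entropy formula rigorously: this is the Girsanov/Radon–Nikodym computation for jump processes, which requires the absolute-continuity condition $\hat R_t(x,y)>0$ wherever $R_t(x,y)>0$ and enough regularity (bounded rates, finitely many jumps almost surely) for the likelihood-ratio martingale to be well defined. Given that formula, the two remaining ingredients—the simplex cancellation collapsing generalized KL into ordinary KL, and the conditional-expectation rewriting of the loss difference—are routine. A secondary subtlety worth stating explicitly is the marginal-matching point above, since it is what upgrades a mere comparison to an exact identity between the loss gap and the path-space KL.
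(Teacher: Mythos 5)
Your proposal is correct, and its core — the Radon--Nikodym/Girsanov relative-entropy formula between the two CTMC path measures, identification of that path KL with the loss gap $L[\hat R_t]-L[R_t]$, and the data-processing inequality to pass to the terminal marginals — is exactly the route the paper takes. Where you genuinely add something is the uniqueness claim: the paper's proof never argues it separately (it is left implicit in the identity $\mathcal{D}_{\mathrm{KL}}(\mathbb{P}\,\|\,\hat{\mathbb{P}})=L[\hat R_t]-L[R_t]$ together with non-negativity of path KL and its vanishing only when the rates agree on the support), whereas you give a self-contained pointwise argument decomposing each per-position term into entropy plus $\mathcal{D}_{\mathrm{KL}}\bigl(p^\star_i(\cdot\mid x)\,\|\,f_\theta(x,t)[i,\cdot]\bigr)$, which pins down the minimizer without reference to path measures at all. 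You also make explicit the simplex cancellation — $\sum_v f_\theta(x,t)[i,v]=1$ renders the linear $\hat R_t$ terms $\theta$-independent — which the paper absorbs silently into its ``$+\,\text{Const.}$'' bookkeeping; this is precisely the step that collapses the generalized KL into an ordinary one and makes the loss-gap/path-KL identity exact rather than an identity up to constants. Finally, both your argument and the paper's rest on the same unproved prerequisite you flag (the likelihood-ratio formula for jump processes under absolute continuity and bounded rates); the paper handles this by citation rather than proof, so your explicit acknowledgment of that dependency is appropriate rather than a gap.
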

\begin{proof}
Let $\mathbb P$ and $\mathbb{\hat{P}}$ be the path measures associated with the continuous-time Markov chain of the target rate matrix $R_t$ in proposition \ref{prop:target-rate} and an approximation through a neural network $\hat R_t$. The variational loss follows by expanding the KL-divergence between the two path measures.
\begin{align*}
    \mathcal{D}_\text{KL}(\mathbb P \mid\mid \mathbb{\hat{P}})  &= \mathbb{E}_{\mathbb{P}} \left[\int_{t=0}^{t=1} R_t(x_t, x_t) - \hat{R_t}(x_t, x_t) \; dt +  \sum_{t : x_t \neq x_{t-}} \log \frac{R_t(x_{t-}, x_t)}{\hat{R_t}(x_{t-}, x_t)}\right] \\ 
&= \int_0^{t=1} \mathbb{E}_{x_t \sim \mathbb{P}_t} \left[\sum_{y \neq x_t}\hat{R_t}(x_t, y) - R_t(x_t, y) \log \hat{R_t}(x_t, y)\right] dt+ \text{Const.} \\ 
&= \int_0^1\mathbb{E}_{x_1, x_t} \left[
-\frac{\dot{\alpha}_t}{1 - \alpha_t} \sum_{i=0}^{n-1} \mathbf{1}\{x_t^i = \mask\} \log f_\theta(x_t, t)[i, x_1^i]
\right] \; dt + \text{Const.}
\end{align*}
where the first line takes the expectation of the Radon-Nikodym derivative between the two path measures. The statement of Radon-Nikodym derivative between two CTMCs can be found in the Appendix in \cite{campbell2024generative}. A discrete-time equivalent derivation can also be found in \cite{shaul2025flow}.

The terminal KL bound then follows directly from the data processing inequality, that is:
\begin{align*}
    \mathcal{D}_\text{KL}(p_1 \mid\mid \hat{p}_1) \leq \mathcal{D}_\text{KL}(\mathbb{P} || \hat{\mathbb{P}})
\end{align*}
This technique is standard, as shown in \cite{vargas2025transportmeetsvariationalinference} for the case of path reversal-based construction of diffusion generative models, and in \cite{holderrieth2025leaps} for discrete diffusion.
\end{proof}

\section{Joint Discrete Stochastic Interpolants: Definitions and Propositions for Section \ref{sec:FlexMDM_informal}} \label{sec:app-joint-interpolant-flex-mdm}
Building on the discrete stochastic interpolant, we proceed to construct a discrete diffusion that models a probability distribution whose supports span variable-length sequences.

On a high level, we would like to define an interpolant constructed by deleting and masking sentences from the data distribution. However, the corresponding interpolating rate becomes cumbersome to characterize, as it is no longer clear what each mask token should unmask to.

To this end, we introduce the \textbf{joint interpolant} that allows us to construct a broader class of interpolants and interpolating rate matrices by augmenting the interpolant with auxiliary information that allows us to specify a more flexible interpolation path. We then leverage this newfound freedom to construct the flexible-length masked diffusion model.

\subsection{Joint Interpolant}
By introducing an auxiliary variable coupled with the interpolant, the joint interpolant expands the class of interpolating rates that can be defined.
\begin{appdefinition}[Joint Interpolant and Joint Interpolating Rate]  
Let \( x_0 \sim p_0 \) and \( x_1 \sim p_1 \). A \textbf{joint interpolant} is a family of coupled random variables \(\{(x_t, s_t)\}_{t \in [0,1]}\) defined on a common probability space and satisfying the boundary conditions \( x_{t=0} = x_0 \) and \( x_{t=1} = x_1 \), for which there exists a continuous-time Markov chain with bounded, time-dependent transition rate matrix \( K_t^{x_0, x_1} \) on the joint state space such that, for each \( t \in [0,1] \), the conditional law \(\mathrm{Law}(x_t, s_t \mid x_0, x_1)\) coincides with the marginal distribution at time \( t \) of this Markov chain started at \((x_0, s_0)\). We call \( K_t^{x_0, x_1} \) a \textbf{joint interpolating rate matrix}.
\end{appdefinition}

\begin{appprop}[Joint Interpolant Target Rate] \label{prop:joint-target-rate}
Let \(\{(x_t, s_t)\}_{t \in [0,1]}\) be a joint interpolant with joint interpolating rate matrix \(K_t^{x_0, x_1}\). Consider the continuous-time Markov chain with initial distribution \(p_0\) and target transition rate matrix \(R_t\) defined by
\begin{align}
    R_t(x, y) = \mathbb{E}_{s_t,x_0,x_1}\left[ \sum_{s' \in \mathcal{S}} K_t^{x_0, x_1} \big( (x, s_t), (y, s') \big) \mid x_t = x \right],
\end{align}
where \(\mathcal{S}\) denotes the discrete state space of the auxiliary variable \(s_t\). The marginal of the chain at time \(t\) is then equal to \(\mathrm{Law}(x_t)\).
\end{appprop}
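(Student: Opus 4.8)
The plan is to show that the claimed marginal rate matrix $R_t$ satisfies the Kolmogorov forward equation (equation~\eqref{eq:kfe}) for $p_t = \mathrm{Law}(x_t)$, mirroring the proof of Proposition~\ref{prop:target-rate} but now accounting for the auxiliary variable $s_t$. The crucial observation is that, by hypothesis, the joint process $(x_t,s_t)$ is a continuous-time Markov chain on the enlarged state space governed (conditionally on $x_0,x_1$) by $K_t^{x_0,x_1}$, so the joint conditional pmf $p_t((x,s)\mid x_0,x_1)$ satisfies the joint KFE. The idea is to write the marginal conditional pmf $p_t(x\mid x_0,x_1) = \sum_{s} p_t((x,s)\mid x_0,x_1)$, differentiate in $t$, substitute the joint KFE, and then take the expectation over $x_0,x_1$ so that the marginal rate $R_t$ emerges via Bayes' rule exactly as in Proposition~\ref{prop:target-rate}.

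Concretely, first I would fix $x_0,x_1$ and apply $\partial_t$ to $p_t(x\mid x_0,x_1)=\sum_s p_t((x,s)\mid x_0,x_1)$. Using the joint KFE for the chain driven by $K_t^{x_0,x_1}$, I have $\partial_t p_t((x,s)\mid x_0,x_1) = \sum_{(y,s')} p_t((y,s')\mid x_0,x_1)\,K_t^{x_0,x_1}((y,s'),(x,s))$. Summing over $s$ collapses the $x$-receiving state into the marginal, giving $\partial_t p_t(x\mid x_0,x_1) = \sum_{(y,s')} p_t((y,s')\mid x_0,x_1)\,\bigl[\sum_s K_t^{x_0,x_1}((y,s'),(x,s))\bigr]$. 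The inner bracketed sum is precisely the $s'$-indexed quantity appearing in the definition of $R_t$; this is the step where summing out the target auxiliary coordinate $s$ is essential, since it converts the joint rate into an $x$-to-$x$ transition rate that still depends on the source $(y,s')$.

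Next I would take the expectation over $x_0,x_1\sim q$ and assemble the marginal KFE. Writing $q(x_0,x_1)$ for the joint law of the endpoints and using $p_t(x)=\mathbb{E}_{x_0,x_1}[p_t(x\mid x_0,x_1)]$, I get $\partial_t p_t(x) = \sum_{y,s'} \mathbb{E}_{x_0,x_1}\bigl[p_t((y,s')\mid x_0,x_1)\sum_s K_t^{x_0,x_1}((y,s'),(x,s))\bigr]$. Rewriting $p_t((y,s')\mid x_0,x_1)\,q(x_0,x_1) = q_t(x_0,x_1,s'\mid x_t=y)\,p_t(y)$ via Bayes' rule, and summing $s'$ together with the conditional expectation, reproduces exactly $\sum_y R_t(y,x)\,p_t(y)$ with $R_t(y,x)=\mathbb{E}_{s_t,x_0,x_1}[\sum_{s'}K_t^{x_0,x_1}((y,s_t),(x,s'))\mid x_t=y]$, matching the stated definition. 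The boundary condition $p_{t=0}=p_0$ holds since $x_{t=0}=x_0\sim p_0$ by the joint interpolant definition.

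The main obstacle is bookkeeping the conditioning correctly: I must ensure that the marginalization over the target auxiliary state $s$ and the conditional expectation over the source auxiliary state $s_t$ are handled on the right sides of the transition, and that the Bayes'-rule identity $p_t((y,s')\mid x_0,x_1)\,q(x_0,x_1)=q_t(x_0,x_1,s'\mid x_t=y)\,p_t(y)$ is applied with $s'$ (the source auxiliary variable) rather than the summed-out target $s$. Once this distinction between the source and target copies of the auxiliary variable is made precise, the computation is a direct generalization of Proposition~\ref{prop:target-rate} and the verification that $R_t$ solves the KFE is routine.
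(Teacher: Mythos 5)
Your proposal is correct and follows essentially the same argument as the paper's proof: both verify the Kolmogorov forward equation for $R_t$ by invoking the joint KFE of the conditional chain driven by $K_t^{x_0,x_1}$, marginalizing out the target auxiliary state, and applying Bayes' rule to exchange $p_t((y,s')\mid x_0,x_1)\,q(x_0,x_1)$ for $q_t(x_0,x_1,s'\mid x_t=y)\,p_t(y)$; you merely read the chain of equalities in the opposite direction (from $\partial_t p_t(x)$ to $\sum_y R_t(y,x)p_t(y)$ rather than the reverse). Your explicit attention to distinguishing the source auxiliary variable (conditioned on, playing the role of $s_t$) from the summed-out target auxiliary variable is exactly the bookkeeping the paper's computation relies on.
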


We note that this result is equivalent to ``Flow Matching with Auxiliary Process" in concurrent work \cite{havasi2025edit}.
\begin{proof}
Let $p_t(\cdot, \cdot | x_0, x_1)$ the joint pmf of $x_t$, $s_t$ conditioned on $x_0, x_1$, let $q_t(x_0,x_1, s_t|x_t)$ to be the joint of $x_0,x_1,s_t$ conditioned on $x_t$, and let $q_t(x_0, x_1)$ to be the joint of $x_0, x_1$. We proceed to verify $R_t$ satifies the KFE as in Equation \ref{eq:kfe},

\begin{align*}
    \sum_{y} R_t(y, x) p_t(y) &= \sum_y \mathbb{E}_{s_t,x_0,x_1}\left[ \sum_{s' \in \mathcal{S}} K_t^{x_0, x_1} \big( (y, s_t), (x, s') \big) \mid x_t = y \right] p_t(y) \\
    &= \sum_y \sum_{s_t, x_0, x_1}  \sum_{s' \in \mathcal{S}} K_t^{x_0, x_1} \big( (y, s_t), (x, s') \big) q_t(x_0, x_1, s_t | y) p_t(y) \\
    &= \sum_y \sum_{s_t, x_0, x_1}  \sum_{s' \in \mathcal{S}} K_t^{x_0, x_1} \big( (y, s_t), (x, s') \big) q_t(x_0, x_1, s_t | y) p_t(y) \\
    &= \sum_y \sum_{s_t, x_0, x_1}  \sum_{s' \in \mathcal{S}} K_t^{x_0, x_1} \big( (y, s_t), (x, s') \big) p_t(y, s_t | x_0, x_1) q(x_0, x_1) \\
    &= \mathbb{E}_{x_0,x_1} \left[\sum_{s'}\partial_t p_t(x, s' | x_0, x_1)\right] \\
    &= \partial_t p_t(x | x_0, x_1)
\end{align*}
This concludes the proof.
\end{proof}

\subsection{Flexible-Length Masked Diffusion}
We then instantiate the joint interpolant to obtain the length-aware masked diffusion model, using a sorted list of indices that has been inserted. Again, we drop $x_0$ in the writing as the model interpolates between a point mass at an empty sentence to the full data distribution. We redefine the interpolant in equation~\ref{eqn::FlexMDM_interpolant} for clarity.

\begin{appdefinition}[Flexible-Length Masked Diffusion Joint Interpolant] \label{def:flexmdm-interpolant-def-appendix}
Let \(x_1 = (x_1^0, \dots, x_1^{n-1}) \sim p_1\) be a sequence of length \(n\). Let \(\alpha_t\) and \(\beta_t\) be monotone and differentiable schedules on \([0,1]\) such that \(\alpha_0 = \beta_0 = 0\) and \(\alpha_1 = \beta_1 = 1\).
Define insertion and unmasking times \(\{T_1^i\}_{i=0}^{n-1}\), \(\{T_2^i\}_{i=0}^{n-1}\) as follows:
\begin{align}
    T_1^i &\sim \dot{\alpha}_t \; dt, \\
    T_2^i &\sim \mathbf{1}\{t \geq T_1^i\} \cdot \frac{\dot{\beta}_t}{1 - \beta_{T_1^i}} \; dt.
\end{align}
At each time \(t \in [0,1]\), define the sorted index set $s_t$ as:
\begin{align}
    s_t = \{i \in \{0, \dots, n-1\} \mid t > T_1^i\},
\end{align}
with ascending order \(s_t[0] < \dots < s_t[\mathrm{len}(s_t)-1]\), and boundary values:
\begin{align}
    s_t[-1] = -1, \quad s_t[\mathrm{len}(s_t)] = n,
\end{align}
and define the  \textit{interpolant state} $x_t$ per-coordinate as:
\begin{align}
    x_t^i = 
    \begin{cases}
        \mask & \text{if } t < T_2^{s_t[i]}, \\
        x_1^{s_t[i]} & \text{if } t \geq T_2^{s_t[i]}.
    \end{cases}
\end{align}
The process \((s_t, x_t)_{t \in [0,1]}\) is the \textbf{flexible length masked diffusion joint interpolant}.
\end{appdefinition}
Here, \(s_t\) tracks the ordered set of indices whose tokens have been inserted. The interpolant \(x_t\) reveals the true token \(x^i_1\) only after both insertion and unmasking. Given access to this ordered set, one interpolating rate is:
\begin{appprop}[FlexMDM Interpolating Rate] \label{prop:flex-mdm-interpolating-rate}
A joint interpolating rate matrix \(Q_t^{x_0, x_1}\) for the joint interpolant above is given by:
\begin{enumerate}[leftmargin=*,itemsep=0pt,topsep=0pt] 
\item Unmask: For index set $s$, \(x \subseteq {x_1}|_s\), \(x^i = \mask\), and \(v \in \Sigma\):
\begin{align}
    Q_t^{x_1}\big((x, s), (\replaceat{x}{i}{v}, s)\big)
    = \frac{\dot{\beta}_t}{1 - \beta_t} \cdot \mathbf{1}\{x_1^{s[i]} = v\}
\end{align}

\item Insert: For index set $s$, \(x \subseteq {x_1}|_s\), \(j \notin s\), and position \(i\) such that \(s[i{-}1] < j < s[i]\):
\begin{align}
    Q_t^{x_1}\big((x, s), (\insertat{x}{i}{\mask}, s \cup \{j\})\big)
    = \frac{\dot{\alpha}_t}{1 - \alpha_t}
\end{align}
\end{enumerate}
\end{appprop}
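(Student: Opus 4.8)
The plan is to mirror the verification used for the masked-diffusion interpolating rate (Proposition~\ref{prop:interpolating-rate}): exhibit the conditional law $p_t(\cdot\mid x_1)$ of the joint interpolant in closed product form and check directly that $Q_t^{x_1}$ satisfies the Kolmogorov forward equation~\eqref{eq:kfe} on the joint state space. Since all times $\{T_1^i\},\{T_2^i\}$ are drawn from continuous densities, almost surely no two coincide, so each infinitesimal transition changes a single coordinate and it suffices to balance single-token moves. I would first write the conditional pmf as a product over the $n$ source coordinates: a state $(x,s)$ with $x\subseteq x_1|_s$ classifies each source index as \emph{deleted} (not in $s$), \emph{masked} (in $s$ with the corresponding symbol $\mask$), or \emph{unmasked}, and by coordinate-independence,
\begin{equation*}
p_t((x,s)\mid x_1) = D_t^{\,n-\length{s}}\, M_t^{\,m}\, U_t^{\,u},
\end{equation*}
where $D_t,M_t,U_t$ are the per-coordinate probabilities of the three events and $m,u$ count masked and unmasked positions, with $m+u=\length{s}$.

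Second I would compute these probabilities from the interpolant definition and extract their logarithmic derivatives, which is where the two hazard factors of $Q_t^{x_1}$ must emerge. From $T_1^i\sim\dot\alpha_t\,dt$ one gets $D_t=\mathbb P(T_1^i>t)=1-\alpha_t$; conditioning on $T_1^i=\tau$ and integrating the unmasking density gives $\mathbb P(T_2^i>t\mid T_1^i=\tau)=\tfrac{1-\beta_t}{1-\beta_\tau}$, so $M_t=(1-\beta_t)\,I_t$ with $I_t:=\int_0^t\tfrac{\dot\alpha_\tau}{1-\beta_\tau}\,d\tau$, and $U_t=1-D_t-M_t$. Using $\dot I_t=\tfrac{\dot\alpha_t}{1-\beta_t}$ I would record $\dot D_t=-\dot\alpha_t$, $\dot M_t=\dot\alpha_t-\dot\beta_t I_t$, and $\dot U_t=\dot\beta_t I_t$, so that $\partial_t p_t((x,s)\mid x_1)=p_t\big[(n-\length{s})\tfrac{\dot D_t}{D_t}+m\tfrac{\dot M_t}{M_t}+u\tfrac{\dot U_t}{U_t}\big]$ can eventually be written entirely in terms of $\tfrac{\dot\alpha_t}{1-\alpha_t}$ and $\tfrac{\dot\beta_t}{1-\beta_t}$.

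Finally I would verify the forward equation at a fixed target $(x,s)$ by enumerating neighbors. The inflow has two sources: unmasking a position, whose predecessor $(\replaceat{x}{p}{\mask},s)$ has pmf $p_t\cdot M_t/U_t$, summed over the $u$ unmasked positions; and inserting a mask, whose predecessor (source index $s[p]$ deleted) has pmf $p_t\cdot D_t/M_t$, summed over the $m$ masked positions. The outflow sums the unmasking rate $\tfrac{\dot\beta_t}{1-\beta_t}$ over the $m$ masks and the insertion rate $\tfrac{\dot\alpha_t}{1-\alpha_t}$ over all admissible insertion sites; here the telescoping identity $\sum_{i=0}^{\length{s}}(s[i]-s[i-1]-1)=n-\length{s}$, using the boundary values $s[-1]=-1$ and $s[\length{s}]=n$, collapses the total insertion rate to $(n-\length{s})\tfrac{\dot\alpha_t}{1-\alpha_t}$. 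Substituting the pmf ratios, grouping the coefficients of $\tfrac{\dot\alpha_t}{1-\alpha_t}$ and $\tfrac{\dot\beta_t}{1-\beta_t}$, and matching against $\partial_t p_t$ from the previous step closes the argument.

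I expect the main obstacle to be the masked-token factor $M_t$: unlike the pure-MDM case it has no elementary closed form and retains the integral $I_t$, so the verification must proceed symbolically. The decisive point is that $\dot M_t$ splits as $\dot\alpha_t-\dot\beta_t I_t$, which cleanly separates an ``insertion'' piece matching the inflow $\tfrac{D_t}{M_t}\cdot\tfrac{\dot\alpha_t}{1-\alpha_t}=\tfrac{\dot\alpha_t}{M_t}$ (since $D_t=1-\alpha_t$) from an ``unmasking'' piece matching the $\tfrac{\dot\beta_t}{1-\beta_t}$ terms via $\tfrac{I_t}{M_t}=\tfrac{1}{1-\beta_t}$. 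Confirming that the $I_t$ factors cancel in both the $\dot\alpha$ and $\dot\beta$ groupings—rather than leaving an uncancelled integral—is the heart of the calculation and the part most prone to bookkeeping errors.
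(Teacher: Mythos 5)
Your proposal is correct and follows essentially the same route as the paper: both write the conditional pmf of $(x,s)$ given $x_1$ in per-coordinate product form (your $D_t^{\,n-\length{s}}M_t^{m}U_t^{u}$ is exactly the paper's $A(t)\prod_{i\in s}I_i(t)$, with $M_t=(1-\beta_t)\int_0^t\tfrac{\dot\alpha_u}{1-\beta_u}du$ and $U_t=\alpha_t-M_t$ matching the two branches of $I_i$), differentiate via the product/Leibniz rule, and verify the Kolmogorov forward equation by matching the resulting terms against the insertion/unmasking inflows and outflows, using the same telescoping identity $\sum_i(s[i]-s[i-1]-1)=n-\length{s}$ and the cancellations $D_t=1-\alpha_t$, $M_t/(1-\beta_t)=I_t$ that the paper exploits implicitly. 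The calculation you outline goes through exactly as planned, so there is no gap.
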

\begin{proof}
We first write down the $p_t(\cdot, \cdot | x_1)$ the conditioned pmf of $(s, x)$ given $x_0, x_1$. Let $n = \text{len}(x_t)$, then
\begin{align*}
p_t(s,x \mid x_1) &= A(t)\prod_{i\in s_t} I_i(t), \\[6pt]
A(t) &:= (1-\alpha_t)^{n-|s|} \\[6pt]
I_i(t) &:= \int_0^t \dot\alpha_u\left(\frac{1-\beta_t}{1-\beta_u}\mathbf{1}\{x^i=\mask\}
+ \frac{\beta_t-\beta_u}{1-\beta_u}\mathbf{1}\{x^i=x_1^i\}\right)du.
\end{align*}

Differentiate using the product rule:
\begin{align*}
\partial_t p_t(s,x\mid x_1)
&=\dot A(t)\prod_{i\in s}I_i(t)
+ A(t)\sum_{j\in s}\Big(\dot I_j(t)\prod_{i\in s\setminus\{j\}}I_i(t)\Big).
\end{align*}

For \(A(t)=(1-\alpha_t)^{\,n-|s|}\) we have
\[
\dot A(t)=-(n-|s|)\dot\alpha_t(1-\alpha_t)^{\,n-|s|-1}
= A(t)\Big(-\frac{(n-|s|)\dot\alpha_t}{1-\alpha_t}\Big).
\]

For each \(i\in s\) apply the Leibniz rule to \(I_i(t)\):
\begin{align*}
\dot I_i(t)
&=\dot\alpha_t\,\mathbf{1}\{x^i=\mask\}
+\int_0^t \dot\alpha_u\left(
\frac{-\dot\beta_t}{1-\beta_u}\mathbf{1}\{x^i=\mask\}
+\frac{\dot\beta_t}{1-\beta_u}\mathbf{1}\{x^i=x_1^i\}
\right)\,du\\[6pt]
&=\dot\alpha_t\,\mathbf{1}\{x^i=\mask\}
+\dot\beta_t\Big(-\mathbf{1}\{x^i=\mask\}+\mathbf{1}\{x^i=x_1^i\}\Big)
\int_0^t\frac{\dot\alpha_u}{1-\beta_u}\,du.
\end{align*}

Substituting \(\dot A(t)\) and \(\dot I_i(t)\) into the product-rule expansion yields
\[
\begin{aligned}
&\partial_t p_t(s,x\mid x_1)\\
&=-(n-|s|)\dot\alpha_t(1-\alpha_t)^{\,n-|s|-1}\prod_{i\in s}I_i(t)\\[4pt]
&\quad + (1-\alpha_t)^{\,n-|s|}\sum_{j\in s}\Bigg[
\Big(\dot\alpha_t\mathbf{1}\{x^j=\mask\}
+\dot\beta_t\big(-\mathbf{1}\{x^j=\mask\}+\mathbf{1}\{x^j=x_1^j\}\big)
\!\int_0^t\frac{\dot\alpha_u}{1-\beta_u}\,du\Big)\\
&\qquad\qquad\qquad\qquad\qquad\cdot\prod_{i\in s\setminus\{j\}}I_i(t)
\Bigg] \\ 
&= -(n-|s|) \frac{\dot{\alpha_t}}{1-\alpha_t} A(t) \prod_{i\in s} I_i(t) - \sum_{j\in_s, x^j=\mask} \frac{\dot{\beta_t}}{1-\beta_t} A(t) \prod_{i\in s} I_i(t) \\
&\quad\quad + \sum_{j\notin s, x^k\neq\mask} \frac{\dot{\beta_t}}{1-\beta_t} A(t) (\int_0^t \dot{\alpha_u} \frac{1-\beta_t}{1-\beta_u}du)\prod_{i\in s - \{j\}}I_i(t)
+\sum_{i\in s} \frac{\dot{\alpha_t}}{1-\alpha_t} (1-\alpha_t) A(t) \prod_{i\in s-\{j\}} I_i(t) \end{aligned}
\]
This can then be rewritten term by term as,
\[
\begin{aligned}
    \partial_t p_t(s, x \mid x_1) &= - \sum_{i\notin s} \frac{\dot{\alpha_t}}{1-\alpha_t} p_t(s, x \mid x_1) - \sum_{x^i=\mask} \frac{\dot{\beta_t}}{1-\beta_t} p_t(s, x \mid x_1) \\ &\quad\quad+ \sum_{x^i \neq \mask} \frac{\dot{\beta_t}}{1-\beta_t} p_t(s, \replaceat{x}{i}{\mask} | x_1) + \sum_{x^i=\mask} \frac{\dot{\alpha_t}}{1-\alpha_t} p_t(s - \{s[i]\}, \text{remove}(x, i) \mid x_1)
\end{aligned}
\]
where $\text{remove}(x, i)$ refers to the string constructed by removing the $i$-th element of $x$.

Notice that this is equivalent to the R.H.S of the KFE (Eq. \ref{eq:kfe}) if one uses the rate matrix $Q_t^{x_1}$. The four terms correspond to \textbf{1)} Outgoing mass from insertion; \textbf{2)} Outgoing mass from unmasking; \textbf{3)} Incoming mass from unmasking; \textbf{4)} Incoming mass from insertion. This concludes the proof. 
\end{proof}

\begin{appprop}[FlexMDM Rate Matrix (Restated from Proposition \ref{prop:FlexMDM-rate})] \label{prop:appendix-flexmdm-rate}
By Proposition~\ref{prop:joint-target-rate}, the induced marginal target rate \(R_t\) is:
\begin{enumerate}[leftmargin=*,itemsep=0pt,topsep=0pt] 
    \item Unmask: For \(x^i = \mask\), \(v \in \Sigma\):
\begin{align}
    R_t\big(x, \replaceat{x}{i}{v}\big)
    = \frac{\dot{\beta}_t}{1 - \beta_t} \cdot 
    \mathbb{P}(x_1^{s_t[i]} = v \mid x_t = x)
\end{align}

\item Insert: For position $i \in \{0, \dots, |x|\}$:
\begin{align}
    R_t\big(x, \insertat{x}{i}{\mask}\big)
    = \frac{\dot{\alpha}_t}{1 - \alpha_t} \cdot
    \mathbb{E}_{s_t} \left[ s_t[i] - s_t[i-1]-1 \mid x_t = x \right]
\end{align}
\end{enumerate}
\end{appprop}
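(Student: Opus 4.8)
The plan is to apply Proposition~\ref{prop:joint-target-rate} directly, feeding in the explicit joint interpolating rate $Q_t^{x_1}$ from Proposition~\ref{prop:flex-mdm-interpolating-rate}. Since $p_0$ is a point mass at the empty string, I would drop the $x_0$ dependence, so the target-rate identity to evaluate is
\begin{equation*}
R_t(x, y) = \mathbb{E}_{s_t, x_1}\left[\sum_{s'} Q_t^{x_1}\big((x, s_t), (y, s')\big) \,\Big|\, x_t = x\right].
\end{equation*}
The proposition's two cases then fall out by specializing $y$ to $\replaceat{x}{i}{v}$ (unmask) and to $\insertat{x}{i}{\mask}$ (insert), so the whole argument reduces to computing the inner sum over auxiliary targets $s'$ in each case.

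For the \textbf{unmask} transition $y = \replaceat{x}{i}{v}$, the key observation is that $Q_t^{x_1}$ leaves the index set unchanged during unmasking, so the only surviving term in the sum over $s'$ is $s' = s_t$. Pulling the deterministic prefactor $\tfrac{\dot{\beta}_t}{1-\beta_t}$ out of the conditional expectation leaves $\mathbb{E}_{s_t, x_1}[\mathbf{1}\{x_1^{s_t[i]} = v\} \mid x_t = x]$, which is exactly $\mathbb{P}(x_1^{s_t[i]} = v \mid x_t = x)$. This immediately reproduces the claimed unmask rate.

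For the \textbf{insert} transition $y = \insertat{x}{i}{\mask}$, the subtlety is that many distinct auxiliary updates collapse onto the same $x$-marginal target. Proposition~\ref{prop:flex-mdm-interpolating-rate} assigns rate $\tfrac{\dot{\alpha}_t}{1-\alpha_t}$ to each joint transition $(x, s_t) \to (\insertat{x}{i}{\mask}, s_t \cup \{j\})$ for \emph{every} source index $j$ with $s_t[i-1] < j < s_t[i]$, and all of these share the same observed state $\insertat{x}{i}{\mask}$ once the auxiliary coordinate is marginalized out. Hence the inner sum counts the integers strictly between $s_t[i-1]$ and $s_t[i]$, of which there are exactly $s_t[i] - s_t[i-1] - 1$, giving $\sum_{s'} Q_t^{x_1} = \tfrac{\dot{\alpha}_t}{1-\alpha_t}\,(s_t[i] - s_t[i-1] - 1)$. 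Taking the conditional expectation then yields $\tfrac{\dot{\alpha}_t}{1-\alpha_t}\,\mathbb{E}_{s_t}[s_t[i] - s_t[i-1] - 1 \mid x_t = x]$, as claimed.

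The main obstacle, and really the only nontrivial point, is the bookkeeping in the insert case: recognizing that the gap $s_t[i] - s_t[i-1] - 1$ in the auxiliary index set equals the number of source tokens still awaiting insertion in that slot, and that summing the constant per-transition rate over these auxiliary targets (indistinguishable at the level of $x$) is precisely what promotes a fixed rate into the insertion \emph{expectation}. I would also be careful to invoke the boundary conventions $s_t[-1] = -1$ and $s_t[\mathrm{len}(s_t)] = n$ so that insertions at the two ends, $i = 0$ and $i = \mathrm{len}(x)$, are counted correctly.
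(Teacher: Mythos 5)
Your proposal is correct and takes essentially the same route as the paper, whose entire proof is the one-line statement that the result ``follows by noting Proposition~\ref{prop:flex-mdm-interpolating-rate} and invoking Proposition~\ref{prop:joint-target-rate}.'' Your explicit bookkeeping---the single surviving auxiliary target $s' = s_t$ in the unmask case, and the $s_t[i] - s_t[i-1] - 1$ auxiliary targets $s_t \cup \{j\}$ collapsing onto the same observed transition $x \to \insertat{x}{i}{\mask}$ in the insert case---is precisely the marginalization calculation the paper leaves implicit.
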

\begin{proof}
The proof follows by noting proposition \ref{prop:flex-mdm-interpolating-rate} and invoking proposition \ref{prop:joint-target-rate}.
\end{proof}

Performing an approximate target rate matrix $\hat{R}_t$ in terms of an approximate posterior by token $f_\theta(x, t)[i, v] \approx \mathbb{P}(x_1^{s_t[i]} = v \mid x_t = x)$ and an approximate number of insertions $g_\theta(x, t)[i] \approx \mathbb{E}_{s_t} \left[ s_t[i] - s_t[i-1]-1 \mid x_t = x \right]$. The target rate matrix can be learned by minimizing the following variational objective. Note that the variational loss objective below is the same as one we defined in equation~\ref{eq:FlexMDM_loss}.

\begin{appprop}[FlexMDM Loss (Restated from Proposition \ref{prop:FlexMDM-loss})] \label{prop:flexmdm-loss-appendix}
The loss function is defined as:
\begin{align}
     L[\hat{R}_t]  &= \int_0^1\mathbb{E}_{x_1, s_t, x_t} \left[
-\frac{\dot{\beta}_t}{1 - \beta_t} \sum_{i=0}^{\mathrm{len}(x_t)-1} \mathbf{1}\{x_t^i = \mask\} \log f_\theta(x_t, t)[i, x_1^{s_t[i]}] \right] \; dt, \\ 
&+   \int_0^1\mathbb{E}_{x_1, s_t, x_t} \left[\frac{\dot{\alpha_t}}{1-\alpha_t}\sum_{i=0}^{|x_1|} \phi(s_t[i] - s_t[i-1] - 1, g_\theta(x_t, t)[i])\right] \; dt,
\end{align}
where $\phi(x, y) = y - x \log {y}$, is uniquely minimized when $\hat{R}_t = R_t$ and is connected by terminal KL-divergence by:
\begin{align}
    \mathcal{D}_\text{KL}(p_1 || \hat{p}_1) \leq L[\hat{R}_t] - L[R_t],
\end{align}
where $\hat{p}_1$ is the approximate data distribution induced by $\hat{R}_t$.
\end{appprop}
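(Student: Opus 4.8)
The plan is to mirror the MDM variational-loss derivation given above for fixed length, now tracking the additional insertion transitions and exploiting the special form of the Bregman term $\phi(x,y)=y-x\log y$. Let $\mathbb{P}$ and $\hat{\mathbb{P}}$ be the path measures of the CTMCs driven by the ground-truth rate $R_t$ of Proposition~\ref{prop:appendix-flexmdm-rate} and by the parameterized rate $\hat{R}_t$ (with $f_\theta$ in place of the unmasking posterior and $g_\theta$ in place of the insertion expectation), both started from the point mass on $\varepsilon$. First I would invoke the Radon--Nikodym formula for CTMC path measures \cite{campbell2024generative}, together with~\eqref{eq:mass-conservation} and the compensation (Dynkin) formula for the jump term, to write
\[
\mathcal{D}_{\mathrm{KL}}(\mathbb{P}\,\|\,\hat{\mathbb{P}})
= \int_0^1 \mathbb{E}_{x_t\sim\mathbb{P}_t}\!\left[\sum_{y\neq x_t}\Bigl(\hat{R}_t(x_t,y)-R_t(x_t,y)\log\hat{R}_t(x_t,y)\Bigr)\right] dt + \mathrm{Const},
\]
where $\mathrm{Const}$ gathers the $\hat{R}_t$-independent pieces $-R_t(x_t,y)+R_t(x_t,y)\log R_t(x_t,y)$.

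Next I would split the neighbor sum $\sum_{y\neq x_t}$ into the two disjoint transition families of~\eqref{eq:rate_FlexMDM}. For an unmasking neighbor $y=\replaceat{x_t}{i}{v}$ the algebra is identical to the MDM case: the common factor $\tfrac{\dot{\beta}_t}{1-\beta_t}$ together with the normalizations $\sum_v f_\theta(x_t,t)[i,v]=1$ and $\sum_v\mathbb{P}(x_1^{s_t[i]}=v\mid x_t)=1$ peel off as constants, leaving the weighted cross-entropy $-\tfrac{\dot{\beta}_t}{1-\beta_t}\sum_{i:x_t^i=\mask}\sum_v\mathbb{P}(x_1^{s_t[i]}=v\mid x_t)\log f_\theta(x_t,t)[i,v]$; the tower property then collapses the inner posterior average onto the realized label $x_1^{s_t[i]}$, reproducing the unmasking term of the loss. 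For an insertion neighbor $y=\insertat{x_t}{i}{\mask}$, writing $c=\tfrac{\dot{\alpha}_t}{1-\alpha_t}$, $a_i=\mathbb{E}[s_t[i]-s_t[i-1]-1\mid x_t]$ and $g_i=g_\theta(x_t,t)[i]$, the summand is $c\,g_i-c\,a_i\log(c\,g_i)=c\,\phi(a_i,g_i)-c\,a_i\log c$, and the last piece is absorbed into $\mathrm{Const}$, so the insertion contribution is $c\sum_{i=0}^{\mathrm{len}(x_t)}\phi(a_i,g_i)$.

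The crux is to match $c\sum_i\phi(a_i,g_i)$ against the insertion term of the loss, which instead feeds $\phi$ the \emph{realized} counts $s_t[i]-s_t[i-1]-1$. Here I would use the defining property of $\phi$: it is affine in its first argument, so $\mathbb{E}[\phi(s_t[i]-s_t[i-1]-1,\,g_i)\mid x_t]=\phi(a_i,g_i)$ exactly, with no leftover constant. Combined with the unmasking identity this yields $\mathcal{D}_{\mathrm{KL}}(\mathbb{P}\,\|\,\hat{\mathbb{P}})=L[\hat{R}_t]+\mathrm{Const}$; evaluating at $\hat{R}_t=R_t$ (where the divergence is $0$) pins $\mathrm{Const}=-L[R_t]$, giving $\mathcal{D}_{\mathrm{KL}}(\mathbb{P}\,\|\,\hat{\mathbb{P}})=L[\hat{R}_t]-L[R_t]$. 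Uniqueness then follows pointwise on the support of $p_t$: the weighted cross-entropy is minimized over the simplex only by $f_\theta(x_t,t)[i,\cdot]=\mathbb{P}(x_1^{s_t[i]}=\cdot\mid x_t)$, and $g\mapsto\phi(a_i,g)=g-a_i\log g$ is strictly convex on $(0,\infty)$ with unique minimizer $g=a_i$ (compatible with the softplus head enforcing $g_\theta>0$), so the joint minimizer is exactly $\hat{R}_t=R_t$. Finally, since $p_1$ and $\hat{p}_1$ are the time-$1$ marginals of $\mathbb{P}$ and $\hat{\mathbb{P}}$---the former correct by Proposition~\ref{prop:appendix-flexmdm-rate}---the data-processing inequality gives $\mathcal{D}_{\mathrm{KL}}(p_1\,\|\,\hat{p}_1)\le\mathcal{D}_{\mathrm{KL}}(\mathbb{P}\,\|\,\hat{\mathbb{P}})=L[\hat{R}_t]-L[R_t]$.

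I expect the main obstacle to be the insertion bookkeeping: verifying that the diagonal mass-conservation contributions combine with the jump terms to produce exactly $c\,\phi(a_i,g_i)$ per gap, and then recognizing that the affine-in-first-argument structure of $\phi$ is not an arbitrary design choice but the Poisson / generalized-KL divergence forced by the path-measure formula---precisely what lets the tower property pass the conditional expectation through without residue. The unmasking half and the data-processing step are routine adaptations of the MDM argument.
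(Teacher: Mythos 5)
Your proposal is correct and follows essentially the same route as the paper's proof: the path-measure KL divergence via the Radon--Nikodym derivative for CTMCs, expansion of the integrand into the unmasking and insertion transition families, and the data-processing inequality for the terminal bound. In fact, you supply details the paper leaves implicit---the splitting of the neighbor sum, the absorption of the $\log c$ terms into the constant, the tower-property/affine-in-first-argument step that converts realized counts into conditional expectations, and the strict-convexity argument for uniqueness---all of which are accurate.
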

\begin{proof}
Let $\mathbb{P}$ and $\mathbb{\hat{P}}$ be the path measure of a continuous time Markov chain starting with the empty string with rate matrix $R_t$ and $\hat{R_t}$, respectively.

Consider the KL-divergence between path measures $\mathbb P$ and $\mathbb{\hat{P}}$,
\begin{align*}
\mathcal{D}_\text{KL}(\mathbb P \mid\mid \mathbb{\hat{P}})  &= \mathbb{E}_{\mathbb{P}} \left[\int_{t=0}^{t=1} R_t(x_t, x_t) - \hat{R_t}(x_t, x_t) \; dt +  \sum_{t : x_t \neq x_{t-}} \log \frac{R_t(x_{t-}, x_t)}{\hat{R_t}(x_{t-}, x_t)}\right] \\ 
&= \int_0^{t=1} \mathbb{E}_{x_t} \left[\sum_{y \neq x_t}\hat{R_t}(x_t, y) - R_t(x_t, y) \log \hat{R_t}(x_t, y)\right] dt+ \text{Const.} \\ 
&=  \int_0^{t=1} \mathbb{E}_{x_t} \left[\sum_{y \neq x_t}\hat{R_t}(x_t, y) - R_t(x_t, y) \log \hat{R_t}(x_t, y)\right] dt+ \text{Const.} \\ 
&= \int_0^1\mathbb{E}_{x_1, s_t, x_t} \left[
-\frac{\dot{\beta}_t}{1 - \beta_t} \sum_{i=0}^{\mathrm{len}(x_t)-1} \mathbf{1}\{x_t^i = \mask\} \log f_\theta(x_t, t)[i, x_1^{s_t[i]}] \right] \; dt \\ 
&\quad\quad+ \int_0^1\mathbb{E}_{x_1, s_t, x_t} \left[\frac{\dot{\alpha_t}}{1-\alpha_t}\sum_{i=0}^{|x_1|} \phi(s_t[i] - s_t[i-1] - 1, g_\theta(x_t, t)[i])\right] \; dt + \text{Const.}
\end{align*}

The terminal KL-bound then follows from the data processing inequality, that is:
\begin{align*}
    \mathcal{D}_\text{KL}(p_1||\hat{p}_1) \leq \mathcal{D}_\text{KL}(\mathbb{P} || \mathbb{\hat{P}})
\end{align*}
\end{proof}
\section{Details for Section~\ref{sec:FlexMDM_inference}}
\label{sec:appendix_theory_inference}

\subsection{Precise detail on the inference algorithms}
In this section, we provide details on the unmasking steps of vanilla and adaptive inference for FlexMDM, summarized in Algorithm~\ref{alg:inference}, Subroutine 2. Suppose at inference time we are given the discretization step size $\tau$, a partially observed sequence $X_{t_k}$, and the current time step $t_{k}$.

\paragraph{Vanilla inference.} 
For each masked position $i$ (i.e., $X_{t_k}^i = \mask$) and each clean token $v \in \Sigma$, we sample unmasking events from a Poisson distribution $\mathrm{Poisson}(R_v \tau)$, where $R_v$ is the unmasking rate toward token $v$. 
Concretely, $R_v = \frac{\dot{\beta}_{t_k}}{1-\beta_{t_k}} \cdot f_\theta(X_{t_k}, t_k)[i,v]$,
so that the event count is distributed as
$k_v \sim \mathrm{Poi}\left(\tau \cdot \tfrac{\dot{\beta}_{t_k}}{1-\beta_{t_k}} \cdot f_\theta(X_{t_k}, t_k)[i,v]\right)$. A masked position is unmasked only if \emph{exactly one} token $v$ produces a count $k_v = 1$ while all others produce zero. This tau-leaping scheme batches all events that occur within the interval $[t_k, t_k+\tau]$.

\paragraph{Adaptive inference.} 
We first draw the number of tokens to unmask, denoted by an integer $K$. While Proposition~\ref{thm:inference} (Theorem~\ref{thm:anyorder}) shows that the choice of $K$ does not affect the theoretical guarantees, in practice, we set $K$ to match the expected number of unmasked tokens under vanilla inference yields stable behavior. Accordingly, we sample
$K \sim \mathrm{Poi}\!\left(\tau \cdot \tfrac{\dot{\beta}_{t_k}}{1-\beta_{t_k}} \cdot \#\{\text{masked tokens in } X_{t_k}\}\right)$.

Next, we compute a confidence score for each masked position, based on heuristics such as:
\begin{itemize}[leftmargin=*,itemsep=0pt,topsep=0pt]
    \item \underline{Top-K probability}~\citep{chang2022maskgit,zheng2024masked}: For state $x$ at time $t$, the confidence at position $i$ is given by $\max_{v \in \Sigma} f_\theta(x, t)[i,v]$.
    \item \underline{Top-K probability with sliding window}: We further restrict sampling to the leftmost $L$ tokens, where
    \[
    L = \min(\lfloor \gamma_1 \cdot L \rfloor, \gamma_2),
    \]
    with $\gamma_1$ and $\gamma_2$ hyperparameters. This approach is related to semi-autoregressive strategies used in \citet{nie2025large}.
\end{itemize}

Finally, we select the subset of positions to unmask as the Top-K masked indices with the highest confidence scores.

\subsection{Proof of FlexMDM's any-order inference capability}
\subsubsection{Proof preliminaries} \label{sec:appendix_unmasking_posterior}
\paragraph{Form of posterior.}
We first compute, for each $x_t^i$, the probabilities of being masked or deleted in equation~\eqref{eqn::FlexMDM_interpolant}. This follows from a straightforward calculation using the joint distribution of $(T_1^i,T_2^i)$:
\begin{align*}
    p(x_t^i=\text{(empty)}) &= p(T_1^i>t) = 1-\alpha_t, \\
    p(x_t^i=\mask) &= p(T_1^i \le t, T_2^i>t) =\int_{t}^1 \int_{0}^t\left( \frac{\dot{\beta}_s}{1-\beta_u} \times \dot{\alpha}_s\right) ds du =\colon 1-\gamma_t  .
\end{align*}
Here we define $\gamma_t$ as $1-p(x_t^i=\mask)$. Therefore, the process in equation~\eqref{eqn::FlexMDM_interpolant} is equivalent to observing a partially masked subsequence $x_t$ obtained by sampling $x_1 \sim p$ and, for each position of $x_1$, independently deleting it with probability $1-\alpha_t$, masking it with probability $1-\gamma_t$, or leaving it unchanged with probability $\alpha_t+\gamma_t-1$. Note that $\alpha_t$ and $\gamma_t$ both increase from $0$ to $1$ as $t$ increases from $0$ to $1$.

The posterior is given by
\begin{align*}
    \MoveEqLeft p(x_1 = x^*\mid x_t = x) \\
    &\propto p(x^*) \cdot p(x_t = x \mid x_1 = x^*) \\
    &= p(x^*) \cdot (1 - \alpha_t)^{\length{x^*} - \length{x}} (1 - \gamma_t)^{\#\mathrm{mask}(x)} (\alpha_t + \gamma_t - 1)^{\#\mathrm{unmask}(x)} \cdot \#\{s: x \subseteq x^*|_s\} \\
    &\propto p(x^*) \cdot (1 - \alpha_t)^{\length{x^*}-\length{x}} \cdot \#\{s: x\subseteq x^*|_s\}\,. \label{eq:posterior}
\end{align*}

Importantly, as in the vanilla MDM setting, the posterior does not depend on the unmasking schedule $(\gamma_t)$ (thus $\beta_t$), which will enable us to perform unmasking in adaptively chosen positions. Note also that if all sequences in the support of $p$ were of the same length, this posterior would also be independent of $(\alpha_t)$; while we do not prove it, in this case this would allow us to choose an arbitrary order of unmaskings and insertions.

\paragraph{Extension of posterior to $t = 1$.}

Motivated by the form of the posterior above, we define the following:
\begin{equation}
    q_t(x^* \mid x) \propto \begin{cases}
        p(x^*) \cdot \mathbf{1}_{x\subseteq x^*} & \text{if} \ t = 1 \\
        p(x^*) \cdot (1 - \alpha_t)^{\length{x^*} - \length{x}}\cdot \#\{s: x\subseteq x^*|_s\} & \text{otherwise} 
    \end{cases}
\end{equation}
Note that for $t < 1$, this is the same as $p(x_1 = x \mid x_t = x)$. We will denote the marginals of $q_t(\cdot \mid x)$ by $q^i_t(\cdot \mid x)$ for $v\in \Sigma$. The reason for extending the definition of the posterior to $t = 1$ is that in an adaptive FlexMDM sampler (see Definition~\ref{def:adaptive}), because we are entirely decoupling unmasking from the schedule of insertions, after the final insertion step the time parameter $t$ may be $1$ even though there are still tokens left to unmask. We will assume oracle access to $q_1(\cdot \mid x)$ as in practice these are simply the any-order marginals for $p$, and furthermore in practice these are already well-approximated by the learned posterior marginals $p(x^i_1\cdot \mid x_{1 - \delta} = x)$ for arbitrarily small $\delta > 0$.

\paragraph{Index-tracking variable.} Recall that in the definition of the joint interpolant we defined an index-tracking variable $s_t$ which essentially tracked which indices of $x_1$ correspond to the tokens in $x_t$. While our analysis below will not use the language of stochastic interpolants, we will still use the idea of tracking $s_t$, with slightly modified notation. Specifically, for any $0 \le t < 1$, we will use the notation $\Pr_{(x_1, s) \mid x_t = x}$ and $\E_{(x_1, s) \mid x_t = x}$ to denote probability and expectation with respect to the distribution given by sampling $x_1$ from $q_t(\cdot \mid x)$, and then sampling $s$ uniformly random from subsets for which $x\subseteq (x_1)|_s$. When we only care about the marginal distribution over $s$, we write $\Pr_{s\mid x_t = x}$ and $\E_{s\mid x_t = x}$. Given such a subset $s$ and $i\in\{0,\ldots,|s|-1\}$, we use $s_i$ to denote its $i$-th element in sorted order; as before, we also define the boundary values $s_{-1} = -1$ and $s_{\mathrm{len}(s)} = \mathrm{len}(x_1)$. The insertion expectations which we had denoted by $\E[s_t[i] - s_t[i-1] - 1]$ in the main body are thus given by $\E_{s\mid x_t = x}[s_i - s_{i-1} - 1]$ in the notation of this section.

\subsection{Formal guarantee for adaptive inference}

\begin{appdefinition}\label{def:adaptive}
    Given query access to the posterior marginals $q^i_t(\cdot \mid x_t = x)$ and to the insertion expectations $\E_{s\mid x_t = x}[s_i - s_{i-1} -1]$, an \emph{adaptive FlexMDM sampler} is any algorithm which produces a sequence of iterates $\hat{x}_{t_1},\ldots,\hat{x}_{t_n}$, where $0 = t_1 < \cdots < t_n = 1$, by starting at $\hat{x}_{t_1} = \varepsilon$ and arbitrarily alternating between steps of the following form:
    \begin{itemize}[leftmargin=*]
        \item \underline{\emph{Any-order} unmasking step}: Starting from $\hat{x}_{t_j}$, if $\mathrm{mask}(\hat{x}_{t_j})$ is nonempty, pick an arbitrary index $i$ therein (possibly probabilistically), sample $v$ from $q^i_{t_j}(\cdot \mid  \hat{x}_{t_j})$, and set $\hat{x}_{t_j} \gets \hat{x}_{t_j}[\hat{x}_{t_j}^i \gets v]$.
        \item \underline{Insertion step}: Starting from $\hat{x}_{t_j}$, run the CTMC with rate matrix
        \begin{equation}
            R^{\rm ins}_t(x,y) = \begin{cases} \E_{s\mid x_t = x}[s_i - s_{i-1} - 1]\cdot \frac{\dot{\alpha}_t}{1 - \alpha_t} & \text{if} \ y = \insertat{x}{i}{\mask} \\
            -\sum^{\length{x}}_{i=0} R^{\rm ins}_t(x,\insertat{x}{i}{\mask}) & \text{if} \ y = x \\
            0 & \text{otherwise}
            \end{cases} \label{eq:insrate}
        \end{equation} for $t_j \le t \le t_{j+1}$ to obtain $\hat{x}_{t_{j+1}}$. If $t_{j+1} = 1$, then apply any-order unmasking until $\mathrm{mask}(\hat{x}_{t_{j+1}})$ is empty, and terminate.
    \end{itemize}
\end{appdefinition}

Note that the rate matrix in the second bullet point above is identical to the one in the main body except that we only consider transitions given by insertions.

Formally, we will show the following:

\begin{appthm}\label{thm:anyorder}
    Any adaptive FlexMDM sampler for $p$ will generate a sequence of iterates $\hat{x}_{t_1},\ldots,\hat{x}_{t_n}$ such that $\hat{x}_{t_n}$ is exactly a sample from $p$.
\end{appthm}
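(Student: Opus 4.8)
The plan is to run the adaptive sampler on an \emph{augmented} state $(\hat x, x_1, s)$, where $\hat x$ is the observed iterate, $x_1$ is a latent clean sequence, and $s$ is an active-index set with $\hat x\subseteq (x_1)|_s$, and to maintain a single loop invariant pinning down the conditional law of the latent given the observed state. Writing $t$ for the current insertion-time and $\mu_t$ for the law of $\hat x$, the invariant I would carry is that $(x_1,s)\mid\hat x$ is distributed as $\Pr_{(x_1,s)\mid x_t=\hat x}$ of the preliminaries, i.e. $x_1\sim q_t(\cdot\mid\hat x)$ with $s$ uniform among embeddings. I would arrange the augmented dynamics so that marginalizing out $(x_1,s)$ reproduces exactly the adaptive FlexMDM sampler of Definition~\ref{def:adaptive}, and I would additionally track the latent marginal $\nu_t(x_1):=\sum_{\hat x}\mu_t(\hat x)\,q_t(x_1\mid\hat x)$. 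The theorem then reduces to two inductive claims over the alternating steps: (i) the invariant is preserved, and (ii) $\nu_t\equiv p$. Given both, termination is immediate.

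For the base case, at $t=0$ the chain starts at $\hat x=\varepsilon$, and since $\alpha_0=0$ the posterior formula collapses to $q_0(\cdot\mid\varepsilon)=p$, yielding the invariant and $\nu_0=p$. For an any-order unmasking step I would prove a Bayes-consistency lemma: drawing $v\sim q^i_t(\cdot\mid\hat x)$ and setting $\hat x'=\replaceat{\hat x}{i}{v}$ amounts to revealing the latent coordinate $x_1^{s_i}$, and the refined conditional is exactly $q_t(\cdot\mid\hat x')$. This is a direct computation with $q_t(x^*\mid x)\propto p(x^*)(1-\alpha_t)^{\length{x^*}-\length{x}}\,\#\{s: x\subseteq x^*|_s\}$: conditioning on the revealed value restricts the embedding count to those $s$ with $x^*|_s[i]=v$, which is precisely the unnormalized weight of $q_t(\cdot\mid\hat x')$. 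Since unmasking does not advance $t$, I would check that it leaves $\nu_t$ invariant (it merely refines the partition of mass), and by construction it reproduces the sampler's draw.

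The crux is the insertion step, run as the CTMC with rate $R^{\rm ins}_t$ of~\eqref{eq:insrate} from $t_j$ to $t_{j+1}$. Here I would lift the rate to the augmented space: inserting a mask between $\hat x^{i-1}$ and $\hat x^i$ corresponds to a previously-deleted latent position $j$ with $s_{i-1}<j<s_i$ becoming active, i.e. the insertion half of the interpolating rate $Q_t^{x_1}$ in Proposition~\ref{prop:flex-mdm-interpolating-rate}. I would then show that marginalizing this augmented insertion rate over the posterior on $(x_1,s)$ returns exactly $R^{\rm ins}_t$, since the insertion expectation $\E_{s\mid x_t=x}[s_i-s_{i-1}-1]$ is precisely the expected number of deleted positions in the gap, and that the augmented chain transports $\mu_{t_j}\cdot q_{t_j}$ to $\mu_{t_{j+1}}\cdot q_{t_{j+1}}$ while holding $\nu$ at $p$. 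Operationally this is a Kolmogorov-forward-equation verification in the spirit of Propositions~\ref{prop:flex-mdm-interpolating-rate} and~\ref{prop:joint-target-rate}, but restricted to insertions and carried out with the masking/revealing pattern of the already-processed positions held \emph{frozen}. The decisive enabler is the schedule-independence established in the preliminaries: both $q_t$ and the insertion expectation depend on $t$ only through $\alpha_t$, so the insertion dynamics are oblivious to however much unmasking has already occurred.

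Finally, at $t_{j+1}=1$ we have $\alpha_1=1$, so no latent position is deleted and $s$ is full; the invariant reads $x_1\mid\hat x\sim q_1(\cdot\mid\hat x)\propto p(\cdot)\mathbf 1_{\hat x\subseteq\cdot}$. After the terminating any-order unmasking clears $\mathrm{mask}(\hat x)$, the constraint $\hat x\subseteq x^*$ with $\hat x$ fully clean forces $x^*=\hat x$, so $q_1(\cdot\mid\hat x)$ is a point mass; then $\nu_1(x_1)=\sum_{\hat x}\mu_1(\hat x)\,\delta_{\hat x}(x_1)=\mu_1(x_1)$, and combined with $\nu_1\equiv p$ this yields $\mu_1=p$, the claim. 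I expect the main obstacle to be precisely the insertion step: verifying that $R^{\rm ins}_t$ correctly advances the conditional law while simultaneously preserving the latent marginal $p$, when the observed state carries an arbitrary, schedule-inconsistent pattern of revealed and masked tokens. This is exactly the interfacing of adaptive unmasking with insertions flagged as subtle; the frozen-pattern bookkeeping and the reduction of the gap-wise insertion expectation to the marginalized rate are where the real work lies, with the schedule-independence of the posterior doing the essential lifting.
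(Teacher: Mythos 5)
Your proposal is correct in outline, but it is organized in the opposite direction from the paper's proof, so it is worth spelling out the relationship. The paper inducts \emph{backward}: it proves the stronger claim that, from any intermediate iterate $\hat{x}_{t_j}$, the remaining steps of the sampler output a sample from $q_{t_j}(\cdot\mid\hat{x}_{t_j})$, via two composition lemmas — Lemma~\ref{lem:unmasking_step} (an unmasking draw followed by sampling from the refined posterior equals sampling from the current posterior; this is exactly your Bayes-consistency computation) and Lemma~\ref{lem:insertion_step} (running the insertion CTMC for duration $h$ and then sampling from $q_{t+h}$ equals sampling from $q_t$), the latter verified through the Kolmogorov \emph{backward} equation \eqref{eq:kbe} on the observed state space, with no latent variables in sight. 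You instead propagate a \emph{forward} invariant: a coupling with a latent $(x_1,s)$ whose conditional law given the iterate is the posterior, together with conservation of the latent marginal $\nu_t\equiv p$, concluding $\mu_1=\nu_1=p$ at termination. Both are sound, and the decisive technical content is shared: the combinatorial fact that each embedding $S$ of $x$ into $x_1$ corresponds to exactly $s_i-s_{i-1}-1$ embeddings of $\insertat{x}{i}{\mask}$ (the key step in the paper's proof of \eqref{eq:mainequality}) is also precisely what makes your ``frozen-pattern'' insertion step marginalize correctly. The trade-off is this: the backward route requires no coupling whatsoever — only a finite-dimensional ODE check — whereas your forward route must justify that the adaptive sampler's law coincides with the observed marginal of your augmented chain. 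Time-marginal agreement can be checked with a forward-equation computation, but equating the sampler's dynamics with the observed process of the coupling strictly speaking calls for a filtering/innovation argument on path space; alternatively you can bypass the coupling entirely by \emph{defining} the joint law at each time as $\mu_t\times q_t(\cdot\mid\cdot)$ and verifying $\partial_t\nu_t=0$ for an arbitrary observed law $\mu_t$ — and that requirement, written out, is algebraically identical to the backward equation \eqref{eq:kbe}. So your formulation is a legitimate dual of the paper's argument, more intuitive in that it makes conservation of the data law explicit throughout, but once made fully rigorous its insertion step collapses to the same verification the paper performs.
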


\subsection{Proof of Theorem~\ref{thm:anyorder}}

To show that adaptive sampling works, we inductively prove an even stronger statement: at any intermediate step of the sampler after it has produced $\hat{x}_{t_j}$, the final output $\hat{x}_{t_n}$ is a sample from $q_{t_j}(\cdot \mid \hat{x}_{t_j})$.

The following two lemmas provide the inductive steps for unmasking and insertion respectively:

\begin{applem}[Inductive step for unmasking]\label{lem:unmasking_step}
    Let $0 \le t \le 1$ and let $x$ be a partially unmasked subsequence of length $m$. Let $\pi = (\pi_i)_{i\in \mathrm{mask}(x)}$ denote any distribution over masked indices of $x$.
    Suppose that one runs the following:
    \begin{enumerate}[leftmargin=*,itemsep=0pt,topsep=0pt]
        \item Sample index $i$ from $\pi$ 
        \item Sample $v$ from the posterior marginal $q^i_t(\cdot \mid x_t = x)$
        \item Sample from $q_t(\cdot \mid x[x^i\gets v])$.
    \end{enumerate}
    The output of this procedure is a sample $x^*$ from $q_t(\cdot \mid x)$.
\end{applem}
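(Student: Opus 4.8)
The plan is to reduce the claim to a single deterministic identity about the posterior and then verify it by direct manipulation of normalizing constants. Since the three-step procedure first draws $i \sim \pi$ while the asserted output law $q_t(\cdot \mid x)$ does not depend on $i$, it suffices to fix an arbitrary masked index $i \in \mathrm{mask}(x)$ and show
\begin{equation*}
\sum_{v \in \Sigma} q^i_t(v \mid x)\, q_t\!\big(x^* \mid \replaceat{x}{i}{v}\big) = q_t(x^* \mid x) \qquad \text{for every clean sequence } x^*,
\end{equation*}
after which averaging against $\pi$ (whose weights sum to one) yields the lemma.

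To prove this identity I would work with the unnormalized weights. Writing $Z_t(x)$ for the normalizer of $q_t(\cdot \mid x)$, so that $q_t(x^* \mid x) = p(x^*)(1-\alpha_t)^{\length{x^*}-\length{x}}\, \#\{s : x \subseteq x^*|_s\}/Z_t(x)$, the first step is to express the posterior marginal as a ratio of normalizers. Unmasking position $i$ to the value $v$ produces $\replaceat{x}{i}{v}$, which has the same length as $x$ and the same masked positions except that $i$ now carries the clean token $v$; consequently
\begin{equation*}
\#\{s : \replaceat{x}{i}{v} \subseteq x^*|_s\} = \#\{s : x \subseteq x^*|_s,\ (x^*)^{s[i]} = v\},
\end{equation*}
while the exponent of $(1-\alpha_t)$ is unchanged. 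Using the definition of the marginal through the uniform-over-$s$ joint $\Pr_{(x_1,s)\mid x_t = x}$ from Section~\ref{sec:appendix_unmasking_posterior}, a short computation then gives $q^i_t(v \mid x) = Z_t(\replaceat{x}{i}{v})/Z_t(x)$, because the factor $\#\{s : x \subseteq x^*|_s\}$ in the denominator of the marginal cancels against the same factor in $q_t(x^* \mid x)$.

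With these two facts the identity telescopes: substituting $q^i_t(v \mid x) = Z_t(\replaceat{x}{i}{v})/Z_t(x)$ cancels the normalizer of $q_t(\cdot \mid \replaceat{x}{i}{v})$, leaving $\sum_v p(x^*)(1-\alpha_t)^{\length{x^*}-\length{x}}\, \#\{s : x \subseteq x^*|_s,\ (x^*)^{s[i]} = v\}$ divided by $Z_t(x)$. Since every embedding $s$ with $x \subseteq x^*|_s$ assigns exactly one clean symbol $(x^*)^{s[i]}$ to position $i$, summing over $v \in \Sigma$ partitions these embeddings and recovers $\#\{s : x \subseteq x^*|_s\}$, so the left-hand side collapses to $q_t(x^* \mid x)$. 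The boundary case $t=1$ is covered by the same argument: no positions are deleted, $\length{x}=\length{x^*}$, the embedding $s$ is unique, and the combinatorial factor degenerates to $\mathbf 1_{x \subseteq x^*}$.

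The main obstacle is the bookkeeping of the index-tracking variable $s$: because $x$ may embed into $x^*$ in several ways, the marginal $q^i_t$ is defined through the uniform draw of $s$, and the crux is the combinatorial identity showing that unmasking position $i$ to $v$ selects exactly the embeddings with $(x^*)^{s[i]} = v$ and that these partition all embeddings as $v$ ranges over $\Sigma$. Everything else is routine cancellation of normalizing constants.
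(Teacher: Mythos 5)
Your proof is correct and takes essentially the same route as the paper's own argument: your identity $q^i_t(v \mid x) = Z_t(\replaceat{x}{i}{v})/Z_t(x)$ is exactly the paper's observation that the numerator of the posterior marginal coincides with the denominator of $q_t(\cdot \mid \replaceat{x}{i}{v})$, and your final step---partitioning the embeddings $\{s : x \subseteq x^*|_s\}$ according to the value $(x^*)^{s[i]}$ as $v$ ranges over $\Sigma$---is the same combinatorial cancellation the paper uses to collapse the sum to $q_t(x^* \mid x)$. The reduction to a fixed index $i$ followed by averaging over $\pi$, and the degenerate treatment of $t=1$, also match the paper, so no gaps remain.
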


\begin{applem}[Inductive step for insertion]\label{lem:insertion_step}
    Let $0 \le t < 1$ and let $x$ be a partially unmasked subsequence of length $m$. Let $0\le h \le 1 - t$ be any duration of time. Suppose that one runs the following:
    \begin{enumerate}[leftmargin=*,itemsep=0pt,topsep=0pt] 
        \item Starting from $x$, run the CTMC with rate matrix
        given by Eq.~\eqref{eq:insrate} for time $h$ to obtain $x'$
        \item Sample from $q_{t+h}(\cdot \mid x')$.
    \end{enumerate}
    The output of this procedure is a sample from the posterior $q_t(\cdot \mid x)$.
\end{applem}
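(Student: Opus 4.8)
The plan is to reduce the lemma to a single infinitesimal consistency identity for the posterior family $\{q_\tau(\cdot\mid\cdot)\}$ under the insertion rate, and then verify that identity by a short combinatorial computation. Write $P^{\rm ins}_{t\to\tau}$ for the transition kernel of the insertion CTMC of Eq.~\eqref{eq:insrate}, so that the law of the procedure's output is $\Phi(t+h)$, where $\Phi(\tau) := \sum_{x'} P^{\rm ins}_{t\to\tau}(x,x')\,q_\tau(x^*\mid x')$ is viewed as a function of the checkpoint time $\tau$ for a fixed target $x^*$. Since $P^{\rm ins}_{t\to t}$ is the identity, $\Phi(t) = q_t(x^*\mid x)$, so it suffices to show $\Phi$ is constant on $[t,t+h]$. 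Differentiating in $\tau$ and substituting the Kolmogorov forward equation $\partial_\tau P^{\rm ins}_{t\to\tau}(x,x') = \sum_y P^{\rm ins}_{t\to\tau}(x,y)R^{\rm ins}_\tau(y,x')$ collapses $\Phi'(\tau)$ into $\sum_y P^{\rm ins}_{t\to\tau}(x,y)\big[\partial_\tau q_\tau(x^*\mid y) + \sum_{x'}R^{\rm ins}_\tau(y,x')q_\tau(x^*\mid x')\big]$, so the whole lemma reduces to the pointwise identity
\[ \partial_\tau q_\tau(x^*\mid y) + \sum_{x'}R^{\rm ins}_\tau(y,x')\,q_\tau(x^*\mid x') = 0 \qquad (\star) \]
for every intermediate state $y$.

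To verify $(\star)$ I would work with the unnormalized posterior weights $w_\tau(x^*\mid y) := p(x^*)(1-\alpha_\tau)^{\length{x^*}-\length{y}}N(y,x^*)$, where $N(y,x^*) = \#\{s: y\subseteq x^*|_s\}$, so that $q_\tau(x^*\mid y) = w_\tau(x^*\mid y)/Z_\tau(y)$ with $Z_\tau(y) = \sum_z w_\tau(z\mid y)$. The crux is a combinatorial identity for insertion: every embedding of $\insertat{y}{i}{\mask}$ into $x^*$ arises from an embedding $s$ of $y$ together with a choice of one of the $s_i - s_{i-1}-1$ free positions of $x^*$ strictly between the images of $y^{i-1}$ and $y^i$, giving $N(\insertat{y}{i}{\mask},x^*) = \sum_{s:\,y\subseteq x^*|_s}(s_i - s_{i-1}-1)$. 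Because the ground-truth insertion expectation is $\E_{s\mid x_\tau=y}[s_i - s_{i-1}-1] = \sum_z q_\tau(z\mid y)\,N(\insertat{y}{i}{\mask},z)/N(y,z)$, this identity lets me rewrite the insertion rate as a clean ratio of normalizers, $R^{\rm ins}_\tau(y,\insertat{y}{i}{\mask}) = \dot\alpha_\tau\,Z_\tau(\insertat{y}{i}{\mask})/Z_\tau(y)$, using the observation $w_\tau(z\mid y)\,N(\insertat{y}{i}{\mask},z)/N(y,z) = (1-\alpha_\tau)\,w_\tau(z\mid\insertat{y}{i}{\mask})$.

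With the rate in this form, $(\star)$ becomes a direct calculation. Writing the generator action as $\sum_{x'}R^{\rm ins}_\tau(y,x')q_\tau(x^*\mid x') = \sum_{i=0}^{\length{y}}R^{\rm ins}_\tau(y,\insertat{y}{i}{\mask})\,[q_\tau(x^*\mid\insertat{y}{i}{\mask}) - q_\tau(x^*\mid y)]$ and summing over $i$, the key simplification is the telescoping sum $\sum_{i=0}^{\length{y}}(s_i - s_{i-1}-1) = \length{x^*} - \length{y}$ (using the boundary values $s_{-1}=-1$ and $s_{\length{y}}=\length{x^*}$), which yields $\sum_i N(\insertat{y}{i}{\mask},x^*) = (\length{x^*}-\length{y})N(y,x^*)$. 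Both the generator term and $\partial_\tau q_\tau(x^*\mid y)$ then reduce to $\tfrac{\dot\alpha_\tau}{1-\alpha_\tau}q_\tau(x^*\mid y)\big[(\length{x^*}-\length{y}) - \E_{z\sim q_\tau(\cdot\mid y)}[\length{z}-\length{y}]\big]$ with opposite signs, establishing $(\star)$. I expect the main obstacle to be less the algebra than the boundary case $t+h=1$: for $\tau<1$ the posterior formula and the computation above are valid, but at $\tau=1$ the insertion rate diverges and $q_1$ is the separately-extended ``any-order'' posterior rather than $\lim_{\tau\to1^-}q_\tau$, so closing the argument there needs a continuity/limiting step showing $\Phi$ extends to $\tau=1$ against the extended $q_1$. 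I would handle this by proving constancy of $\Phi$ on $[t,1)$ first and then passing to the limit, with the residual masks resolved by the companion any-order unmasking step, Lemma~\ref{lem:unmasking_step}.
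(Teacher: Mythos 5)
Your proposal is correct and takes essentially the same route as the paper: your pointwise identity $(\star)$ is precisely the Kolmogorov backward equation \eqref{eq:kbe} that the paper verifies, and your key combinatorial step—counting embeddings of $\insertat{y}{i}{\mask}$ into $x^*$ via the $s_i - s_{i-1} - 1$ free slots, together with the telescoping sum $\sum_i (s_i - s_{i-1} - 1) = \length{x^*} - \length{y}$—is exactly the paper's central argument for Eq.~(\ref{eq:mainequality}). Your additions (the explicit constancy-of-$\Phi$ reduction explaining why the backward equation suffices, the normalizer repackaging $R^{\rm ins}_\tau(y,\insertat{y}{i}{\mask}) = \dot\alpha_\tau\, Z_\tau(\insertat{y}{i}{\mask})/Z_\tau(y)$, and the care about the $\tau \to 1$ endpoint, which the paper silently glosses over) are refinements of the same proof rather than a different approach; the only slight inaccuracy is that the extended posterior $q_1$ actually \emph{is} $\lim_{\tau\to 1^-} q_\tau$ whenever that limit is well-normalized, so your limiting step is easier than you anticipate.
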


\noindent We defer the proofs of these to Sections~\ref{sec:unmasking_step} and~\ref{sec:insertion_step} below. The idea for the former is identical to the proof of the folklore fact that vanilla MDMs can sample in any order~\citep{kim2025train}. The proof for the latter is more involved and involves explicitly verifying that the Kolmogorov \emph{backward} equation is satisfied by the rate matrix we have constructed.

Here we verify that these Lemmas are enough to establish Theorem~\ref{thm:anyorder}.

\begin{proof}[Proof of Theorem~\ref{thm:anyorder}]
    We show more generally that starting from any intermediate time step $\hat{x}_{t_j}$ (not just $j = 1$), any adaptive FlexMDM sampler outputs a sample from $q_{t_j}(\cdot \mid \hat{x}_{t_j})$. We do this by inducting on the total number of insertion steps that remain.
    
    As the base case for the induction, if no more insertion steps remain, then we must have $t_j = 1$. In this case, we can further induct on the number of unmasking steps and apply Lemma~\ref{lem:unmasking_step} with $t$ therein set to $1$ to conclude that the final output is a sample from $q_1(\cdot \mid \hat{x}_{t_j})$.

    For the inductive step, we have $t_j < 1$ and suppose we have shown that for any FlexMDM sampler that makes at most $m$ insertion steps, starting from any $\hat{x}_{t_j}$ at intermediate time $t_j$, it samples from $q_{t_j}(\cdot\mid \hat{x}_{t_j})$. Now consider a FlexMDM sampler that makes at most $m + 1$ insertion steps starting from $\hat{x}_{t_j}$ at intermediate time $t_j$. If in the next step it performs an insertion step, i.e. it runs the CTMC with rate matrix defined above for total time $h = t_{j+1} - t_j$, then by Lemma~\ref{lem:insertion_step} and the inductive hypothesis, it samples from $q_{t_j}(\cdot \mid \hat{x}_{t_j})$. Alternatively, suppose the sampler performs some sequence of $\ell$ unmasking steps before performing an insertion step. Then by further inducting on $\ell$, we conclude by Lemma~\ref{lem:unmasking_step} that the sampler eventually outputs a sample from $q_{t_j}(\cdot \mid \hat{x}_{t_j})$.

    Finally, the theorem follows from the special case where $t_j = 0$ and $\hat{x}_{t_j} = \varepsilon$.
\end{proof}

\subsection{Proof of Lemma~\ref{lem:unmasking_step}}
\label{sec:unmasking_step}

\begin{proof}
    Fix any index $i \in \mathrm{mask}(x)$. The marginal $q^i_t(\cdot\mid x)$ is given by
    \begin{equation}
    \Pr_{(x_1,S)\mid x_t=x}[(x_1)|_{s_i} = v] = \frac{\sum_{x_1, S: (x_1)|_{s_i} = v} p(x) \cdot (1 - \alpha_t)^{\length{x_1} - \length{x}}\cdot \#\{S: x\subseteq (x_1)|_S\}}{\sum_{x_1} p(x_1) (1 - \alpha_t)^{\length{x_1} - \length{x}} \cdot \#\{S: x\subseteq (x_1)|_S\}}\,. \label{eq:marg}
    \end{equation}
    The posterior $q_t(\cdot \mid x[x^i\gets v])$ is given by
    \begin{equation}
        q_t(x^*\mid x[x^i\leftarrow v]) = \frac{p(x^*) \cdot (1 - \alpha_t)^{\length{x^*} - \length{x}}\cdot \#\{S: x[x^i\leftarrow v]\subseteq x^*|_S\}}{\sum_{x_1} p(x_1) \cdot (1 - \alpha_t)^{\length{x_1} - \length{x}}\cdot \#\{S: x[x^i\leftarrow v]\subseteq (x_1)|_S\}}\,. \label{eq:posteriorunmask}
    \end{equation}
    Note that the numerator of Eq.~\eqref{eq:marg} and the denominator of Eq.~\eqref{eq:posteriorunmask} are the same. So conditioned on unmasking index $i$, the above procedure outputs $x^*$ with probability
    \begin{align*}
        \MoveEqLeft \sum_z \Pr_{(x_1,S)\mid x_t = x}[(x_1)|_{s_i} = v]\cdot q_t(x^* \mid x[x^i\leftarrow v]) \\
        &= \frac{p(x^*) \cdot (1 - \alpha_t)^{\length{x^*} - \length{x}} \cdot \sum_z \#\{S: x[x^i\gets v]\subseteq x^*|_S\}}{\sum_{x_1} p(x_1) (1 - \alpha_t)^{\length{x_1} - \length{x}} \cdot \#\{S: x\subseteq (x_1)|_S\}} \\
        &= q_t(x^*\mid x)\,.
    \end{align*}
    This holds conditioned on unmasking any $i \in\mathrm{mask}(x)$, so regardless of the choice of $\pi$ over such positions, we will sample from the correct distribution $q_t(\cdot\mid x)$.
\end{proof}

\subsection{Proof of Lemma~\ref{lem:insertion_step}}
\label{sec:insertion_step}

\begin{proof}
    It suffices to show that the rate matrix satisfies the Kolmogorov \emph{backward} equation
    \begin{equation}
        \partial_t q_t(x^*\mid x) = -\sum^{\length{x}}_{i=0} R^{\rm ins}_t(x,\insertat{x}{i}{\mask}) q_t(x^*\mid \insertat{x}{i}{\mask}) - R^{\rm ins}_t(x,x) q_t(x^*\mid x)\,. \label{eq:kbe}
    \end{equation}
    First note that the rate $R^{\rm ins}_t(x, \insertat{x}{i}{\mask})$ can be expressed as 
    \begin{equation}
        \frac{\sum_{x_1} p(x_1) \cdot (1 - \alpha_t)^{\length{x_1} - \length{x}}\cdot \sum_{S: x\subseteq (x_1)|_S} (s_i - s_{i-1} - 1)}{\sum_{x_1} p(x_1) \cdot (1 - \alpha_t)^{\length{x_1} - \length{x}} \cdot \#\{S: x\subseteq (x_1)|_S\}}\cdot \frac{\dot{\alpha}_t}{1 - \alpha_t}\,.
    \end{equation}
    Furthermore, $\sum_i (s_i - s_{i-1} - 1) = \length{x_1} - \length{x}$, so
    \begin{multline}
        \sum_i R^{\rm ins}_t(x,\insertat{x}{i}{\mask}) \\
        = \frac{\sum_{x_1} p(x_1) \cdot (1 - \alpha_t)^{\length{x_1} - \length{x}}\cdot\#\{S: x\subseteq (x_1)|_S\}\cdot (\length{x_1} - \length{x})}{\sum_{x_1} p(x_1) \cdot (1 - \alpha_t)^{\length{x_1} - \length{x}} \cdot \#\{S: x\subseteq (x_1)|_S\}}\cdot \frac{\dot{\alpha}_t}{1 - \alpha_t}\,. \label{eq:sumrates}
    \end{multline}
    Let us compute $\partial_t q_t(x^*\mid x)$:
    \begin{align}
        \MoveEqLeft -\frac{p(x^*)\cdot (1 - \alpha_t)^{\length{x^*} - \length{x}}\cdot \#\{S: x\subseteq x^*|_S\}\cdot (\length{x^*}-\length{x})}{\sum_{x_1} p(x_1)\cdot (1 - \alpha_t)^{\length{x_1} - \length{x}}\cdot \#\{S: x\subseteq (x_1)|_S\}}\cdot \frac{\dot{\alpha}_t}{1 - \alpha_t} \nonumber \\ 
        \MoveEqLeft + \Bigl[\frac{\sum_{x_1} p(x_1) \cdot (1 - \alpha_t)^{\length{x_1}-\length{x}} \cdot \#\{S: x \subseteq (x_1)|_S\}\cdot (\length{x_1}-\length{x})}{\sum_{x_1} p(x_1) \cdot (1 - \alpha_t)^{\length{x_1}-\length{x}} \cdot \#\{S: x \subseteq (x_1)|_S\}}\cdot \frac{\dot{\alpha}_t}{1 - \alpha_t} \nonumber \\
        &\qquad\times \frac{p(x^*) \cdot (1 - \alpha_t)^{\length{x^*} - \length{x}} \cdot \#\{S: x \subseteq x^*|_S\}}{\sum_{x_1} p(x_1) \cdot (1 - \alpha_t)^{\length{x_1}-\length{x}} \cdot \#\{S: x \subseteq (x_1)|_S\}} \Bigr]\label{eq:partialtp}
    \end{align}
    Note that by Eq.~\eqref{eq:sumrates}, the term in the parentheses in Eq.~\eqref{eq:partialtp} is exactly 
    \begin{equation}
        \sum^{\length{x}}_{i=0} R^{\rm ins}_t(x, \insertat{x}{i}{\mask}) q_t(x^* \mid x) = -R^{\rm ins}_t(x, x) q_t(x^*\mid x)\,,
    \end{equation}
    It remains to verify that the first term in Eq.~\eqref{eq:partialtp} is equal to $-\sum_i R^{\rm ins}_t(x,\insertat{x}{i}{\mask}) q_t(x^* \mid \insertat{x}{i}{\mask})$. To that end, we must show that
    \begin{multline}
        \sum^{\length{x}}_{i=0} \frac{\sum_{x_1} p(x_1) (1 - \alpha_t)^{\length{x_1} - \length{x}} \cdot \sum_{S: x\subseteq (x_1)|_S} (s_i - s_{i-1} - 1)}{\sum_{x_1} p(x_1) (1 - \alpha_t)^{\length{x_1} - \length{x}} \cdot \#\{S: \insertat{x_t}{i}{\mask}\subseteq (x_1)|_S\}}\cdot \#\{S: \insertat{x}{i}{\mask}\subseteq x^*|_S\} \\
        =  \#\{S: x\subseteq (x^*)|_S\} \cdot (\length{x^*} - \length{x}) \label{eq:mainequality}
    \end{multline}
    The key combinatorial step is as follows. For any $x_1$ in the support of $p$, consider a subset $S$ for which $x \subseteq (x_1)|_S$. Note that for every such $S$, we can uniquely associate exactly $s_i - s_{i-1} - 1$ different subsets $S'$ of size $|S| + 1$ for which $\insertat{x}{i}{\mask} \subseteq (x_1)|_{S'}$. Therefore, $\sum_{S: x\subseteq (x_1)|_S} (s_i - s_{i-1} - 1) = \#\{S: \insertat{x}{i}{\mask}\subseteq (x_1)|_S\}$, and the left-hand side of Eq.~\eqref{eq:mainequality} thus becomes
    \begin{equation}
        \sum^{\length{x}}_{i=0} \#\{S: \insertat{x}{i}{\mask} \subseteq x^*|_S\} = \sum^{\length{x}}_{i=0} \sum_{S: x\subseteq x^*|_S} (s_i - s_{i-1} - 1) = \sum_{S: x\subseteq (x_1)|_S} (\length{x^*} - \length{x_t})\,,
    \end{equation}
    which completes the proof of Eq.~\eqref{eq:mainequality}.
\end{proof}
\section{Experimental details}
\label{sec:appendix_exp_detail}
\subsection{Pretraining on OpenWebText}
\paragraph{Dataset preparation.}
As mentioned in Section~\ref{sec:experiment_pretrain}, to obtain a variable-length dataset, we split OpenWebText articles paragraph-wise using the GPT-2 tokenizer~\cite{radford2019rewon}. This can be implemented by locating the token index corresponding to the delimiter $\verb|\n |\verb|\n|$. Sequences longer than $1024$ tokens are then chunked recursively by splitting by the delimiter closest to the middle sequence, yielding a variable-length dataset with maximum sequence length $1024$.

\paragraph{FlexAttention.}
To handle variable-length sequences during training, we pad each batch to the maximum sequence length. In MDM, by design, pad tokens also enter the model input. In contrast, FlexMDM is designed to receive only clean or mask tokens as inputs. Ideally, QKV attention should not attend to pad tokens; however, current FlashAttention~\citep{dao2022flashattention} does not support this for \emph{non-causal} attention (our setup of interest). We therefore adapt FlexAttention~\citep{dong2024flex}. A side benefit is improved training speed, since FlexMDM performs attention over fewer tokens than MDM’s full-sequence attention. Note that in the LLaDA experiment, we did not implement this optimization; pad tokens can therefore attend to other tokens, though we expect the impact to be negligible.

\paragraph{Training configuration.}
As in Section~\ref{sec:experiment_pretrain}, we model FlexMDM with a DiT~\cite{peebles2023scalable} backbone and embed the insertion schedule $\alpha_t$. For MDM, we use the same DiT configuration with time step embedding but without the softplus scalar head. Transformer configuration is: \texttt{hidden size:768}, \texttt{heads:12}, \texttt{conditional dimension:128}, \texttt{dropout:0.05}, \texttt{layers:12}.
We train both models on $16$ H100 GPUs with a global batch size of $1024$ and max training iteration $1M$. We use the AdamW~\citep{loshchilov2017decoupled} optimizer with weight decay $0.03$, learning rate $3\mathrm{e}{-4}$, $2000$ warmup steps, and an EMA factor of $0.9999$. Additionally, we use low-discrepancy time-step sampling to reduce variance: one $t$ is drawn uniformly from each interval ${[i/T,(i+1)/T]}_{i=0}^{T-1}$, as in prior MDM training~\citep{shi2024simplified}.

\paragraph{Metric.}
For evaluation, we take sequences generated by both models and retain the clean tokens by removing padding (e.g., the leading pad token). We adopt LLaMA-2-7B~\citep{touvron2023llama2} as the reference model to compute likelihoods. We notice that the pretrained MDM generates short sentences with unreasonably large (worse) perplexities.  Therefore, we filter overly short sequences with $\le$ 10 tokens when calculating average perplexity.

\subsection{Pretraining on the Maze planning task}
\label{app:maze}
\paragraph{Task construction.}
We generate mazes with a fully random, recursive division procedure (the code is provided in Listing~\ref{lst:example}), on a $41\times 41$ grid, with some invalid cells. As described in Section~\ref{sec:experiment_pretrain}, we consider a subgoal-conditioned planning task: the model is given a sequence of subgoals $(g_1,\dots,g_K)$ and must produce a valid path that connects them in order. To construct training examples for a given $K$, we sample $15000$ start–goal pairs ($g_1,g_K$), compute the shortest path for each pair via breadth-first search, and then add controlled detours to obtain up to $10$ distinct valid paths per pair. Subgoals are formed by selecting $K-2$ intermediate cells uniformly at random along a chosen path (start and goal are already fixed). We use $95\%$ of the pairs for training and hold out $5\%$ for validation to evaluate generalization to unseen pairs and subgoal sets.

\paragraph{Training data construction.}
For MDM, the training sequence is $((g_1,\dots,g_K)\;\texttt{[SEP]}\;\texttt{Path})$,
where \texttt{[SEP]} is a special separator and \texttt{Path} denotes the tokenized path. During training, the prompt $(g_1,\dots,g_K)$ bypasses the interpolant so that, at inference time, the model can condition on $(g_1,\dots,g_K)\;\texttt{[SEP]}$ and generate the path. For FlexMDM, we use an interpolant in which the subgoal tokens are exempt from the process in~\eqref{eqn::FlexMDM_interpolant}; that is, tokens corresponding to each $g_i$ are kept clean at all times. This also enables generation to start from $(g_1,\dots,g_K)$. Although this conditional generation template changes the base distributions $p_0$ for both MDM and FlexMDM, we note that the theoretical guarantee from Section~\ref{sec:FlexMDM_informal} and Section~\ref{sec:FlexMDM_inference} still holds--once the training is perfect (under the access to the ground truth unmasking posterior and insertion expectation), the inference algorithms recover the ground truth distribution $p_1$.

\paragraph{Training configuration.}
We use the same architectural design as in the OpenWebText pretraining, but with a smaller model:
\texttt{hidden size:256}, \texttt{heads:8}, \texttt{conditional dimension:128}, \texttt{dropout:0.1}, \texttt{layers:8}.
Both models have approximately $8.5$M parameters. We train them on $4$ A100 GPUs with a global batch size of $256$ for up to $100$ epochs. We use AdamW~\citep{loshchilov2017decoupled} with weight decay $0.01$, learning rate $1\times 10^{-4}$, $20$ warmup epochs, and an EMA factor of $0.9999$.

\paragraph{Metric.}
Given the final conditionally generated sequence, we report the success rate under two criteria: (1) all visited cells are valid (no collisions with invalid cells), and (2) the path connects the subgoals consecutively in order. We perform inference both models with the number of sampling steps $256$.

\lstset{
  language=Python,
  basicstyle=\ttfamily\small,
  keywordstyle=\color{xblue},
  stringstyle=\color{xxgreen},
  commentstyle=\color{gray},
  showstringspaces=false,
  breaklines=true
}
\begin{center}
\begin{lstlisting}[caption={Code for the maze Construction}, label={lst:example}]
# ----------------------------------------------------------------------
#  RECURSIVE DIVISION (perfect maze) -----------------------------------
# ----------------------------------------------------------------------
def _divide(g, top, left, h, w):
    if h <= 2 or w <= 2:
        return
    horizontal = w < h  # split the longer dimension
    if horizontal:
        y = random.randrange(top + 1, top + h - 1, 2)      
        gap = random.randrange(left, left + w, 2)          
        g[y, left:left + w] = 1
        g[y, gap] = 0                                      
        _divide(g, top, left, y - top, w)     
        _divide(g, y + 1, left, top + h - y - 1, w)    
    else:
        x = random.randrange(left + 1, left + w - 1, 2)   
        gap = random.randrange(top, top + h, 2)
        g[top:top + h, x] = 1
        g[gap, x] = 0
        _divide(g, top, left, h, x - left) 
        _divide(g, top, x + 1, h, left + w - x - 1)


# ----------------------------------------------------------------------
#  WRAPPER with extra doorways ----------------------
# ----------------------------------------------------------------------
def division_maze(m, n, seed=2025, extra_door_frac=0.5):
    """
    m, n             # size in *cells*      (not bitmap coords)
    seed             # int or None
    extra_door_frac  # 0, perfect maze; >0 flicks more doors (loops)
    """
    random.seed(seed)
    H, W = 2 * m + 1, 2 * n + 1
    g = np.zeros((H, W), dtype=np.uint8)
    g[0, :], g[H - 1, :], g[:, 0], g[:, W - 1] = 1, 1, 1, 1

    _divide(g, 1, 1, H - 2, W - 2)

    # ---------- optional imperfection: punch extra doorways ------------
    if extra_door_frac > 0:
        candidates = []
        for r in range(1, H - 1):
            for c in range(1, W - 1):
                if g[r, c] == 1:
                    # Check if wall separates two passages orthogonally
                    if g[r - 1, c] == 0 and g[r + 1, c] == 0:
                        candidates.append((r, c))
                    elif g[r, c - 1] == 0 and g[r, c + 1] == 0:
                        candidates.append((r, c))
        k = int(len(candidates) * extra_door_frac)
        for (r, c) in random.sample(candidates, k=k):
            g[r, c] = 0
    return g
\end{lstlisting}
\end{center}

\subsection{Weight Initialization training from LLaDA}
In this section, we describe the procedure for adapting the pretrained LLaDA-8B base model into the FlexMDM.

\paragraph{Training configuration.}
LLaDA uses a bidirectional Transformer without an additional time embedding, leveraging the fact that MDM does not require an explicit time signal \citep{zheng2024masked}. For FlexMDM, to model the insertion expectation, we inject a time-embedding pathway via AdaLN \citep{peebles2023scalable}. For parameter efficiency, we tie (share) the four AdaLN parameter sets across the intermediate Transformer layers. We also attach a softplus scalar head to parameterize the insertion expectation.

Next, we attach LoRA adapters \citep{hu2022lora} to every attention projection (q\_proj, k\_proj, v\_proj) and to the unmasking-posterior head. We include LoRA on the unmasking posterior head because the unmasking posteriors differ between MDM and FlexMDM: in FlexMDM, unmasking must account for token shifts induced by insertions. This fine-tuning of the last head is crucial for enabling variable-length generation.

The LoRA configuration that we use is $r=128$, $\alpha=128$, and dropout $0.1$. Training runs for $200,000$ gradient steps with a batch size of $64$ on $16$ H100s, which took approximately 3 days. We optimize with AdamW \citep{loshchilov2017decoupled} at learning rate $1\times10^{-4}$ and weight decay $0.1$, using a cosine warm-restarts scheduler.

\paragraph{Evaluation on GSM8K.}
We instruction-fine-tune (IFT) the FlexMDM base model on the GSM8K training split. To preserve the instruction–answer format, we modify the interpolant in Eq.~\ref{eqn::FlexMDM_interpolant} so that tokens corresponding to the instruction are excluded from the interpolant—these tokens remain fixed for all time steps. We apply the same strategy to obtain the baseline (that is, IFT from LLaDA-base), modifying the MDM interpolant so that instruction tokens remain fixed. This choice is equivalent to the IFT recipe used in \cite{nie2025large,dream2025}. Both models are IFT-ed for $1000$ epochs. (Other IFT hyperparameters match those used in our base setup unless otherwise noted.)

For FlexMDM inference, we start from the instruction prompt at $t=0$ and run adaptive inference to $t=1$. Concretely, we use Top-K probability with a sliding window (Appendix~\ref{sec:appendix_theory_inference}) with $\gamma_1=5.0$ and $\gamma_2=64$. For LLaDA, we report the best result under the semi-autoregressive, confidence-based sampling of \citep{nie2025large}. For both models, we set the token sampling temperature to $0.0$, which we confirm to be important for strong Pass@1. Overall, adaptive inference substantially improves performance over vanilla inference.

\paragraph{Evaluation on HumanEval-infill.}
Code infilling conditions on a prefix and suffix, and asks the model to complete the middle part of the code. For FlexMDM, we format training examples as
$(\texttt{Prefix};\texttt{[SEP]};\texttt{[Middle]};\texttt{[SEP]};\texttt{Suffix})$,
where \texttt{[SEP]} is a separator token and \texttt{Instruction} describes the infilling task for a model. We modify Eq.~\eqref{eqn::FlexMDM_interpolant} so that tokens for \texttt{Prefix}, \texttt{Suffix}, and \texttt{[SEP]} are fixed in the interpolant (i.e., excluded from the interpolant). Thus, at $t=0$ the state is
$(\texttt{Prefix};\texttt{[SEP]};\texttt{[SEP]};\texttt{Suffix})$.

For MDM, we use the format
$(\texttt{Instruction};\texttt{[PRE]};\texttt{Prefix};\texttt{[SUF ]};\texttt{Suffix};\texttt{[SEP]};\texttt{Middle})$,
with \texttt{Instruction} prompting infill after prefix and suffix
, along with \texttt{[PRE]} and \texttt{[SUF]} separate the prefix and suffix, respectively. Here too, the tokens without \texttt{Middle} are held fixed by the modified interpolant. This difference in formatting reflects the fixed-length nature of MDMf MDM (no token insertion). This formatting has been used in the code infilling tasks for autoregressive models in \cite{bavarian2022efficient}. Naively masking the \texttt{Middle} span yields a base state at $t=0$ of $(\texttt{Prefix};\texttt{Masked};\texttt{Suffix})$, where \texttt{Masked} is a fully masked sequence of length $|\texttt{Middle}|$. This leaks length information—materially simplifying the task— and renders comparisons to FlexMDM unfair, since FlexMDM does not observe the target span length. For fair evaluation, we therefore avoid length-revealing masks and require methods to infer the span length during inference.

We IFT both models on the educational-instruct split of \texttt{opc-sft-stage2}; the architecture and optimization configurations match those used for GSM8K IFT. At evaluation, we initialize from the base distributions: for FlexMDM,
$(\texttt{Instruction};\texttt{Prefix};\texttt{[SEP]};\texttt{[SEP]};\texttt{Suffix})$;
for MDM,
$(\texttt{Instruction};\texttt{Prefix};\texttt{Suffix};\texttt{[SEP]})$.
We use the same Top-K adaptive inference for both and temperature $0.0$. Final outputs are scored with the HumanEval-infill verifier toolkit to compute success rates.
\end{document}